\def\externalize{1}
\newcommand{\keyword}[1]{\textit{#1}}
  \newcommand{\revised}[1]{\colortext{OrangeRed}{#1}}
  \newcommand{\revised}[1]{{#1}}
\newcommand{\colortext}[2]{{\color{#1} #2}}
\newcommand{\red}[1]{\colortext{red}{#1}}
\newcommand{\BrickRed}[1]{\colortext{BrickRed}{#1}}
\newcommand{\blue}[1]{\colortext{blue}{#1}}
\newcommand{\green}[1]{\colortext{ForestGreen}{#1}}
  \newcommand{\zayd}[1]{\red{ZH: #1}}
  \newcommand{\daniel}[1]{\blue{DL: #1}}
  \newcommand{\newmod}[1]{\colortext{red}{#1}}
  \newcommand{\zayd}[1]{}
  \newcommand{\daniel}[1]{}
  \newcommand{\newmod}[1]{#1}
\DeclareMathOperator{\sign}{sgn}
\newcommand{\sgnp}[1]{{\sign \left( #1 \right) }}
\newcommand\swapifbranches[3]{#1{#3}{#2}}
\patchcmd{\DeclarePairedDelimiter}{\@ifstar}{\swapifbranches\@ifstar}{}{}
\DeclarePairedDelimiter{\sbrack}{\lbrack}{\rbrack}
\DeclarePairedDelimiter{\floor}{\lfloor}{\rfloor}
\DeclarePairedDelimiter{\ceil}{\lceil}{\rceil}
\DeclarePairedDelimiter{\abs}{\lvert}{\rvert}
\DeclarePairedDelimiterX\set[1]\lbrace\rbrace{#1}
\newcommand{\setDynamic}[1]{{\left\{ #1 \right\}}}
\DeclarePairedDelimiterX\setbuild[2]\lbrace\rbrace{#1 \bm: #2}
\newcommand{\setbuildDynamic}[2]{\left\{#1 \bm: #2\right\}}
\newcommand{\setint}[1]{{\sbrack{#1}}}
\newcommand{\powerSet}[1]{2^{#1}}
\newcommand{\powerSetInt}[1]{\powerSet{\setint{#1}}}
\newcommand{\func}[3]{{#1:#2\rightarrow#3}}
\newcommand{\defeq}{\coloneqq}
\newcommand{\ind}[1]{{\mathbbm{1} \sbrack{#1}}}
\newcommand{\indSize}[2]{{\mathbbm{1} #2[ #1 #2]}}
\newcommand{\eqsmall}[1]{{\small #1}}
\newcommand{\intsPos}{\mathbb{Z}_{>0}}
\newcommand{\intsNN}{\mathbb{Z}_{+}}
\newcommand{\nats}{\mathbb{N}}
\newcommand{\real}{\mathbb{R}}
\newcommand{\bigO}[1]{{\mathcal{O}(#1)}}
\newtheorem{theorem}{Theorem}
\newtheorem{corollary}[theorem]{Corollary}  %
\newtheorem{definition}[theorem]{Def.}
\newtheorem{lemma}[theorem]{Lemma}
\newtheorem*{remark*}{Remark}
\newcommand{\defKeyword}[1]{{\textbf{#1}}}
\newenvironment{proofsketch}
 {\proof[Proof sketch]}
 {\endproof}
\pgfplotsset{compat=1.13}
\newcommand{\ifempty}[3]{%
  \ifthenelse{\isempty{#1}}{#2}{#3}%
}
\newcolumntype{H}{>{\setbox0=\hbox\bgroup}c<{\egroup}@{}}
\newcommand{\citepos}[1]{\citeauthor{#1}'s \citep{#1}}
\newcommand{\eqcomment}[1]{\triangleright~#1}
\newcommand{\predSym}{q}
\newcommand{\certBound}{R}
\newcommand{\greedyBound}{G}
\newcommand{\swapBound}{\Delta}
\newcommand{\threshold}{\xi}
\newcommand{\lowSub}[1]{#1_{\textnormal{l}}}
\newcommand{\lThreshold}{\lowSub{\threshold}}
\newcommand{\upperSub}[1]{#1_{\textnormal{u}}}
\newcommand{\uThreshold}{\upperSub{\threshold}}
\newcommand{\harmonic}[1]{{H(#1)}}
\newcommand{\median}{\textnormal{med}}
\newcommand{\medFunc}[1]{{\median \,#1}}
\newcommand{\bigOn}{$\bigO{\nTr}$}
\newcommand{\bigOnPlT}{$\bigO{\nTr + \nModel}$}
\newcommand{\bigOT}{$\bigO{\nModel}$}
\newcommand{\setScalarSym}{\nu}
\newcommand{\setScalarI}[1][\modIdx]{\modSub[#1]{\setScalarSym}}
\newcommand{\setScalarOne}{\setScalarI[1]}
\newcommand{\setScalarFin}{\setScalarI[\nModel]}
\newcommand{\setVals}{\mathcal{V}}
\newcommand{\setValsMod}{\widetilde{\setVals}}
\newcommand{\setValsZO}{\setVals_{{\pm}1}}
\newcommand{\lowerSetVals}{\lowSub{\setVals}}
\newcommand{\nLower}{\abs{\lowerSetVals}}
\newcommand{\upperSetValsTransform}[1]{\widetilde{#1}}
\newcommand{\upperSetValsPerturb}{\upperSub{\upperSetValsTransform{\setVals}}}
\newcommand{\upperSetVals}{\upperSub{\setVals}}
\newcommand{\covSetSym}{\mathcal{R}}
\newcommand{\lowerCovVals}{\lowSub{\covSetSym}}
\newcommand{\smallestCovValsMinusOne}{\covSetSym_{\!\swapBound}}
\newcommand{\smallestCovVals}{\lowSub{\widetilde{\covSetSym}}}
\newcommand{\trStr}{\text{tr}}
\newcommand{\teStr}{\text{te}}
\newcommand{\nTr}{n}
\newcommand{\trainSet}{\mathcal{S}}
\newcommand{\trainSetPerturb}{\widetilde{\trainSet}}
\newcommand{\blockIdx}{j}
\newcommand{\trSub}{S}
\newcommand{\dsSuper}[2][\blockIdx]{#2^{(#1)}}
\newcommand{\blockI}[1][\blockIdx]{\dsSuper[#1]{\trSub}}
\newcommand{\blockBoldI}[1]{\mathbf{\trSub}^{(\mathbf{#1})}}
\newcommand{\blockOne}{\blockI[1]}
\newcommand{\blockT}{\blockI[\nModel]}
\newcommand{\blockFin}{\blockI[\nBlocks]}
\newcommand{\nBlocks}{m}
\newcommand{\trIdx}{i}
\newcommand{\X}{x}
\newcommand{\xI}{\X_{\trIdx}}
\newcommand{\domainX}{\mathcal{X}}
\newcommand{\dimX}{d}
\newcommand{\xTe}{\X_{\teStr}}
\newcommand{\y}{y}
\newcommand{\yI}{\y_{\trIdx}}
\newcommand{\domainY}{\mathcal{Y}}
\newcommand{\yTe}{\y_{\teStr}}
\newcommand{\z}{z}
\newcommand{\domainZ}{\mathcal{Z}}
\newcommand{\hashSym}{h}
\newcommand{\hashTrain}{\hashSym_{\trStr}}
\newcommand{\hashTrainFunc}[1]{{\hashTrain(#1)}}
\newcommand{\hashModel}{\hashSym_{\dec}}
\newcommand{\hashModelFunc}[1]{{\hashModel(#1)}}
\newcommand{\dec}{f}
\newcommand{\modSub}[2][\modIdx]{#2_{#1}}
\newcommand{\decI}[1][\modIdx]{\modSub[#1]{\dec}}
\newcommand{\decOne}{\modSub[1]{\dec}}
\newcommand{\decFin}{\modSub[\nModel]{\dec}}
\newcommand{\decSubset}[1]{\decI[#1]}
\newcommand{\decSubsetFunc}[2]{\decFunc[\decSubset{#1}]{#2}}
\newcommand{\decFunc}[2][\dec]{{#1(#2)}}
\newcommand{\decTe}{\decFunc{\xTe}}
\newcommand{\decFuncI}[2][\modIdx]{{\decI[#1](#2)}}
\newcommand{\decFuncOne}[1]{{\decFuncI[1]{#1}}}
\newcommand{\decFuncFin}[1]{{\decFuncI[\nModel]{#1}}}
\newcommand{\nModel}{T}
\newcommand{\lowerSubMods}{\lowSub{\mathcal{T}}}
\newcommand{\modIdx}{t}
\newcommand{\modIdxAlt}{\modIdx'}
\newcommand{\modTrainSetSym}{\mathcal{D}}
\newcommand{\modTrainSetI}[1][\modIdx]{\modSub[#1]{\modTrainSetSym}}
\newcommand{\modTrainSetOne}{\modTrainSetI[1]}
\newcommand{\modTrainSetIAlt}{\modSub[\modIdxAlt]{\modTrainSetSym}}
\newcommand{\halfModels}{\ceil{\frac{\nModel}{2}}}
\newcommand{\covSym}{r}
\newcommand{\covI}[1][\modIdx]{\modSub[#1]{\covSym}}
\newcommand{\covOne}{\covI[1]}
\newcommand{\covFin}{\covI[\nModel]}
\newcommand{\covMax}{\covI[\max]}
\newcommand{\covSet}{\covSetSym}
\newcommand{\kNeigh}{k}
\newcommand{\knn}{$\kNeigh$\textsc{NN}}
\newcommand{\knnMSuffix}{\=/m}
\newcommand{\knnM}{\knn{}\knnMSuffix}
\newcommand{\rnn}{$r$NN}
\newcommand{\neighSym}{\mathcal{N}}
\newcommand{\neighborhood}[1]{{\neighSym(#1)}}
\newcommand{\neighTe}{\neighborhood{\xTe}}
\newcommand{\lowNeigh}{\lowerSetVals}
\newcommand{\modelVarSym}{\delta}
\newcommand{\modelVarI}{\modSub{\modelVarSym}}
\newcommand{\dsBlockVarSym}{\omega}
\newcommand{\dsBlockVarI}{\dsSuper{\dsBlockVarSym}}
\newcommand{\isMultiVar}{\sigma}
\newcommand{\datasetDiv}{q}
\newcommand{\spreadDegreeSym}{d}
\newcommand{\spreadDegI}[1][\blockIdx]{\dsSuper[#1]{\spreadDegreeSym}}
\newcommand{\spreadDegOne}{\spreadDegI[1]}
\newcommand{\spreadDegFin}{\spreadDegI[\nBlocks]}
\newcommand{\spreadDegMax}{\spreadDegreeSym_{\max}}
\newcommand{\disjointSingleMethod}{\textsc{PCR}}
\newcommand{\overlapSingleMethod}{\textsc{OCR}}
\newcommand{\bothDisjoint}{(\multiPrefix)\disjointSingleMethod}
\newcommand{\bothOverlap}{(\multiPrefix)\overlapSingleMethod}
\newcommand{\multiPrefix}{\textsc{W}\=/}
\newcommand{\disjointMultiMethod}{\multiPrefix\disjointSingleMethod}
\newcommand{\overlapMultiMethod}{\multiPrefix\overlapSingleMethod}
\newcommand{\knnMethod}{\knn{}\=/\textsc{CR}}
\newcommand{\xgb}{XGBoost}
\newcommand{\medIdx}{\alpha}
\newcommand{\thresholdIdx}{\nLower}
\newcommand{\groundSet}{U}
\newcommand{\setCollection}{\mathbf{Q}}
\newcommand{\subsetSym}{\mathcal{Q}}
\newcommand{\nSubset}{\nBlocks}
\newcommand{\subsetI}[1][\blockIdx]{\subsetSym_{#1}}
\newcommand{\subsetOne}{\subsetI[1]}
\newcommand{\subsetFin}{\subsetI[\nSubset]}
\newcommand{\subcover}{\mathcal{F}}
\newcommand{\hyperRidgeWD}{\lambda}
\newcommand{\hyperRidgeTol}{\varepsilon}
\newcommand{\hyperRidgeItr}{\text{\#~Itr.}}
\newcommand{\hyperXgbNumEst}{\tau}
\newcommand{\hyperXgbMaxDepth}{h}
\newcommand{\hyperXgbMetric}{\mathcal{L}}
\newcommand{\hyperXgbWD}{\lambda}
\newcommand{\hyperXgbLR}{\eta}
\newcommand{\hyperXgbMinLoss}{\gamma}
\newcommand{\SupplementaryMaterialsTitle}{%
  \vbox{
    \hrule height 4pt
    ~\vspace{0.10in}

    \vskip -\parskip%
    \begin{center}
      {\Large\bf \titleTextBreak{} \par}

      \vspace{8pt}
      {\Large Supplemental Materials \par}
    \end{center}
    \vskip 0.15in
    \vskip -\parskip
    \hrule height 1pt
    \vskip 0.09in%
  }
}
\newcommand{\equationSize}{\small}
\newcommand{\tableSize}{\small}
\newsavebox\CBox
\def\textBF#1{\sbox\CBox{#1}\resizebox{\wd\CBox}{\ht\CBox}{\textbf{#1}}}
\providecommand{\keywords}[1]
{
  \small
  \noindent
  \textbf{\textit{Keywords}}:~#1
}
\newcommand\blfootnote[1]{%
  \begingroup
  \renewcommand\thefootnote{}\footnote{#1}%
  \addtocounter{footnote}{-1}%
  \endgroup
}
  \newcommand{\tikzExternalDir}{build}%
\newcommand{\pgfPlotsLegendRef}[1]{%
  \ifdefined\externalize%
    \scalebox{0.5}[1.0]{%
      \tikzexternaldisable#1\tikzexternalenable
    }%
  \else%
    \raisebox{0.5ex}%
             {%
               \scalebox{0.5}[1.0]{%
                 \begin{tikzpicture}%
                   #1%
                 \end{tikzpicture}%
               }%
             }%
  \fi%
}
\newcommand{\setValOne}{2}
\newcommand{\setValTwo}{3}
\newcommand{\setValThree}{4}
\newcommand{\setValFour}{5}
\newcommand{\setValFive}{6}
\newcommand{\setCovOne}{3}
\newcommand{\setCovTwo}{4}
\newcommand{\setCovThree}{5}
\newcommand{\setCovFour}{6}
\newcommand{\setCovFive}{7}
\newcommand{\covValDiffVal}{\number\numexpr\setCovOne-\numexpr\setValOne}
\newcommand{\covValDiff}{%
  \ifnumless{\covValDiffVal}{0}%
            {- \covValDiffVal}%
            {+ \covValDiffVal}%
}
\newcommand{\setCovBaseBound}{\setCovOne}
\newcommand{\setCovFullBound}{{\number\numexpr\setCovOne+\numexpr\setCovTwo-1\relax}}
\newcommand{\setValThreshold}{5.4}
\newcommand{\decInt}[1]{{\number\numexpr#1-1\relax}}
\newcommand{\setCovTwoMOne}{\decInt{\setCovTwo}}
\newcommand{\medianPerturbFontStyle}{\scriptsize\bfseries}
\newcommand{\medianPerturbNodeMinSize}{0.45cm}
\newcommand{\medianPerturbNodeDist}{1.150cm}
\newcommand{\medianLabelDist}{0.65cm}
\newcommand{\belowMedianFill}{blue!05!white}
\newcommand{\aboveMedianFill}{red!10!white}
\newcommand{\perturbFill}{green!10!white}
\tikzset{
    vertex/.style = {%
      circle,
      draw,
      node distance=\medianPerturbNodeDist,
      minimum size=\medianPerturbNodeMinSize,
      inner sep=0pt,  %
      drop shadow,
      font=\medianPerturbFontStyle,
    },
    below median/.style = {%
      blue,
      text=black,
      fill=\belowMedianFill,
    },
    above median/.style = {%
      red,
      text=black,
      fill=\aboveMedianFill,
    },
    square/.style = {regular polygon,regular polygon sides=4},
    added/.style = {%
      square,
      minimum size=\medianPerturbNodeMinSize,
      inner sep=0pt,
      text=black,
      fill=\perturbFill,
    },
    empty vertex/.style = {%
      circle,
      fill=white,
      node distance=\medianPerturbNodeDist,
      minimum size=\medianPerturbNodeMinSize,
      text=white,
      font=\medianPerturbFontStyle,
    },
    median label/.style = {%
      node distance = \medianLabelDist,
      font=\scriptsize,
      inner ysep=2pt,
    },
    label line/.style = {%
      black!75,
      thick,
    },
    vals label/.style = {%
      node distance={\valsLabelNodeDist},
      font={\valsLabelFontSize},
    },
    hide text/.style = {%
      text opacity=0,
    },
}
\newcommand{\valsLabelNodeDist}{0.7cm}
\newcommand{\valsLabelFontSize}{\scriptsize}
\newcommand{\ValLabel}[1]{#1:}
\tikzset{%
  dot hidden/.style={},
  line hidden/.style={},
  dot color/.style={dot hidden/.append style={color=#1}},
  dot color/.default=black,
  dot perturb/.style={draw=black},
  dot unperturb/.style={draw=black,fill=white},
  line color/.style={line hidden/.append style={color=#1}},
  line color/.default=black,
}%
\newcommand{\DiceTopY}{0.20}
\newcommand{\DiceMidY}{0.50}
\newcommand{\DiceBotY}{0.80}
\newcommand{\dieDim}{0.5cm}
\newcommand{\dotRadius}{0.100}
\NewDocumentCommand{\drawdie}{O{}m}{%
    \begin{tikzpicture}[
        x=\dieDim,
        y=\dieDim,
        radius=\dotRadius,
        #1
      ]
      \draw[%
        rounded corners=1,
        line hidden,
        drop shadow,
      ] (0,0) rectangle (1,1);
      \ifnum#2=2
        \fill[dot perturb] (\DiceTopY,\DiceTopY) circle;%
        \fill[dot perturb] (\DiceBotY,\DiceBotY) circle;%
      \fi
      \ifnum#2=3
        \fill[dot perturb] (\DiceMidY,\DiceMidY) circle;%
        \fill[dot perturb] (\DiceTopY,\DiceTopY) circle;%
        \fill[dot unperturb] (\DiceBotY,\DiceBotY) circle;%
      \fi
      \ifnum#2>3
        \fill[dot unperturb] (\DiceTopY,\DiceTopY) circle;%
        \fill[dot unperturb] (\DiceBotY,\DiceBotY) circle;%
        \fill[dot unperturb] (\DiceTopY,\DiceBotY) circle;%
        \fill[dot unperturb] (\DiceBotY,\DiceTopY) circle;%
      \fi
      \ifnum#2=5
        \fill[dot unperturb] (\DiceMidY,\DiceMidY) circle;%
      \fi
      \ifnum#2=6
        \fill[dot unperturb] (\DiceBotY,\DiceMidY) circle;%
        \fill[dot unperturb] (\DiceTopY,\DiceMidY) circle;%
      \fi
    \end{tikzpicture}%
}%
\newcommand{\knnTikzScale}{1.00}
\newcommand{\knnTeLabelSize}[1]{{\scriptsize #1}}
\newcommand{\knnValueSize}[1]{{\scriptsize #1}}
\tikzset{%
    cmark/.style={%
      append after command={plot[only marks,mark=#1] coordinates {(\tikzlastnode)}}%
    },
    knn vertex/.style = {%
    },
    xTe vertex/.style = {%
      knn vertex,
      cmark={*,mark options={black,fill=black!50!white}},
    },
    below med knn/.style = {%
      knn vertex,
      cmark={*,mark options={blue,fill=blue!18!white}},
    },
    below med knn inline/.style = {%
      knn vertex,
      cmark={*,mark options={blue,fill=blue!18!white},mark size=2.5pt},
    },
    above med knn/.style = {%
      knn vertex,
      cmark={*,mark options={red,fill=red!18!white}},
    },
    other knn/.style = {%
      knn vertex,
      cmark={*,mark options={black,fill=white}},
    },
    added knn/.style = {%
      knn vertex,
      cmark={square*,mark options={ForestGreen,fill=green!20!white}},
    },
    added knn inline/.style = {%
      knn vertex,
      cmark={square*,mark options={ForestGreen,fill=green!20!white},mark size=2.5pt},
    },
    knn base radius/.style = {%
      black!40!white,
      dashed,
      thick,
    }
}
\DeclareRobustCommand\showAddedKnnSquare{\tikz[baseline=-0.60ex]\node[added knn inline,mark size=0.7ex]{};}
\DeclareRobustCommand\showBelowMedianKnn{\tikz[baseline=-0.60ex]\node[below med knn inline,mark size=0.7ex]{};}
\def\expandafter\tikz@node@finish\expandafter{\expandafter\endgroup\expandafter\endpgfonlayer\tikz@node@finish}%
\definecolor{lavenderblue}{rgb}{0.8, 0.8, 1.0}
\definecolor{wisteria}{rgb}{0.79, 0.63, 0.86}
\definecolor{wildwatermelon}{rgb}{0.99, 0.42, 0.52}
\definecolor{salmon}{rgb}{1.0, 0.55, 0.41}
\definecolor{lightkhaki}{rgb}{0.94, 0.9, 0.55}
\definecolor{pistachio}{rgb}{0.58, 0.77, 0.45}
\definecolor{turquoise}{rgb}{0.19, 0.84, 0.78}
\definecolor{bluegray}{rgb}{0.40, 0.60, 0.80}
\definecolor{tuftyblue}{rgb}{0.25, 0.49, 0.76}
\definecolor{ashgrey}{rgb}{0.70, 0.75, 0.71}
\definecolor{persianorange}{rgb}{0.85, 0.56, 0.53}
\definecolor{junebud}{rgb}{0.74, 0.85, 0.34}
\definecolor{asparagus}{rgb}{0.53, 0.66, 0.42}
\definecolor{citrine}{rgb}{0.89, 0.82, 0.04}
\definecolor{dsPart1Color}{rgb}{0.76, 0.7, 0.5}
\definecolor{dsPart2Color}{rgb}{0.6, 0.73, 0.45}
\definecolor{dsPart3Color}{rgb}{0.29, 0.59, 0.82}
\definecolor{dsPart4Color}{rgb}{0.8, 0.6, 0.8}
\definecolor{dsPart5Color}{rgb}{0.94, 0.5, 0.5}
\definecolor{dsPart6Color}{rgb}{0.98, 0.81, 0.69}
\definecolor{dsPart7Color}{rgb}{0.98, 0.85, 0.37}
\tikzset{%
  dataset block base/.style = {%
    cylinder,
    shape border rotate=90,
    draw,
    aspect=0.22,
    minimum height=0.0cm,
    minimum width=1.5cm,
    inner sep=2.75pt,
    drop shadow={gray!40, opacity=1.0, on layer=back, shadow xshift=0.5ex, shadow yshift=-0.5ex, shadow scale=1.03},
  },
  block01/.style = {%
    dataset block base,
    left color=dsPart1Color!80!black,
    right color=dsPart1Color!80!white,
    middle color=dsPart1Color, %
  },%
  block02/.style = {%
    dataset block base,
    left color=dsPart2Color!80!black,
    right color=dsPart2Color!80!white,
    middle color=dsPart2Color, %
  },
  block03/.style = {%
    dataset block base,
    left color=dsPart3Color!80!black,
    right color=dsPart3Color!80!white,
    middle color=dsPart3Color, %
  },
  block04/.style = {%
    dataset block base,
    left color=dsPart4Color!80!black,
    right color=dsPart4Color!80!white,
    middle color=dsPart4Color, %
  },
  block05/.style = {%
    dataset block base,
    left color=dsPart5Color!80!black,
    right color=dsPart5Color!80!white,
    middle color=dsPart5Color, %
  },
  block06/.style = {%
    dataset block base,
    left color=dsPart6Color!80!black,
    right color=dsPart6Color!80!white,
    middle color=dsPart6Color, %
  },
  block07/.style = {%
    dataset block base,
    left color=dsPart7Color!80!black,
    right color=dsPart7Color!80!white,
    middle color=dsPart7Color, %
  },
  dataset line/.style = {%
    ->,
    thick
  },%
  submodel/.style = {%
    drop shadow,
    rounded corners,
    square,
    draw,
    line width=0.3mm,
    inner sep=0.130cm,
    text=black,
    fill=white,
  },%
  prediction base/.style = {%
    circle,
    draw,
    drop shadow,
  },%
  prediction below/.style = {%
    prediction base,
    below median,
  },%
  prediction above/.style = {%
    prediction base,
    above median,
  },%
  prediction line/.style = {%
    ->,
    thick
  },%
  description box/.style = {%
    rectangle,
    minimum width=1in,
    minimum height=1in,
    draw,
    rounded corners,
    gray,
    dashed,
    thick,
    on layer=back,
  },%
}
\newcommand{\boundLineWidth}{0.5pt}
\newcommand{\boundLineWidthLegend}{\boundLineWidth}
\newcommand{\boundFontSize}{\scriptsize}
\newcommand{\boundYTickFontSize}{\scriptsize}
\tikzset{%
  line fill base/.style = {%
    draw=none,
  },%
  disjoint line fill/.style = {
    line fill base,
    fill=blue!20,
  },
  overlap line fill/.style = {
    line fill base,
    fill=red!10,
  },
  overlap base line/.style = {%
    line width=\boundLineWidth,
  },
  single disjoint line/.style = {
    overlap base line,
    blue,
    dashed,
  },
  single overlap line/.style = {
    overlap base line,
    red,
    dashed,
  },
  multi disjoint line/.style = {
    overlap base line,
    blue,
  },
  multi overlap line/.style = {
    overlap base line,
    red,
  },
  knn trend line/.style = {
    overlap base line,
    dashdotted,
    ForestGreen,
    line width=1.5*\boundLineWidth,
  },
  knn line fill/.style = {
    line fill base,
    fill=ForestGreen!25,
    opacity=0.40,
  },
  baseline trend line/.style = {
    overlap base line,
    dotted,
    line cap=round,
    black,
    line width=\boundLineWidth,
  },
}
\tikzset{%
  knn alt-main trend line/.style = {
    overlap base line,
    red,
    line width=1.5*\boundLineWidth,
  },%
  knn alt-main line fill/.style = {
    overlap base line,
    dash pattern=on 6pt off 1pt,
    red!30,
    line width=1.5*\boundLineWidth,
  },%
  knn alt01 trend line/.style = {
    overlap base line,
    dash pattern=on 6pt off 1pt,
    Goldenrod!90!gray,
    line width=\boundLineWidth,
  },%
  knn alt01 line fill/.style = {
    line fill base,
    fill=Goldenrod!80,
    opacity=0.40,
  },%
  knn alt02 trend line/.style = {
    overlap base line,
    dash pattern=on 5pt off 2pt,
    Aquamarine,
    line width=\boundLineWidth,
  },%
  knn alt02 line fill/.style = {
    line fill base,
    fill=Aquamarine!35,
    opacity=0.40,
  },%
  knn alt03 trend line/.style = {
    overlap base line,
    dash pattern=on 4pt off 3pt,
    RedOrange,
    line width=1.5*\boundLineWidth,
  },%
  knn alt03 line fill/.style = {
    line fill base,
    fill=RedOrange!25,
    opacity=0.40,
  },%
  knn alt04 trend line/.style = {
    overlap base line,
    dash pattern=on 3pt off 4pt,
    BlueViolet,
    line width=1.5*\boundLineWidth,
  },%
  knn alt04 line fill/.style = {
    line fill base,
    fill=BlueViolet!25,
    opacity=0.40,
  },%
  knn alt05 trend line/.style = {
    overlap base line,
    dash pattern=on 2pt off 5pt,
    Gray,
    line width=1.5*\boundLineWidth,
  },%
  knn alt05 line fill/.style = {
    line fill base,
    fill=Gray!35,
    opacity=0.40,
  },%
  knn alt06 trend line/.style = {
    overlap base line,
    dotted,
    VioletRed,
    line width=1.5*\boundLineWidth,
  },%
  knn alt06 line fill/.style = {
    line fill base,
    fill=VioletRed!35,
    opacity=0.40,
  },%
}
\newcommand{\ReductionSymbolFontSize}{\small}
\newcommand{\VarTypeDescriptorSpacer}{0.80cm}
\newcommand{\BranchNodeFrac}{0.75}
\newcommand{\InputVarMainSpacer}{5.85cm}
\newcommand{\MedBoxMainSpacer}{1.45cm}
\newcommand{\DecFuncMainSpacer}{2.30cm}
\newcommand{\SignumBoxSpacer}{1.80cm}
\newcommand{\CertifiedReductLineColor}{black!85}
\tikzset{%
  certified box/.style = {%
      drop shadow,
      rounded corners,
      rectangle,
      draw,
      line width=0.3mm,
      inner sep=0.2cm,
      text=black,
      fill=white,
      align=center,
      font={\ReductionSymbolFontSize},
  },
  signum box/.style = {%
      rectangle,
      draw,
      line width=0.3mm,
      inner sep=0.15cm,
      text=black,
      fill=white,
      align=center,
      node distance={\SignumBoxSpacer},
      rounded corners=1,
      drop shadow,
  },
  certified reduction branch node/.style = {%
      circle,
      \CertifiedReductLineColor,
      fill=\CertifiedReductLineColor,
      inner sep=0cm,
      minimum size=0.15cm,
      font={\ReductionSymbolFontSize},
  },
  certified reduction var descriptor/.style = {%
    node distance={\VarTypeDescriptorSpacer},
    font={\ReductionSymbolFontSize},
  },
  certified reduction input label/.style = {%
      font={\ReductionSymbolFontSize},
  },
  certified reduction output label/.style = {%
      text width={0.95cm},
      anchor=west,
      font={\ReductionSymbolFontSize},
  },
  certified reduction subtraction/.style = {%
      circle,
      draw,
      inner sep=01pt,  %
      font={\bfseries},
      drop shadow,
      fill=white,
  },
  certified reduction surround box/.style = {%
      draw,
      rounded corners=2,
      line width=1.05pt,
      black!45,
  },%
  white label/.style = {
      fill=white,
      inner sep=0.5mm,
      text=black,
      font={\ReductionSymbolFontSize},
  },
  certified reduct line/.style = {%
      \CertifiedReductLineColor,
      thick,
  },%
}%
\newcommand{\CovLabelDist}{0.6cm}
\newcommand{\CovBaseWeightDist}{0.3cm}
\tikzset{%
  vertex cost/.style = {%
      node distance={\CovLabelDist},
      font={\medianPerturbFontStyle},
  },
  cov label/.style = {
      vals label,
      node distance={\CovLabelDist},
  },
  base weight text/.style = {
      vals label,
      node distance={\CovBaseWeightDist},
  },
}%
\newcommand{\titleText}{Reducing Certified Regression to Certified Classification for General Poisoning Attacks}
\newcommand{\titleTextNoBreak}{\titleText}
\newcommand{\titleTextBreak}{Reducing Certified Regression to Certified Classification \\ for General Poisoning Attacks}
\newcommand{\pdfKeywords}{%
  Robust regression,
  Certified classifier,
  Data poisoning,
  Partial set cover,
  Partial set multicover
}
\author{Zayd Hammoudeh\footnote{Correspondence to \href{mailto:zayd@cs.uoregon.edu}{zayd@cs.uoregon.edu}.}}
\author{Daniel Lowd}
\affil{University of Oregon}
\date{}
  \definecolor[named]{ACMBlue}{cmyk}{1,0.1,0,0.1}
  \definecolor[named]{ACMYellow}{cmyk}{0,0.16,1,0}
  \definecolor[named]{ACMOrange}{cmyk}{0,0.42,1,0.01}
  \definecolor[named]{ACMRed}{cmyk}{0,0.90,0.86,0}
  \definecolor[named]{ACMLightBlue}{cmyk}{0.49,0.01,0,0}
  \definecolor[named]{ACMGreen}{cmyk}{0.20,0,1,0.19}
  \definecolor[named]{ACMPurple}{cmyk}{0.55,1,0,0.15}
  \definecolor[named]{ACMDarkBlue}{cmyk}{1,0.58,0,0.21}
\begin{document}
\title{\textbf{\titleText}%
}

\maketitle

\begin{abstract}
Adversarial training instances can severely distort a model's behavior.
This work investigates
\keyword{certified regression defenses}, which provide guaranteed limits on how much a regressor's prediction may change under a poisoning attack.
Our key insight is that certified regression reduces to voting-based certified classification when using median as a model's primary decision function. %
Coupling our reduction with existing certified classifiers, we propose six new regressors provably-robust to poisoning attacks.
To the extent of our knowledge, this is the first work that certifies the robustness of individual regression predictions without any assumptions about the data distribution and model architecture.
We also show that the assumptions made by existing state-of-the-art certified classifiers are often overly pessimistic.
We introduce a tighter analysis of model robustness, which in many cases results in significantly improved certified guarantees.
Lastly, we empirically demonstrate our approaches' effectiveness on both regression and classification data, where the accuracy of up to 50\% of test predictions can be guaranteed under 1\%~training set corruption and up to 30\% of predictions under 4\%~corruption.
Our source code is available at
\mbox{\url{\sourceCodeUrl}}.%
\end{abstract}

\keywords{\pdfKeywords}

\blfootnote{%
  This paper appeared at the
  first IEEE Conference on Secure and Trustworthy Machine Learning (SaTML).
  The definitive, peer-reviewed version is published in the proceedings of {SatML'23}~\citep{Hammoudeh:2023:CertifiedRegression}.%
}

\section{Introduction}%
\label{sec:Intro}

In a \keyword{poisoning attack}, an adversary inserts malicious instances into a model's training set to manipulate one or more target predictions~\citep{Biggio:2012:Poisoning,Chen:2017:Targeted,Shafahi:2018:PoisonFrogs,Huang:2020:MetaPoison,Wallace:2021}.
\citepos{Kumar:2020} recent survey of large corporate and governmental organizations found that poisoning attacks were their top ML security concern due to previous successful attacks~\citep{Lee:2016:Learning}.
\citeauthor{Kumar:2020} specifically note that most defenses against these attacks lack ``fundamental security rigor'' and acknowledge that most adversarial ML defenses are like ``crypto pre-Shannon''~\citep{Carlini:2019:Youtube}.

\citeauthor{Kumar:2020}'s concerns primarily arise because most ML defenses (including those for poisoning attacks) are \textit{empirical}~\citep{Tran:2018,Peri:2020:DeepKNN,Zhu:2021:CLEAR,Hammoudeh:2022:GAS};
such defenses derive from insights into the underlying mechanisms of specific attacks and in turn provide strategies to mitigate the associated vulnerability.
The fatal weakness of empirical defenses is that they provide no guarantees of their effectiveness, and attacks can be adapted to bypass them -- often with minimal effort~\citep{Gao:2019:STRIP,Kumar:2020}.

In contrast, \textit{certified defenses}~\citep{Lai:2016:AgnosticMeanCovariance,Steinhardt:2017,Wang:2020,Weber:2023:RAB} provide a quantifiable guarantee of a prediction's robustness, albeit under specific assumptions.
There has been significant recent progress towards lifting the assumptions necessary to certify a classifier's prediction.
Today, non-trivial guarantees of the \keyword{pointwise robustness} of individual classification predictions are possible without making assumptions about the underlying data distribution or model architecture.

Similar progress has not yet been made for regression.
Existing certified regressors generally make strong assumptions about the \textit{data distribution} (e.g., linearity~\citep{Liu:2020:RobustSparseRegression}, sparsity~\citep{Liu:2017:RobustLinearRegression}) that rarely hold in practice. %
When those assumptions fail to hold, these methods' ``guarantees'' are not guarantees at all.
Another common requirement of existing certified regressors is that the \textit{model architecture} is linear~\citep{Jagielski:2018:Trim}, despite other architectures often performing far better~\citep{Chen:2016:XGBoost,Prokhorenkova:2018:CatBoost}.

\newcommand{\probA}{Q}
\newcommand{\probB}{\probA'}
  A problem~$\probA$ is \keyword{reducible} to a different problem~$\probB$ if an efficient algorithm to solve~$\probB$ can also efficiently solve $\probA$~\citep{Dasgupta:2008:Algorithms}.
Our \textit{key insight} is that certified regression is reducible to voting-based certified classification.
Mapping certified regression to certified classification requires only minimal changes to the certified classifier's architecture, with the robustness certification function identical.
Given reducibility,
an important takeaway is that certified regression can be viewed as \ul{no harder} than certified classification.%

Coupling our reduction with existing certified classifiers~\citep{Jia:2022:CertifiedKNN,Levine:2021:DPA,Wang:2022:DeterministicAggregation}, we propose six new certifiably-robust regressors.
To the extent of our knowledge, our methods are the first to provide pointwise regression robustness guarantees against poisoning without both distributional and model assumptions.

Our primary contributions are enumerated below.
\begin{enumerate}
  \item We formalize three paradigms based on median perturbation to map certified regression to certified classification.  All of our certified regressors apply one of these paradigms.
  \item We propose two provably-robust instance-based regressors -- one based on $\kNeigh$\=/nearest neighbors and %
  the other based on all training instances within a feature-space region.
  \item We separately propose four ensemble-based certified regressors, where one pair of regressors trains submodels on disjoint data while the other pair allows submodels to be trained on overlapping data.
  \item We significantly improve the certification performance of our ensemble-based regressors \textit{and} existing certified classifiers via a tighter analysis of submodel prediction stability.
  \item We demonstrate our methods' effectiveness on both regression and classification datasets, where we certify significant fractions of the training set and even outperform state-of-the-art certified classifiers on binary classification.
\end{enumerate}
Note that all proofs are in the supplemental materials.
\newcommand{\prelimParagraph}[1]{%
  \vspace{5pt}%
  \noindent%
  \textbf{#1}~
}

\section{Preliminaries}%
\label{sec:Preliminaries}

We begin with a summary of our notation followed by a formalization of our threat model and objective.%
\footnote{%
  \revised{%
  Supplemental Sec.~\ref{sec:App:Nomenclature} provides a full nomenclature reference.%
  }%
}

\prelimParagraph{Notation}
Let \eqsmall{$\setint{k}$} denote the set of integers \eqsmall{$\{1, \ldots, k\}$}, and denote the corresponding \keyword{power set}~\eqsmall{$\powerSetInt{k}$}.
\eqsmall{$\ind{\predSym}$}~is the \keyword{indicator function}, which equals~1 if predicate~$\predSym$ is true and 0~otherwise.
Let \eqsmall{${\harmonic{k} = \sum_{i=1}^{k} \frac{1}{i}}$} denote the \keyword{$k$\=/th harmonic number}.

In this work, the term ``set'' refers to a \keyword{multiset} where multiple elements may have the same value.
Denote set $A$'s \keyword{median} $\medFunc{A}$.
In cases where $A$'s cardinality is even, the median is the midpoint between $A$'s
\eqsmall{$\frac{\abs{A}}{2}$}\=/th
and
\eqsmall{${(\frac{\abs{A}}{2} + 1)}$}\=/th
largest values.
Finding the median requires only linear time,
meaning median is asymptotically as fast as mean~\citep{Blum:1973:TimeBoundsSelection}.

${\X \in \domainX}$ is an \keyword{independent variable} (e.g.,~\keyword{feature vector}) of dimension~$\dimX$ and
{${\y \in \domainY \subseteq \real}$} a \keyword{response variable} (e.g.,~\keyword{target}).
Let \eqsmall{${\domainZ \defeq \domainX \times \domainY}$} denote the \keyword{instance space}.
\keyword{Training set}~$\trainSet$ consists of $\nTr$~training examples. %
For \eqsmall{${\nBlocks \in \nats}$} where \eqsmall{${\nBlocks \leq \nTr}$},
deterministic function \eqsmall{$\func{\hashTrain}{\domainZ}{\setint{\nBlocks}}$} partitions the instance space, and by consequence~\eqsmall{$\trainSet$}.
Let
\eqsmall{${\blockOne, \ldots, \blockFin}$}
be the $\nBlocks$~disjoint training set \keyword{blocks} defined by \eqsmall{$\hashTrain$}
where \eqsmall{${\trainSet = \sqcup_{\blockIdx = 1}^{\nBlocks} \blockI}$}.

\keyword{Model}~\eqsmall{$\func{\dec}{\domainX}{\real}$} is trained on full set~$\trainSet$.
$\dec$~may be a single \keyword{decision function} or the fusion of an \keyword{ensemble} of \eqsmall{$\nModel$}~submodels, where for simplicity \eqsmall{$\nModel$} is chosen to be odd. %
Let \eqsmall{$\decI$} denote a submodel where \eqsmall{${\modIdx \in \setint{\nModel}}$}.
Each \eqsmall{$\decI$}~has its own training data \eqsmall{${\modTrainSetI \subset \trainSet}$}.
Submodel training is \keyword{deterministic}, meaning training on fixed~\eqsmall{$\modTrainSetSym$} always yields the same model.
We separately consider both \keyword{partitioned} (\eqsmall{${\forall_{\modIdx, \modIdxAlt} \, \modTrainSetI \cap \modTrainSetIAlt = \emptyset}$})
and \keyword{overlapping}
(\eqsmall{${\exists_{\modIdx, \modIdxAlt} \, \modTrainSetI \cap \modTrainSetIAlt \ne \emptyset}$})
submodel training data.

\prelimParagraph{Threat Model}%
\label{sec:Preliminaries:ThreatModel}
For arbitrary \keyword{test instance}~\eqsmall{${(\xTe, \yTe)}$}, the adversary's objective is to alter the model so that the \keyword{prediction error} \eqsmall{${\abs{\decFunc{\xTe} - \yTe}}$} is as large as possible.
Our primary threat model considers an adversary that can insert arbitrary instances into training set~\eqsmall{$\trainSet$} and arbitrarily delete instances from~\eqsmall{$\trainSet$}.%
\footnote{%
  Sec.~\ref{sec:BeyondUnitCost} considers a somewhat restricted threat model where attackers only make arbitrary deletions but no insertions.
  This allows us to empirically evaluate our method despite few base models fully utilizing our threat model.%
}
The attacker has perfect knowledge of the learner and our method.
We make \textit{no assumptions about the underlying data distribution or adversarial training instances}.

\prelimParagraph{Our Objective}%
\label{sec:Preliminaries:OurObjective}
Determine \keyword{certified robustness}~$\certBound$ -- a guarantee on the number of training instances that can be inserted into or deleted from training set~$\trainSet$ without the model prediction ever violating the requirement that \eqsmall{${\lThreshold \leq \decFunc{\xTe} \leq \uThreshold}$}, where \eqsmall{${\lThreshold, \uThreshold \in \real}$} are user-specified and application dependent. %
Note that robustness~$\certBound$ is \keyword{pointwise}, meaning each prediction \eqsmall{$\decFunc{\xTe}$} is certified individually.

\subsection{One-Sided vs.\ Two-Sided Certification Bounds}%
\label{sec:Preliminaries:OneSidedTwoSided}
For simplicity, the remaining sections exclusively describe how to certify a \textit{one-sided upper bound}, \eqsmall{${\decFunc{\X} \leq \threshold}$}, since all other bounds reduce to this base case.
For example, certifying a \keyword{one-sided lower bound} reduces to certifying an upper bound via negation as \eqsmall{${\decFunc{\X} \geq \threshold \Leftrightarrow -\decFunc{\X} \leq -\threshold}$}.
Likewise, a \keyword{two-sided bound} is equivalent to the worst one-sided robustness as %

{%
  \equationSize%
  \begin{equation}
    \begin{aligned}
      \lThreshold \leq \decFunc{\X} \leq \uThreshold &\Leftrightarrow \big(\decFunc{\X} \geq \lThreshold\big) \wedge \big( \decFunc{\X} \leq \uThreshold \big) \\
                                                     &\Leftrightarrow \big( {-}\decFunc{\X} \leq {-}\lThreshold \big) \wedge \big( \decFunc{\X} \leq \uThreshold \big)
      \text{.}
    \end{aligned}
  \end{equation}%
}%
\noindent

\subsection{Relating Regression and Binary Classification}%
\label{sec:Preliminaries:RelatingRegressionClassification}

Binary classification can be viewed as a simple form of regression where \eqsmall{${\domainY = \set{\pm1}}$}. %
The model's decision function becomes \eqsmall{$\sign{\decFunc{\xTe}}$} where \eqsmall{${\sign a = +1}$} if \eqsmall{${a > 0}$} and \eqsmall{${-1}$}~otherwise.
While our primary focus is regression, our methods also achieve state-of-the-art results for binary classification. %
\newcommand{\relatedWorkParagraph}[1]{%
  \vspace{6pt}%
  \noindent%
  \textbf{#1}
}

\section{Related Work}%
\label{sec:RelatedWork}

Techniques to mitigate (regression) models' implicit fragility have been studied for over half a century.
Below we partition these methods into three categories with progressively stronger robustness guarantees.

\relatedWorkParagraph{Resilient Regression}
Early methods were rooted in \keyword{robust statistics} and focused on mitigating the effect of training set outliers.
For example, various trimmed loss functions (e.g., Huber~\citep{Huber:1964}, Tukey~\citep{Beaton:1974:TukeyBiweight}) cap an outlier's influence on a model~\citep{Dennis:1978:WelschLoss,Leclerc:1989}.
Methods like RANSAC~\citep{Fischler:1981:RANSAC}
employ another common robustness strategy of detecting and removing training set outliers~\citep{Torr:2000:MLESAC,Rousseeuw:2011}.

\relatedWorkParagraph{Adversarially-Robust Regression}
The above methods primarily target random noise/outliers.
Adversarial training instances can be much more insidious since they are crafted to avoid detection by appearing uninfluential and may only affect a very small fraction of test predictions~\citep{Chen:2017:Targeted,Wallace:2021}.
These factors can combine to make adversarial training instances difficult for resilient methods to fully detect and correct~\citep{Li:2022:Survey}.

Some existing adversarial regression defenses do provide pointwise robustness guarantees, albeit under strong assumptions about the underlying data distribution~\citep{Klivans:2018:OutlierRobustRegression}.
For example, some work assumes that the training set follows a linear data distribution with arbitrary white, Gaussian noise~\citep{Chen:2013:RobustSparseRegression,Liu:2020:RobustSparseRegression}.
Others assume the data distribution's feature matrix is low rank~\citep{Liu:2017:RobustLinearRegression}.
Conditioning a guarantee on a specific data distribution is inherently precarious -- in particular if the strong distributional assumption rarely holds and cannot be easily verified.
If the distributional assumption does not hold, any guarantee is no guarantee at all.

Note that there are adversarially-robust regression defenses that provide guarantees without making distributional assumptions~\citep{Jagielski:2018:Trim,Klivans:2018:OutlierRobustRegression}.
However, their robustness guarantees are themselves distributional.
For example, \citet{Jagielski:2018:Trim} bound the clean training data's mean error but provide no pointwise guarantees.
In other words, such methods do not provide insight into each prediction's robustness.

A major strength of our approach to certified regression is that we
\textit{provide pointwise guarantees without any assumptions}.
Lastly, many existing adversarially-robust regressors consider exclusively linear models~\citep{Chen:2013:RobustSparseRegression,Liu:2017:RobustLinearRegression,Jagielski:2018:Trim,Liu:2020:RobustSparseRegression}.
However, state-of-the-art regressors increasingly leverage non-linear methods~\citep{Chen:2016:XGBoost,Prokhorenkova:2018:CatBoost,Arik:2021:TabNet,Brophy:2022:TreeInfluence}.
In contrast, our ensemble-based certified regressors support any submodel architecture. %
\relatedWorkParagraph{Certified Classification}
Recent work has proposed numerous \textit{classifiers} provably robust to poisoning and backdoor attacks~\citep{Steinhardt:2017,Wang:2020,Rosenfeld:2020:CertifiedLabelFlipping,Jia:2021:CertifiedBaggingRobustness,Weber:2023:RAB}.
These state-of-the-art certified classifiers all rely on \keyword{majority voting-based} methods and derive their guarantees by (lower) bounding the number of training set modifications
needed for the label with the second-most votes to overtake the plurality label.
\citepos{Jia:2022:CertifiedKNN} certified $\kNeigh$\=/nearest neighbor~(\knn) classifier is the simplest such method,  %
where the set of ``votes'' is the training labels from the test instance's neighborhood.
Due to space, we defer a detailed discussion and analysis of \citet{Jia:2022:CertifiedKNN}'s method to suppl.\ Sec.~\ref{sec:App:JiaVsKnnCR}.
\citet{Levine:2021:DPA} propose \keyword{deep partition aggregation}~(DPA), a general, ensemble-based certified classifier.
Suppl.\ Sec.~\ref{sec:App:PcrVsDpa} describes DPA in detail, but briefly, DPA's deterministic submodels are fully-independent since they are trained on disjoint data.
Given a test instance, each submodel predicts a label, and the overall prediction is the ensemble's plurality label.
To turn these labels (i.e.,~votes) into a robustness guarantee, DPA needs to certify each submodel's robustness.
However, DPA sidesteps this by always assuming \textit{worst-case} submodel robustness, which we formalize below.

\begin{definition}
  \label{def:RelatedWork:UnitCostAssumption}
  \defKeyword{Unit-Cost Assumption}: Any modification to a submodel's training set changes the submodel arbitrarily.
\end{definition}

In practice, there are limits to how much a single training set modification will alter a submodel and its predictions -- in particular for models with strong inductive biases.
Therefore, the unit-cost assumption's pessimism can cause methods like DPA to underestimate a prediction's true robustness.
Nonetheless, this assumption greatly simplifies ensemble robustness certification by reducing the task to just submodel vote counting.
Most recently, \citet{Wang:2022:DeterministicAggregation} modify DPA's ensemble so that submodels can be trained on overlapping data, which (slightly) improves the ensemble's certification bounds.
While voting-based methods work well for classification with nominal~$\domainY$, these ideas have not yet been adapted to regression where $\domainY$ is continuous.
This work fills in that gap by providing a reduction that adapts certified classifiers to certify regression.
We specifically detail the reduction for the certified classifiers proposed by \citet{Jia:2022:CertifiedKNN}, \citet{Levine:2021:DPA}, and \citet{Wang:2022:DeterministicAggregation}.
By building on these methods, we inherit their property of not needing to make assumptions about the data distribution or model architecture.
The fundamental challenge of our reduction is making a continuous output space behave like a robust, nominal label space.
We describe the solution to this challenge next.%

\section{Warmup: Perturbing a Set's Median}%
\label{sec:WarmUp}

Traditional center statistics such as mean have a \keyword{breakdown point} of~0, i.e., altering a single value in a set can shift the mean arbitrarily.
In contrast, median has maximum robustness, i.e.,~a breakdown of~50\%.
A high breakdown point entails that a statistic is stable and resistant to change.
We formalize changes to median below.

\begin{definition}
  \defKeyword{Median Perturbation}: The task of altering a set's contents so that its median exceeds some specified~\eqsmall{${\threshold \in \real}$}.
\end{definition}

Throughout this work, determining pointwise robustness~\eqsmall{$\certBound$} simplifies to quantifying the number of changes that can be made to a set without perturbing its median.
To better foster intuitions, we first formalize robustness~\eqsmall{$\certBound$} w.r.t.\ simply perturbing a multiset's median and unrelated to any model.
Later sections apply these ideas to link certified regression and certified classification.

Formally, let \eqsmall{$\setVals$} be a multiset of cardinality~\eqsmall{${\nModel \defeq \abs{\setVals}}$}.
Denote the subset of elements in~\eqsmall{$\setVals$} that are at most~\eqsmall{$\threshold$} as
\eqsmall{${%
    \lowerSetVals
      \defeq
        \setbuild{\setScalarI \in \setVals}
                 {\setScalarI \leq \threshold}
}$} and denote its %
complement \eqsmall{${\upperSetVals \defeq \setVals \setminus \lowerSetVals}$}.

Below we define three different paradigms that constrain how \eqsmall{$\setVals$} is modified. Figure~\ref{fig:WarmUp:MedianPerturbation:General} visualizes our first two unweighted paradigms.
Note that Fig.~\ref{fig:WarmUp:MedianPerturbation:General}'s values are repeatedly used throughout this paper, including in Fig.~\ref{fig:WarmUp:MedianPerturbation:NonUniform} for our third median perturbation paradigm and later in Figs.~\ref{fig:CertifiedKNN} and~\ref{fig:CertifiedOverlap:Ensemble}.
In all cases below, consider when \eqsmall{${\medFunc{\setVals} \leq \threshold}$} since the degenerate case of \eqsmall{${\medFunc{\setVals} > \threshold}$} is by definition non-robust.

\subsection{Unweighted Swap Paradigm}%
\label{sec:WarmUp:Swap}

Here, set~\eqsmall{$\setVals$} has fixed, odd-valued%
\footnote{%
  Fixing $\nModel$ as odd simplifies the overall formulation and presentation since it ensures that $\setVals$'s median is always an element in $\setVals$.
  In all cases here where $\nModel$ is fixed as odd, $\nModel$ is always a user-selected hyperparameter.
  Extending our formulation to consider even $\nModel$ is not challenging but is verbose.%
}
cardinality~\eqsmall{${\nModel}$}.
All modifications to~\eqsmall{$\setVals$} take the form of ``swaps'' where a single value in~\eqsmall{$\setVals$} is replaced with any real number.
Fig.~\ref{fig:WarmUp:MedianPerturbation:Swap} visualizes the unweighted swap paradigm on a simple set \eqsmall{${\setVals = \set{\setValOne, \ldots, \setValFive}}$} of \eqsmall{${\nModel = 5}$} values.
Lemma~\ref{lem:Warmup:Swap} tightly bounds the number of arbitrary swaps~\eqsmall{$\certBound$} that can be made to~\eqsmall{$\setVals$} without perturbing its median.

\begin{figure}[t]
    \centering
    \begin{minipage}[t]{0.49\textwidth}
\newcommand{\minipageWidth}{\columnwidth}
\newcommand{\figSpacer}{\vspace{6pt}}

\centering
\begin{subfigure}{\minipageWidth}
  \centering
\begin{tikzpicture}[
    node distance = \medianPerturbNodeDist,
  ]

  \node (B1) [vertex, below median] {\setValOne};
  \node (B2) [vertex, below median, right of=B1] {\setValTwo};
  \node (B3) [vertex, below median, right of=B2] {\setValThree};
  \node (B4) [vertex, below median, right of=B3] {\setValFour};
  \node (A5) [vertex, above median, right of=B4] {\setValFive};
  \node (A6) [empty vertex, right of=A5] {$\infty$};
  \node (A7) [empty vertex, right of=A6] {$\infty$};
  \coordinate (Median) at ($(B3)$);
  \node (MedianLabel) [median label, above of=Median] {Initial Median\strut};
  \draw [->, label line] (MedianLabel) -- (B3)  {};

  \coordinate (Threshold) at ($(B4)!0.40!(A5)$);
  \node (ThresholdLabel) [median label, above of=Threshold] {$\threshold$\strut};
  \draw [->, label line] (ThresholdLabel) -- (Threshold)  {};
\end{tikzpicture}
   \caption{Initial set~${\setVals \defeq \lowerSetVals \sqcup \upperSetVals}$}%
  \label{fig:WarmUp:MedianPerturbation:Initial}
\end{subfigure}

\figSpacer%
\begin{subfigure}{\minipageWidth}
  \centering%
\begin{tikzpicture}[
    node distance = \medianPerturbNodeDist,
  ]

  \node (B1) [empty vertex] {\setValOne};

  \node (B2) [vertex, below median, right of=B1] {\setValTwo};

  \node (B3) [vertex, below median, right of=B2] {\setValThree};
  \node (B4) [vertex, below median, right of=B3] {\setValFour};
  \node (A5) [vertex, above median, right of=B4] {\setValFive};
  \node (A6) [vertex, added, right of=A5] {$\bm{\infty}$};

  \node (A7) [empty vertex, right of=A6] {$\bm{\infty}$};

  \coordinate (Median) at ($(B4)$);
  \node (MedianLabel) [median label, below of=Median] {New Median};
  \draw [->, label line] (MedianLabel) -- (B4)  {};
\end{tikzpicture}
   \caption{Unweighted swap paradigm with ${\certBound = 1}$}%
  \label{fig:WarmUp:MedianPerturbation:Swap}
\end{subfigure}

\figSpacer%
\begin{subfigure}{\minipageWidth}
  \centering%
\begin{tikzpicture}[
    node distance = \medianPerturbNodeDist,
  ]

  \node (B1) [vertex, below median] {\setValOne};
  \node (B2) [vertex, below median, right of=B1] {\setValTwo};
  \node (B3) [vertex, below median, right of=B2] {\setValThree};
  \node (B4) [vertex, below median, right of=B3] {\setValFour};
  \node (A5) [vertex, above median, right of=B4] {\setValFive};
  \node (A6) [vertex, added, right of=A5] {$\bm{\infty}$};
  \node (A7) [vertex, added, right of=A6] {$\bm{\infty}$};

  \coordinate (Median) at ($(B4)$);

  \node (MedianLabel) [median label, below of=Median] {New Median};

  \draw [->, label line] (MedianLabel) -- (B4)  {};
\end{tikzpicture}
   \caption{Insertion only with ${\certBound = 2}$}
  \label{fig:WarmUp:MedianPerturbation:Add}
\end{subfigure}

\figSpacer%
\begin{subfigure}{\minipageWidth}
  \centering%
\begin{tikzpicture}[
    node distance = \medianPerturbNodeDist,
  ]

  \node (B1) [empty vertex] {\setValOne};
  \node (B2) [empty vertex, right of=B1] {\setValTwo};

  \node (B3) [vertex, below median, right of=B2] {\setValThree};

  \node (B4) [vertex, below median, right of=B3] {\setValFour};
  \node (A5) [vertex, above median, right of=B4] {\setValFive};
  \node (A6) [empty vertex, right of=A5] {$\infty$};
  \node (A7) [empty vertex, right of=A6] {$\infty$};

  \coordinate (Median) at ($(B4)$);

  \node (MedianLabel) [median label, below of=Median] {New Median};

  \draw [->, label line] (MedianLabel) -- (B4)  {};
\end{tikzpicture}
   \caption{Deletion only with ${\certBound = 2}$}
  \label{fig:WarmUp:MedianPerturbation:Delete}
\end{subfigure}
 
  \caption{%
    \textbf{Unweighted Median Perturbation}:
    (\ref{fig:WarmUp:MedianPerturbation:Initial})
    \blue{Blue} denotes elements in subset~\eqsmall{$\lowerSetVals$}, i.e.,~elements in \eqsmall{$\setVals$} with value at most~${\threshold = \setValThreshold}$.
    \eqsmall{$\upperSetVals$}'s values %
    are \BrickRed{red}.
    Each ``swap'' (\ref{fig:WarmUp:MedianPerturbation:Swap}) switches a \blue{value} in~\eqsmall{$\lowerSetVals$} with an arbitrarily large \green{replacement}.
    Deletions (\ref{fig:WarmUp:MedianPerturbation:Delete}) and \green{insertions} (\ref{fig:WarmUp:MedianPerturbation:Add}) are interchangeable (suppl.\ Lemma~\ref{lem:App:Proofs:AddDeleteSame}), with both yielding the same median value in the same number of modifications made to~\eqsmall{$\setVals$}.
    In Figs.~\ref{fig:WarmUp:MedianPerturbation:Swap} to~\ref{fig:WarmUp:MedianPerturbation:Delete} above, any additional modifications to the set would perturb the median.
  }%
  \label{fig:WarmUp:MedianPerturbation:General}
     \end{minipage}
    \hfill
    \begin{minipage}[t]{0.48\textwidth}
\newcommand{\minipageWidth}{\columnwidth}
\newcommand{\figSpacer}{\vspace{4pt}}
\centering

\begin{subfigure}{\minipageWidth}
  \centering
\newcommand{\unusedWeightVal}{}
\newcommand{\unusedWeightSep}{}

\begin{tikzpicture}[
    node distance = \medianPerturbNodeDist,
  ]

  \node (B1) [vertex, below median] {\setValOne};
  \node (B2) [vertex, below median, right of=B1] {\setValTwo};
  \node (B3) [vertex, below median, right of=B2] {\setValThree};
  \node (B4) [vertex, below median, right of=B3] {\setValFour};
  \node (A5) [vertex, above median, right of=B4] {\setValFive};
  \node (A6) [empty vertex, right of=A5] {$\infty$};
  \node (A7) [empty vertex, right of=A6] {$\infty$};

  \node (B1L) [vertex cost, below of=B1] {$\unusedWeightVal\unusedWeightSep\setCovOne$};
  \node (B2L) [vertex cost, below of=B2] {$\unusedWeightVal\unusedWeightSep\setCovTwo$};
  \node (B3L) [vertex cost, below of=B3] {$\unusedWeightVal\unusedWeightSep\setCovThree$};
  \node (B4L) [vertex cost, below of=B4] {$\unusedWeightVal\unusedWeightSep\setCovFour$};
  \node (A5L) [vertex cost, below of=A5] {$\unusedWeightVal\unusedWeightSep\setCovFive$};

  \node (VLabel) [vals label, left of=B1] {\ValLabel{$\setVals$}};
  \node (RLabel) [cov label, below of=VLabel] {\ValLabel{$\covSet$}};

  \coordinate (Median) at ($(B3)$);
  \node (MedianLabel) [median label, above of=Median] {Initial Median\strut};
  \draw [->, label line] (MedianLabel) -- (B3)  {};

  \coordinate (Threshold) at ($(B4)!0.40!(A5)$);
  \node (ThresholdLabel) [median label, above of=Threshold] {$\threshold$\strut};
  \draw [->, label line] (ThresholdLabel) -- (Threshold)  {};
\end{tikzpicture}

  \caption{Initial sets where ${\setVals = \set{\setValOne, \ldots, \setValFive}}$ and ${\covSet = \set{\setCovOne, \ldots, \setCovFive}}$}
  \label{fig:WarmUp:MedianPerturbation:NonUniform:Initial}
\end{subfigure}

\figSpacer
\begin{subfigure}{\minipageWidth}
  \centering
\newcommand{\UsedCostDef}[2]{$#1$}
\newcommand{\BaseAmountDef}[1]{\tiny(\textnormal{of} $#1$)}

\def\includeCosts{1}

\begin{tikzpicture}[
    node distance = \medianPerturbNodeDist,
  ]

  \node (B1) [empty vertex] {\setValOne};
  \node (B2) [vertex, below median, right of=B1] {\setValTwo};
  \node (B3) [vertex, below median, right of=B2] {\setValThree};
  \node (B4) [vertex, below median, right of=B3] {\setValFour};
  \node (A5) [vertex, above median, right of=B4] {\setValFive};
  \node (A6) [vertex, added, right of=A5] {$\bm{\infty}$};
  \node (A7) [empty vertex, right of=A6] {$\infty$};
  \node (B2L) [vertex cost, below of=B2] {\UsedCostDef{\setCovTwoMOne}{\setCovTwo}};
  \node (B3L) [vertex cost, below of=B3] {\UsedCostDef{0}{\setCovThree}};
  \node (B4L) [vertex cost, below of=B4] {\UsedCostDef{0}{\setCovFour}};
  \node (A5L) [vertex cost, below of=A5] {\UsedCostDef{0}{\setCovFive}};
  \node (A6L) [vertex cost, below of=A6] {\UsedCostDef{\setCovOne}{\setCovOne}};

  \ifdefined\includeCosts
    \node (B2Weight) [base weight text, below of=B2L] {\BaseAmountDef{\setCovTwo}};
    \node (B3Weight) [base weight text, below of=B3L] {\BaseAmountDef{\setCovThree}};
    \node (B4Weight) [base weight text, below of=B4L] {\BaseAmountDef{\setCovFour}};
    \node (A5Weight) [base weight text, below of=A5L] {\BaseAmountDef{\setCovFive}};
    \node (A6Weight) [base weight text, below of=A6L] {\BaseAmountDef{\setCovOne}};
  \fi

  \coordinate (B2B3Mid) at ($(B2L)!0.5!(B3L)$);
  \node (B2B3Text) [vals label] at (B2B3Mid) {$+$};
  \coordinate (B3B4Mid) at ($(B3L)!0.5!(B4L)$);
  \node (B3B4Text) [vals label] at (B3B4Mid) {$+$};
  \coordinate (B4A5Mid) at ($(B4L)!0.5!(A5L)$);
  \node (B4A5Text) [vals label] at (B4A5Mid) {$+$};
  \coordinate (A5A6Mid) at ($(A5L)!0.5!(A6L)$);
  \node (A5A6Text) [vals label] at (A5A6Mid) {$+$};

  \node (VLabel) [vals label, hide text, left of=B1] {\ValLabel{$\setVals$}};
  \node (RLabel) [cov label, below of=VLabel] {${\certBound=}$};

  \coordinate (Median) at ($(B3L)$);
  \ifdefined\includeCosts
    \node (MedianLabel) [median label, below of=B4Weight, node distance=0.7cm] {New Median};
    \draw [->, label line] (MedianLabel) -- (B4Weight)  {};
  \else
    \node (MedianLabel) [median label, below of=B4L, node distance=0.7cm] {New Median};
    \draw [->, label line] (MedianLabel) -- (B4L)  {};
  \fi

\end{tikzpicture}
 
  \caption{Weighted swap paradigm with ${\certBound = \setCovFullBound}$}
  \label{fig:WarmUp:MedianPerturbation:NonUniform:Swap}
\end{subfigure}
 
  \caption{%
    \textbf{Weighted Swap Paradigm}:
    Extension of Fig.~\ref{fig:WarmUp:MedianPerturbation:General} to weighted costs.
  For simplicity and w.l.o.g., let \eqsmall{${\covSet = \set{\setCovOne, \ldots, \setCovFive}}$}, i.e.,~${\forall_{\modIdx} \, \covI = \setScalarI \covValDiff}$
    Fig.~\ref{fig:WarmUp:MedianPerturbation:NonUniform:Initial} is identical to Fig.~\ref{fig:WarmUp:MedianPerturbation:Initial} except below each element~\eqsmall{$\setScalarI$} is its corresponding weight~\eqsmall{$\covI$}.
    Observe \eqsmall{${\swapBound = 1}$} and \eqsmall{${\smallestCovVals = \set{\setCovOne, \setCovTwo}}$}.
    Fig.~\ref{fig:WarmUp:MedianPerturbation:NonUniform:Swap} shows that for \eqsmall{${\certBound = \setCovFullBound}$} (visualized below each element), it is impossible to perturb the median, and any additional weight would be sufficient to swap out \eqsmall{${\setScalarI[2] = \setValTwo}$}.
  }%
  \label{fig:WarmUp:MedianPerturbation:NonUniform}
     \end{minipage}
\end{figure}

\begin{lemma}%
  \label{lem:Warmup:Swap}
  For
  \eqsmall{${\threshold \in \real}$},
  real multiset \eqsmall{$\setVals$} where \eqsmall{${\medFunc{\setVals} \leq \threshold}$} with \eqsmall{${\nModel \defeq \abs{\setVals}}$} odd,
  and
  \eqsmall{${\lowerSetVals \defeq \setbuild{\setScalarI \in \setVals}{\setScalarI \leq \threshold}}$},
  let \eqsmall{$\setValsMod$} be a multiset formed from~\eqsmall{$\setVals$} where elements have been arbitrarily replaced.
  If the number of elements replaced in~\eqsmall{$\setValsMod$} does not exceed %

  {%
    \equationSize%
    \begin{equation}
      \label{eq:Warmup:Swap:Lemma:Bound}
      \certBound
        =
        \nLower
        -
        \halfModels %
      \text{,}
    \end{equation}%
  }%
  \noindent%
  it is guaranteed that \eqsmall{${\medFunc{\setValsMod} \leq \threshold}$}.
\end{lemma}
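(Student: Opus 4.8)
The plan is to reduce the statement about the median to a simple count of the elements at or below $\threshold$. Since $\nModel = \abs{\setVals}$ is odd, the median of any multiset of cardinality $\nModel$ is exactly its $\halfModels$\=/th smallest element, where $\halfModels = \ceil{\frac{\nModel}{2}} = \frac{\nModel+1}{2}$. Consequently, for any such multiset $A$ we have $\medFunc{A} \leq \threshold$ if and only if at least $\halfModels$ of its elements are at most $\threshold$: the $\halfModels$\=/th smallest element is $\leq \threshold$ precisely when the bottom $\halfModels$ order statistics all fall at or below $\threshold$. I would first record this equivalence as the governing criterion, noting that by hypothesis $\medFunc{\setVals} \leq \threshold$ already forces $\nLower \geq \halfModels$.

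Next I would analyze how swaps affect this count. A swap replaces a single element of $\setVals$ with an arbitrary real number, so after any number of swaps the cardinality is unchanged and remains $\nModel$; the order-statistic criterion above therefore still applies verbatim to $\setValsMod$. The crucial monotonicity observation is that each individual swap can decrease the number of elements that are $\leq \threshold$ by at most one: a swap touches exactly one element, so in the worst case it removes an element that was at or below $\threshold$ and inserts one strictly above it, dropping the low-count by one, while any other swap leaves the low-count unchanged or increases it.

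Combining these two facts gives the bound directly. If at most $\certBound = \nLower - \halfModels$ swaps are performed, then the number of elements of $\setValsMod$ lying at or below $\threshold$ is at least $\nLower - \certBound = \halfModels$. By the order-statistic criterion this forces $\medFunc{\setValsMod} \leq \threshold$, which is exactly the claimed guarantee. The only real subtlety — and the step I would state most carefully — is the equivalence between ``$\medFunc{A} \leq \threshold$'' and ``at least $\halfModels$ elements are $\leq \threshold$''; once that characterization is fixed, the adversary's best strategy is transparently to spend each swap deleting one low element, and the arithmetic $\nLower - \certBound = \halfModels$ closes the argument. The matching lower bound implicit in the surrounding text's word ``tightly'' follows from that same strategy being achievable in $\nLower - \halfModels + 1$ swaps, but it is not needed for the stated guarantee.
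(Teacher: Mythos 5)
Your proposal is correct and follows essentially the same route as the paper: both arguments rest on the characterization that, for odd cardinality, $\medFunc{A} \leq \threshold$ exactly when at least $\halfModels$ elements are at most $\threshold$, together with the observation that each swap can reduce that count by at most one. The paper phrases this as an induction over the attacker's optimal strategy of replacing low elements with $\infty$, but the underlying counting argument and the arithmetic $\nLower - \certBound = \halfModels$ are identical to yours.
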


\begin{proofsketch}
  For a set of odd cardinality~\eqsmall{$\nModel$}, the median is always the set's \eqsmall{$\halfModels$}\=/th largest value.
  For \eqsmall{$\setVals$}'s median to be at most~\eqsmall{$\threshold$}, at least \eqsmall{$\halfModels$}~items in~\eqsmall{$\setVals$} cannot exceed~\eqsmall{$\threshold$}.
  Each swap reduces the number of elements not exceeding~\eqsmall{$\threshold$} by at most one.
  If there are \eqsmall{$\nLower$}~elements less than or equal to~\eqsmall{$\threshold$} in~\eqsmall{$\setVals$} and there must be at least \eqsmall{$\halfModels$} such elements to avoid perturbing the median, then at most \eqsmall{${\nLower - \halfModels}$}~swaps can be performed.
\end{proofsketch}

\subsection{Insertion/Deletion Paradigm}%
\label{sec:WarmUp:InsertOrDelete}

For the second paradigm,
\eqsmall{$\setVals$} is no longer fixed cardinality (it may expand or contract), and \eqsmall{$\nModel$}~may be even or odd.
Each modification of~\eqsmall{$\setVals$} takes the form of either a single deletion or insertion but not both.
Figs.~\ref{fig:WarmUp:MedianPerturbation:Add} and~\ref{fig:WarmUp:MedianPerturbation:Delete} visualize median perturbation under insertions and deletions resp.\ with certified robustness~\eqsmall{$\certBound$} following
Lem.~\ref{lem:Warmup:AddDelete}.
Suppl.\ Sec.~\ref{sec:App:AdditionalLemma} proves
that worst-case insertions and deletions perturb a set's median in exactly the same way and thus are interchangeable.
That is why
Figs.~\ref{fig:WarmUp:MedianPerturbation:Add} and~\ref{fig:WarmUp:MedianPerturbation:Delete} have identical certified robustness (\eqsmall{${\certBound = 2}$}).

\begin{lemma}%
  \label{lem:Warmup:AddDelete}
  \newcommand{\setScalarIM}{\setScalarI_{\nModUpper}}
  For \eqsmall{${\threshold \in \real}$} and
  real multiset \eqsmall{$\setVals$} where \eqsmall{${\medFunc{\setVals} \leq \threshold}$},
  define
  \eqsmall{${\nModel \defeq \abs{\setVals}}$}
  and
  \eqsmall{${\lowerSetVals \defeq \setbuild{\setScalarI \in \setVals}{\setScalarI \leq \threshold}}$}.
  Let \eqsmall{$\setValsMod$} be any multiset formed from \eqsmall{$\setVals$} where elements have been arbitrarily deleted and/or inserted.
  Then, if the total number of inserted and deleted elements in~\eqsmall{$\setValsMod$} does not exceed

  {%
    \equationSize%
    \begin{equation}
      \label{eq:Warmup:AddDelete:Lemma:Bound}
      \certBound
      =
        2\nLower
        -
        \nModel
        -
        1
      \text{,}
    \end{equation}%
  }%
  \noindent%
  it is guaranteed that \eqsmall{${\medFunc{\setValsMod} \leq \threshold}$}.
\end{lemma}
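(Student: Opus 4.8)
The plan is to track a single integer potential---the surplus of elements at most $\threshold$ over elements exceeding $\threshold$---and show that a median can be pushed above $\threshold$ only once this surplus drops below one. Call an element of any multiset \emph{low} if it is at most $\threshold$ and \emph{high} otherwise, and for a multiset $\setValsMod$ let $L$ and $U$ be its numbers of low and high elements, so $\abs{\setValsMod} = L + U$. Define the potential $\Phi(\setValsMod) \defeq L - U$. At the start, $\Phi(\setVals) = \nLower - \abs{\upperSetVals} = 2\nLower - \nModel$.

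First I would establish the clean characterization that $\medFunc{\setValsMod} \leq \threshold$ is \emph{guaranteed}, for every choice of the inserted values, as soon as $\Phi(\setValsMod) \geq 1$. This needs both parities of $N' \defeq \abs{\setValsMod}$, and since $N' - \Phi = 2U$ is even, $\Phi$ and $N'$ always share parity. When $N'$ is odd, $\Phi \geq 1$ gives $L \geq \tfrac{N'+1}{2}$, so the $\tfrac{N'+1}{2}$ smallest elements are all low and the middle element---the median---is at most $\threshold$. When $N'$ is even, $\Phi$ is even, so $\Phi \geq 1$ forces $\Phi \geq 2$, i.e. $L \geq \tfrac{N'}{2} + 1$; then both middle elements (sorted positions $\tfrac{N'}{2}$ and $\tfrac{N'}{2}+1$) are low, hence their midpoint is at most $\threshold$. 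This even case is the main obstacle: the midpoint-of-two-middles definition means that a single high middle element could in principle drag the median above $\threshold$, and it is precisely the parity argument that rules this out and collapses both cases to the one inequality $\Phi \geq 1$.

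Next I would bound how fast the adversary can erode $\Phi$. Each inserted element is either low (raising $L$, so $\Phi$ increases by one) or high (raising $U$, so $\Phi$ decreases by one); symmetrically, each deleted element is either low (decreasing $\Phi$ by one) or high (increasing $\Phi$ by one). In every case a single modification changes $\Phi$ by exactly $\pm 1$, so it can decrease $\Phi$ by at most one. Consequently, after any sequence of $k$ insertions and/or deletions, $\Phi(\setValsMod) \geq \Phi(\setVals) - k = 2\nLower - \nModel - k$.

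Finally I would combine the two pieces: if $k \leq \certBound = 2\nLower - \nModel - 1$, then $\Phi(\setValsMod) \geq 2\nLower - \nModel - k \geq 1$, so by the first step $\medFunc{\setValsMod} \leq \threshold$, as claimed. Note that this argument treats insertions and deletions uniformly through their identical $\pm 1$ effect on $\Phi$---the counting manifestation of their interchangeability asserted in Lemma~\ref{lem:App:Proofs:AddDeleteSame}---so no separate analysis of mixed edit sequences is required.
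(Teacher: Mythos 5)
Your proof is correct, and it takes a genuinely different route from the paper's. The paper's proof orders $\setVals$, tracks the sorted index of the median, invokes supplemental Lemma~\ref{lem:App:Proofs:AddDeleteSame} to reduce worst-case deletions to worst-case insertions, and then argues that each insertion shifts the median's index by $\frac{1}{2}$ until it reaches the index $\nLower$ of the largest element of $\lowerSetVals$; the even-cardinality starting case is declared ``essentially identical'' and omitted. Your potential $\Phi = L - U$ replaces all of this with a single invariant: every insertion or deletion, of whatever value, moves $\Phi$ by exactly $\pm 1$, and $\Phi \geq 1$ forces $\medFunc{\setValsMod} \leq \threshold$. This buys three things. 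First, it handles arbitrary interleaved sequences of insertions and deletions without the interchangeability lemma and without identifying an adversary's ``optimal strategy'' --- you bound the damage of every modification rather than exhibiting a worst one, which is the more airtight formulation of the same counting. Second, the parity observation $\abs{\setValsMod} \equiv \Phi \pmod 2$ makes the even-cardinality case explicit and shows precisely why the midpoint-of-two-middles definition of median cannot be exploited, closing the gap the paper waves away. Third, the argument is self-contained. What the paper's route buys in exchange is the standalone interchangeability lemma, which it reuses elsewhere (e.g., to explain why Figs.~\ref{fig:WarmUp:MedianPerturbation:Add} and~\ref{fig:WarmUp:MedianPerturbation:Delete} certify the same $\certBound$), and a more geometric picture of the median sliding along the sorted set. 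Both arguments arrive at the same tight count $\certBound = 2\nLower - \nModel - 1$.
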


\noindent
Eq.~\eqref{eq:Warmup:Swap:Lemma:Bound}'s bound may be non-tight by~1.
We did this for consistency with other ideas.
See suppl.\ Sec.~\ref{sec:App:Tightness:RegionBasedIBL} for details.

Comparing Eqs.~\eqref{eq:Warmup:Swap:Lemma:Bound} \&~\eqref{eq:Warmup:AddDelete:Lemma:Bound},
the insertion/deletion paradigm's robustness~\eqsmall{$\certBound$} is about twice that of the unweighted swap paradigm.
Intuitively, this is because one swap entails two separate operations -- both an insertion and a deletion.

\subsection{Weighted Swap Paradigm}%
\label{sec:Warmup:NonUniformSwap}

The two median-perturbation paradigms above assume that each modification to~\eqsmall{$\setVals$} has equivalent cost.
Consider a generalized swap paradigm where each value \eqsmall{${\setScalarI \in \setVals}$} has an associated weight/cost \eqsmall{${\covI \in \nats}$}.
We seek to tightly bound the budget an attacker could spend with it remaining guaranteed that \eqsmall{${\decFunc{\xTe} \leq \threshold}$}; we still denote this budget~\eqsmall{$\certBound$}.

Given \eqsmall{$\lowerSetVals$} as above, \eqsmall{${\lowerCovVals \defeq \setbuild{\covI}{\setScalarI \in \lowerSetVals}}$} contains \eqsmall{$\lowerSetVals$}'s corresponding weights/costs.
Define \eqsmall{${\swapBound \defeq \nLower - \halfModels}$},
and let multiset \eqsmall{$\smallestCovValsMinusOne$} be the \eqsmall{$\swapBound$} smallest values in~\eqsmall{$\lowerCovVals$} (i.e.,~\eqsmall{${\abs{\smallestCovValsMinusOne} = \swapBound}$}).
Directly applying Lem.~\ref{lem:Warmup:Swap}, an obvious but non-optimal bound is

{%
  \equationSize%
  \begin{equation} %
    \label{eq:WarmUp:NonUniformSwap:Naive}%
    \certBound %
      \geq %
      \sum_{\covSym \in \smallestCovValsMinusOne}%
        \covSym%
      \text{.}
  \end{equation}%
}%

Recall Fig.~\ref{fig:WarmUp:MedianPerturbation:Initial} where
\eqsmall{${\setVals = \set{\setValOne, \ldots, \setValFive}}$}
and
\eqsmall{${\threshold = \setValThreshold}$}.
Consider its weighted extension where for simplicity and w.l.o.g.\ \eqsmall{${\covSet = \set{\setCovOne, \ldots, \setCovFive}}$}, i.e., \eqsmall{${\forall_{\modIdx} \, \covI = \setScalarI \covValDiff}$}.
Eq.~\eqref{eq:WarmUp:NonUniformSwap:Naive} certifies robustness \eqsmall{${\certBound = \setCovBaseBound}$} for this example. %
However, Fig.~\ref{fig:WarmUp:MedianPerturbation:NonUniform:Swap} shows \eqsmall{${\certBound = \setCovFullBound}$} since the budget of the second
(i.e.,
\eqsmall{${(\swapBound + 1)}$}\=/th)
largest value in~\eqsmall{$\lowerCovVals$}
can be partially used. %
Lemma~\ref{lem:Warmup:NonUniformSwap} formalizes this insight into a tight bound for median perturbation under weighted swaps.

\begin{lemma}%
  \label{lem:Warmup:NonUniformSwap}%
  For \eqsmall{${\threshold \in \real}$}
  and real multiset \eqsmall{$\setVals$} where \eqsmall{${\medFunc{\setVals} \leq \threshold}$},
  let \eqsmall{$\covSet$} be \eqsmall{$\setVals$}'s corresponding integral weight multiset where \eqsmall{${\nModel \defeq \abs{\setVals} = \abs{\covSet}}$} is fixed and odd.
  Define
  \eqsmall{${\lowerCovVals \defeq \setbuild{\covI \in \covSet}{\setScalarI \leq \threshold}}$},
  and let
  \eqsmall{$\smallestCovVals$}
  be the smallest \eqsmall{${(\abs{\setVals} - \halfModels + 1)}$} values in \eqsmall{$\lowerCovVals$}.
  Then the cost to perturb \eqsmall{$\setVals$}'s median exceeds

  {%
    \equationSize%
    \begin{equation}%
      \label{eq:WarmUp:NonUniformSwap:Bound}%
      \certBound
      =
        \sum_{\covSym \in \smallestCovVals}
          \covSym
        -
        1
      \text{.}
    \end{equation}%
  }%
\end{lemma}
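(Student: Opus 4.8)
The plan is to reduce this weighted question to the unweighted swap count of Lemma~\ref{lem:Warmup:Swap} and then minimize cost by a greedy selection of the cheapest swaps. First I would characterize exactly when the median is perturbed. Because $\nModel$ is odd, $\medFunc{\setValsMod}$ is the $\halfModels$-th largest element of the modified set, so $\medFunc{\setValsMod} > \threshold$ holds precisely when fewer than $\halfModels$ of its elements are at most~$\threshold$; equivalently, the number of below-threshold elements must be driven down from its initial value~$\nLower$ to at most $\halfModels - 1$. Since each swap replaces a single value, it lowers this count by at most one, and only when it targets an element of $\lowerSetVals$ and replaces it with a value exceeding~$\threshold$. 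Swaps applied to above-threshold elements cannot reduce the count and merely waste budget. Hence every successful attack must swap out at least $\nLower - (\halfModels - 1) = \swapBound + 1$ distinct elements of $\lowerSetVals$, exactly the count predicted by Lemma~\ref{lem:Warmup:Swap} plus one.

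Next I would lower-bound the cost of such an attack. Swapping out the element carrying weight $\covI$ costs $\covI$, so the minimum-cost way to eliminate $\swapBound + 1$ below-threshold elements is to target those with the $\swapBound + 1$ smallest weights in $\lowerCovVals$; a standard exchange argument—replacing any selected element by an unselected one of no-larger weight never increases the total—confirms optimality. That minimum is exactly $\sum_{\covSym \in \smallestCovVals} \covSym$, since $\smallestCovVals$ collects precisely those $\swapBound + 1 = \nLower - \halfModels + 1$ smallest weights (which, in the running example, are $\set{\setCovOne, \setCovTwo}$, extending the size-$\swapBound$ multiset $\smallestCovValsMinusOne$ by one element). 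Because the weights are integral, the minimum cost equalling $\sum_{\covSym \in \smallestCovVals} \covSym$ is the same statement as the cost to perturb exceeding $\sum_{\covSym \in \smallestCovVals} \covSym - 1 = \certBound$, which is the claimed bound.

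Finally I would verify tightness so that $\certBound$ is not conservative: spending exactly $\sum_{\covSym \in \smallestCovVals} \covSym$ does perturb the median, by swapping each of the $\swapBound + 1$ cheapest below-threshold elements to an arbitrarily large value, which leaves only $\halfModels - 1$ elements at most~$\threshold$ and forces $\medFunc{\setValsMod} > \threshold$. I expect the main obstacle to be the lower-bound direction, namely pinning down rigorously that an optimal adversary spends budget only on below-threshold swaps and on no more than $\swapBound + 1$ of them, so that the minimum cost is genuinely a sum over a size-$(\swapBound + 1)$ subset of $\lowerCovVals$ and cannot be undercut by some indirect sequence of replacements; once this structural fact is inherited from the counting argument of Lemma~\ref{lem:Warmup:Swap}, the remaining smallest-$k$-weights optimization is routine.
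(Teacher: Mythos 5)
Your proposal is correct and follows essentially the same route as the paper's proof: reduce to Lemma~\ref{lem:Warmup:Swap}'s counting argument to show that any successful attack must swap out at least ${\swapBound + 1}$ elements of $\lowerSetVals$, observe that the cheapest such subset costs $\sum_{\covSym \in \smallestCovVals} \covSym$, and use integrality of the weights to restate this as the cost exceeding that sum minus one, with tightness verified by exhibiting the greedy attack. The only cosmetic difference is that the paper phrases the final step as a leftover partial budget of ${\covPartial - 1}$ being insufficient to complete another swap, whereas you phrase it as an exchange-argument minimization followed by the integrality conversion; the two are equivalent.
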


\newcommand{\setValsClass}{\setValsZO}

\section{Reducing Regression to \revised{Voting-Based} Binary Classification}%
\label{sec:RegressionToClassification}

We now show how methods used to certify binary classification can be adapted to certify regression.
During inference, all voting-based certified methods (both classifiers and regressors)
follow the same basic procedure.

First, the model generates a multiset of votes, which for binary classification we denote \eqsmall{$\setValsClass$}.
Certified classifiers only differ in how~\eqsmall{$\setValsClass$} is constructed and in the consequences that construction has on certifying~\eqsmall{$\certBound$}.
For example, \eqsmall{$\setValsClass$} could be a \knn{} neighborhood or the submodel predictions in an ensemble.
Nonetheless,
for binary classification, \eqsmall{$\setValsClass$} contains at most two unique values (\eqsmall{${+1}$} and \eqsmall{${-1}$}), meaning \eqsmall{$\setValsClass$}'s majority label is also its median.
In other words, \eqsmall{${\decTe = \medFunc{\setValsZO}}$}.

To certify robustness~\eqsmall{$\certBound$}, existing methods rely on a function we term the \keyword{robustness certifier}.
The function's inputs are votes \eqsmall{$\setValsZO$} and optionally weights/costs \eqsmall{$\covSet$}.
Implicitly, the certifier knows how the votes were generated and how changes to training set~\eqsmall{$\trainSet$} could affect \eqsmall{$\setValsZO$}.
Generally, a simple procedure to construct \eqsmall{$\setValsZO$} entails a simple certifier, and complex construction implies a complex certifier.
Fundamentally,
for voting-based, binary classification, robustness certification always reduces to the same core idea.
If \eqsmall{${\decTe = \medFunc{\setValsZO}}$}, then for the runner-up label to overtake the majority label, \eqsmall{$\setValsZO$}'s median must be perturbed.
Therefore, \textit{certifying voting-based, binary classification is simply certifying median perturbation}.

To generalize a voting-based, certified classifier to certify regression, two primary modifications are required; we visualize our regression to classification reduction in Fig.~\ref{fig:RegressionToClassification}.
\begin{figure}[t]
  \centering

\begin{tikzpicture}[
  ]

  \node (MedBox) [certified box] {Median};

  \node (SetValsNode) [
    certified reduction input label,
    left of=MedBox,
    node distance={\InputVarMainSpacer},
  ] {$\setVals$~};
  \node (SetValsBranchNode) [
    certified reduction branch node,
  ] at ($(MedBox)!\BranchNodeFrac!(SetValsNode)$) {};

  \node (CertifiedBox) [
    below of=MedBox,
    node distance={\MedBoxMainSpacer},
    certified box,
    align=center,
  ] {Robustness \\ Certifier};

  \draw [
    certified reduction surround box,
  ]  ($(SetValsBranchNode)+(-0.70cm,0.50cm)$) rectangle ($(CertifiedBox.south east)+(0.4cm,-0.19cm)$) {};

  \node (DecFuncLabel) [
    certified reduction output label,
    right of=MedBox,
    node distance={\DecFuncMainSpacer},
  ] {$\decFunc{\xTe}$};
  \draw [->, certified reduct line] (MedBox) -- (DecFuncLabel)  {};

  \node (RobustnessLabel) [
    certified reduction output label,
    below of=DecFuncLabel,
    node distance={\MedBoxMainSpacer},
  ]
  {$\certBound$};

  \draw [->, certified reduct line] (CertifiedBox) -- (RobustnessLabel)  {};
  \coordinate (CertClassIn1) at ($(CertifiedBox.north west)!0.20!(CertifiedBox.south west)$);
  \coordinate (CertClassIn2) at ($(CertifiedBox.south west)!0.20!(CertifiedBox.north west)$);

  \draw [->, certified reduct line] (SetValsNode) -- (MedBox)  {};

  \node (WeightsVals) [
    certified reduction input label,
  ] at (SetValsNode |- CertClassIn2) {$\covSet$~};
  \draw [->, dashed, certified reduct line] (WeightsVals) -- (CertClassIn2)  {};

  \node (SignumBox) [
    signum box,
    left of={CertClassIn1},
  ] {$\sign(\cdot)$};
  \draw [->, certified reduct line] (SignumBox) -- (CertClassIn1) node[pos=0.45, white label, yshift=7pt] {$\setValsZO$};

  \node (SubtractionNode) [
    certified reduction subtraction,
  ]  at (SetValsBranchNode |- SignumBox.west) {$\boldsymbol{-}$};
  \draw [->, certified reduct line] (SetValsBranchNode) -- (SubtractionNode)  {};
  \draw [->, certified reduct line] (SubtractionNode) -- (SignumBox.west)  {};

  \node (ThresholdVal) [
    certified reduction input label,
  ] at (SetValsNode |- SubtractionNode) {$\threshold$~};
  \draw [->, certified reduct line] (ThresholdVal) -- (SubtractionNode)  {};

  \node (InputVarLabel) [
    certified reduction var descriptor,
    above of=SetValsNode,
    xshift=7.5pt,
  ] {Inputs};

  \node (OutputVarLabel) [
    certified reduction var descriptor,
    above of=DecFuncLabel,
    xshift=-5.0pt,
  ] {Outputs};

  \node (TopDescription)[
    certified reduction var descriptor,
    font={\ReductionSymbolFontSize\bfseries},
  ] at ($(InputVarLabel.west)!0.47!(OutputVarLabel.east)$) {Certified Regressor} ;
\end{tikzpicture}
   \caption{%
    \textbf{Certified Regression to Certified Classification Reduction}:
    For \eqsmall{${\xTe \in \domainX}$}, the decision function is \eqsmall{${\decFunc{\xTe} \defeq \medFunc{\setVals}}$} -- just like \revised{voting-based} certified classification.
    Certified regression binarizes \eqsmall{$\setVals$} into \eqsmall{$\setValsZO$}, which is used by the robustness certifier
    (optionally with weights~\eqsmall{$\covSet$}) to determine~\eqsmall{$\certBound$}.%
  }
  \label{fig:RegressionToClassification}
\end{figure}
 
First, the model is modified from generating binary votes~\eqsmall{$\setValsZO$} to generating real-valued ones denoted~\eqsmall{$\setVals$}.
The changes necessary to make this switch are specific to the underlying certified classifier.
In some cases, no change is required~\citep{Jia:2022:CertifiedKNN}; for others, ensemble submodel classifiers are simply replaced with submodel regressors~\citep{Levine:2021:DPA,Wang:2022:DeterministicAggregation}.

The second modification is more subtle.
If \eqsmall{$\setVals$} is real-valued, a robustness certifier expecting binary votes cannot be directly applied.
That is where \eqsmall{${\threshold \in \real}$} fits in; it partitions \eqsmall{$\setVals$} into two subsets:
\eqsmall{${\lowerSetVals}$} containing all ``votes'' at most~\eqsmall{$\threshold$} and \eqsmall{$\upperSetVals$} containing all ``votes'' exceeding~\eqsmall{$\threshold$}.
We can think
of these subsets as two different classes where if \eqsmall{${\decFunc{\xTe} \leq \threshold}$}, \eqsmall{$\lowerSetVals$} is the majority class and \eqsmall{$\upperSetVals$} runner-up.
For any prediction
\eqsmall{${\decFunc{\xTe} \defeq \medFunc{\setVals}}$},
the robustness certifier's output~\eqsmall{$\certBound$} equals the number of training set modifications that can be made without ever perturbing \eqsmall{$\medFunc{\setVals}$} beyond~\eqsmall{$\threshold$}.

Lemma~\ref{lem:RegressionToClassification} formalizes the connection between real-valued and binarized robustness.
This symmetry in robustness derives from both tasks' (implicit) shared reliance on median.

\begin{lemma}
  \label{lem:RegressionToClassification}
  \newcommand{\thresholdAlt}{\threshold'}
  \newcommand{\setValsAlt}{\setVals'}
  \newcommand{\covSetAlt}{\covSet}
  For \eqsmall{${\thresholdAlt \in \real}$}
  and
  real multiset \eqsmall{$\setValsAlt$} where \eqsmall{${\medFunc{\setValsAlt} \leq \thresholdAlt}$},
  let \eqsmall{${\setValsZO \defeq \setbuild{\sgnp{\setScalarI - \thresholdAlt}}{\setScalarI \in \setValsAlt}}$}.
  Let \eqsmall{$\covSetAlt$} be the corresponding integral weight multiset of \eqsmall{$\setValsAlt$} where \eqsmall{${\abs{\setValsAlt} = \abs{\covSetAlt}}$}.
  Then, under the (un)weighted swap and insertion/deletion paradigms,
  both
    \eqsmall{${\setVals = \setValsAlt}$} with \eqsmall{${\threshold = \thresholdAlt}$}
    and
    \eqsmall{${\setVals=\setValsZO}$} with \eqsmall{${\threshold = 0}$}
   have equivalent robustness~\eqsmall{$\certBound$}.
\end{lemma}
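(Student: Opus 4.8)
The plan is to observe that each of the three certification bounds established earlier---Eq.~\eqref{eq:Warmup:Swap:Lemma:Bound} for the unweighted swap paradigm, Eq.~\eqref{eq:Warmup:AddDelete:Lemma:Bound} for insertion/deletion, and Eq.~\eqref{eq:WarmUp:NonUniformSwap:Bound} for the weighted swap paradigm---depends on the pair (multiset, threshold) \emph{only} through a handful of order-theoretic invariants: the total cardinality $\nModel$, the number of elements lying at or below the threshold, and (in the weighted case) the weights attached to exactly those elements. Since binarization by $\sgnp{\cdot - \threshold'}$ is a monotone map, I would show it preserves each of these invariants while simultaneously resetting the threshold to $0$, so that every bound evaluates to the identical number in both configurations. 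Because $\certBound$ is, in each paradigm, literally the value of that bound, the two settings then share the same robustness.

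First I would pin down the action of the sign map. By the convention $\sgnp{a} = +1$ iff $a > 0$, the value $\sgnp{\setScalarI - \threshold'}$ equals $-1$ precisely when $\setScalarI \leq \threshold'$ and $+1$ otherwise. Hence the elements of $\setValsZO$ equal to $-1$ are in index-correspondence with the low subset of $\setVals'$ under threshold $\threshold'$, and since $-1 \leq 0 < +1$, those same elements form exactly the low subset of $\setValsZO$ under threshold $0$. This immediately gives $\abs{\setValsZO} = \abs{\setVals'} = \nModel$ and an identical low-subset size, which we continue to denote $\nLower$, in both configurations. I would also confirm the hypothesis $\medFunc{\setValsZO} \leq 0$ transfers: whether a multiset's median lies at or below its threshold is governed entirely by how the low-subset count compares to the paradigm's majority requirement, and that count is exactly what binarization leaves fixed.

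With the invariants matched, the conclusion follows by direct substitution. Eq.~\eqref{eq:Warmup:Swap:Lemma:Bound} reads $\certBound = \nLower - \halfModels$ and Eq.~\eqref{eq:Warmup:AddDelete:Lemma:Bound} reads $\certBound = 2\nLower - \nModel - 1$, both functions of $\nLower$ and $\nModel$ alone, hence equal across the two settings. For the weighted paradigm, the weights $\covSet$ are attached by index and are untouched by the sign map, so $\lowerCovVals$---and therefore the smallest-weight submultiset $\smallestCovVals$---is literally the same multiset in both configurations; thus Eq.~\eqref{eq:WarmUp:NonUniformSwap:Bound}, $\certBound = \sum_{\covSym \in \smallestCovVals} \covSym - 1$, also agrees. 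The step I expect to be most delicate is the boundary bookkeeping at $\setScalarI = \threshold'$: the $\leq$ in the definition of the low subset must be matched against the strict $>$ hidden in the sign convention, and for the weighted case one must verify that ``smallest weights among the low elements'' selects the \emph{identical} submultiset before and after binarization rather than merely an equinumerous one. Once this alignment is checked, no re-derivation of the swap/insertion/deletion dynamics is needed, since the closed-form bounds already encapsulate them.
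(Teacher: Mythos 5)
Your proposal is correct and follows essentially the same route as the paper's own proof: both argue that each paradigm's closed-form bound depends only on $\nModel$, $\nLower$, $\halfModels$, and (in the weighted case) the weights of the low elements, all of which are preserved by binarization. Your version is in fact slightly more careful than the paper's, since you explicitly check the $\leq$ versus strict-$>$ alignment at the threshold and verify that $\smallestCovVals$ is the identical submultiset, not merely an equinumerous one.
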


\noindent
By binarizing \eqsmall{$\setVals$},
Lem.~\ref{lem:RegressionToClassification} enables us to directly \textit{reuse robustness certifiers from binary classification to certify regression}.

Our reduction to certified classification entails two primary benefits.
First, it allows us to repurpose for regression the diverse set of voting-based, certified classifiers that already exist~\citep{Jia:2022:CertifiedKNN,Levine:2021:DPA,Wang:2022:DeterministicAggregation}.
Moreover,
as new voting-based, certified classifiers are proposed in the future, these yet undiscovered methods can also be reformulated as certified regressors.

\revised{%
  Although this work focuses on certified poisoning defenses, other types of certified defenses also rely on voting-based schemes, including randomized smoothing methods for evasion attacks~\citep{Levine:2020:RandomizedAblation,Jia:2022:AlmostTightL0}.
  Our certified regression to certified classification reduction can also be applied to these other types of voting-based defenses as well.
}

As mentioned above, the procedure to construct the set of votes and to certify robustness is unique to each classifier.
The next three sections describe how to certify regression using progressively more complex models, with each method based on a reduction to an existing \revised{voting-based} certified classifier.%

\section{Certified Instance-Based Regression}%
\label{sec:CertifiedKNN}

For the first method, recall from Sec.~\ref{sec:RelatedWork} that \citet{Jia:2022:CertifiedKNN} propose a state-of-the-art certified classifier based on \knn{}.
Nearest-neighbor methods are a specific type of
\keyword{instance-based learner}~(IBL), where predictions are made using memorized training instances~\citep{Aha:1991:InstanceBasedLearning}.
IBLs generally rely on the intuition that instances close together in \keyword{feature space}~(\eqsmall{$\domainX$}) have similar target values~(\eqsmall{$\domainY$}).
Specifically, IBLs search for stored training instances most similar to $\xTe$ and derive the model prediction from these retrieved \keyword{neighbors}.
We partition IBLs into two subcategories:
\begin{itemize}
  \item \keyword{Fixed-population neighborhood} methods specify the exact number of ``neighbors'' when making a prediction.
  \item \keyword{Region-based neighborhood} methods define a neighborhood as all training instances in a specific feature-space region.
\end{itemize}
\noindent
These two subcategories calculate certified robustness differently and are discussed separately below.

All IBLs considered here use the same decision rule.
Formally, given \eqsmall{${\xTe \in \domainX}$} and real multiset neighborhood~\eqsmall{$\neighTe$} returned by the IBL, the model's prediction is the neighborhood's median, %
i.e., \eqsmall{${\decFunc{\xTe} \defeq \medFunc{\neighTe}}$}.
Recall that our goal is to certify that if at most~$\certBound$ arbitrary insertions or deletions are made to~$\trainSet$, it is guaranteed that %
\eqsmall{${\decFunc{\xTe} \leq \threshold}$}.

\subsection{Fixed-Population Neighborhood}%
\label{sec:CertifiedKNN:FixedPopulation}

As the name indicates, fixed-population neighborhood IBLs make predictions using a fixed number of training instances, i.e.,~\eqsmall{${\forall_{\xTe} \, \nModel = \abs{\neighTe}}$}.
$\kNeigh$\=/nearest neighbors is perhaps the best-known fixed-population method.
Traditionally, \knn{} returns the neighborhood's mean value.
For clarity, we will refer to the version of \knn{} that uses the neighborhood's median value as \keyword{$\kNeigh$\=/Nearest Neighbors Median}, or simply \knnM{}.

Our threat model allows the adversary to insert arbitrary training instances and/or delete any existing instances.
Fig.~\ref{fig:CertifiedKNN:SwapPerturbed} visualizes an example attack on a \knnM{} regressor.
Since $\kNeigh$~is fixed, inserting a new instance~(\showAddedKnnSquare) into the neighborhood causes one neighborhood instance to be ejected; in other words, insertions are simply instance swaps.
As a worst-case, we assume that the ejected element equals at most threshold~$\threshold$, meaning each insertion always maximally increases the neighborhood's median. %
Under this simplifying assumption, adversarial insertions are always at least as harmful as deletions for fixed-population neighborhood IBLs.

Neighborhood size~$\kNeigh$ is a user-specified hyperparameter so let $\kNeigh$ be odd-valued.
Therefore, these fixed-population neighborhood IBL regressors satisfy all of the criteria of median perturbation under the unweighted swap paradigm where ${\nModel = \kNeigh}$.
Theorem~\ref{thm:CertifiedKNN:FixedSize} then follows directly from Lemma~\ref{lem:Warmup:Swap}.

\begin{theorem}
  \label{thm:CertifiedKNN:FixedSize}
  Let \eqsmall{$\dec$}~be an instance-based regressor trained on set~\eqsmall{$\trainSet$}.
  Given \eqsmall{${\threshold \in \real}$}
  and \eqsmall{${\xTe \in \domainX}$},
  let
  real multiset \eqsmall{$\neighTe$}
  be \eqsmall{$\xTe$}'s neighborhood under \eqsmall{$\dec$}
  with fixed, odd-valued cardinality
  \eqsmall{${\nModel \defeq \abs{\neighTe}}$}.
  Define \eqsmall{${\lowNeigh \defeq \setbuild{\y \in \neighTe}{\y \leq \threshold}}$}.
  Given \eqsmall{${\decFunc{\xTe} \defeq \medFunc{\neighTe} \leq \threshold}$},
  then if model~$\dec$ is trained on a modified \eqsmall{$\trainSet$} where the total number of inserted and deleted training instances does not exceed

  {%
    \equationSize%
    \begin{equation}%
      \label{eq:CertifiedKNN:FixedSize}
      \certBound
      =
        \abs{\lowNeigh}
        -
        \halfModels
      \text{,}%
    \end{equation}%
  }%
  \noindent%
  it is guaranteed that \eqsmall{${\decFunc{\xTe} \leq \threshold}$}.
\end{theorem}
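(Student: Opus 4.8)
The plan is to reduce the statement directly to the unweighted swap paradigm of Lemma~\ref{lem:Warmup:Swap}, identifying the multiset $\setVals$ with the neighborhood's response values $\neighTe$. First I would observe that all the hypotheses of Lemma~\ref{lem:Warmup:Swap} transfer verbatim: $\neighTe$ is a real multiset of fixed, odd cardinality $\nModel$, its median satisfies $\medFunc{\neighTe} = \decFunc{\xTe} \leq \threshold$ by assumption, and $\lowNeigh$ is exactly the subset of values at most $\threshold$ (playing the role of $\lowerSetVals$). Hence the only genuine work is to show that the $\certBound$-modification budget on $\trainSet$ translates into at most $\certBound$ swaps on the neighborhood multiset $\neighTe$.

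The key structural claim is that every single modification to $\trainSet$ perturbs $\neighTe$ by at most one swap, with insertions being the worst case. I would split into two cases. For an insertion, because $\nModel = \abs{\neighTe}$ is fixed, a newly inserted instance either falls outside the neighborhood (leaving $\neighTe$ unchanged) or enters it, in which case exactly one incumbent neighbor is ejected to keep the population at $\nModel$; since the adversary controls the inserted value, this is precisely a swap in which an element may be replaced by an arbitrary real. For a deletion, removing a current neighbor promotes the next-nearest existing training instance into the neighborhood; this again alters $\neighTe$ by a single element, but the promoted value is a genuine, non-adversarial response rather than an arbitrary replacement, so at worst it matches an insertion swap and is never more harmful. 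Either way one modification yields at most one swap, matching the informal remark preceding the theorem.

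Consequently, a budget of $\certBound$ insertions and/or deletions induces at most $\certBound$ swaps on $\neighTe$. I would then invoke Lemma~\ref{lem:Warmup:Swap} with $\setVals = \neighTe$ and with $\setValsMod$ taken to be the post-attack neighborhood: since the number of swaps does not exceed $\certBound = \abs{\lowNeigh} - \halfModels$, the lemma guarantees $\medFunc{\setValsMod} \leq \threshold$, i.e.\ $\decFunc{\xTe} \leq \threshold$ after the attack, which is the desired conclusion.

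The main obstacle I anticipate is the bookkeeping in the swap-counting step: one must verify that no single modification changes two neighborhood memberships at once (deleting a non-neighbor leaves $\neighTe$ untouched; an insertion that enters the neighborhood ejects exactly one incumbent and triggers no further reshuffle), and that chaining several modifications never produces more than one swap per modification. The cleanest way to secure this is an induction over the sequence of modifications, maintaining the invariant that $\abs{\neighTe} = \nModel$ is preserved and neighborhood membership changes by at most one element per step; establishing this invariant is the crux that licenses the appeal to Lemma~\ref{lem:Warmup:Swap}.
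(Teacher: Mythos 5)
Your proposal is correct and follows essentially the same route as the paper's own proof: both reduce the theorem to Lemma~\ref{lem:Warmup:Swap}'s unweighted swap paradigm by identifying $\setVals$ with $\neighTe$ and observing that each training-set insertion or deletion alters the fixed-cardinality neighborhood by at most one swap, with insertions being the worst case. Your case analysis and induction over the modification sequence merely make explicit the one-modification-equals-one-swap bookkeeping that the paper asserts as its third criterion.
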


We denote \keyword{\knnM{} certified regression} as \knnMethod{}.
Due to space, we defer to suppl.\ Sec.~\ref{sec:App:JiaVsKnnCR}
the proof that when under binary classification, \knnMethod{} and \citepos{Jia:2022:CertifiedKNN} \knn{} classifier yield identical robustness guarantees.%

\subsection{Region-Based Neighborhood}%
\label{sec:CertifiedKNN:RegionBased}

Neighborhood membership does not need to be tied to the number of neighbors.
Rather, a neighborhood can be defined by specific criteria, with all stored training instances satisfying those criteria included in the neighborhood.
For instance, \keyword{radius nearest neighbors}~(\rnn{}) defines $\xTe$'s neighborhood as all training instances within a given distance of $\xTe$~\citep{Bentley:1975:RadiusNearestNeighbor}.
Alternatively, \keyword{fully-random decision trees} recursively partition the feature space into disjoint regions, and a neighborhood is defined as all instances within the same feature region~\citep{Geurts:2006:ExtremelyRandomizedTrees}.%

Fig.~\ref{fig:CertifiedKNN:AddDeletePerturbed} visualizes an attack on an \rnn{}\=/median learner,
where the adversary inserts malicious instances~(\showAddedKnnSquare) to perturb the median prediction.
Unlike fixed-population neighborhoods, the inserted instances do not cause any existing training instances to be ejected.
Rather, inserting and deleting training instances are distinct operations.

It is easy to see that region-based IBLs with median as the decision operator follow Sec.~\ref{sec:WarmUp:InsertOrDelete}'s insertion/deletion paradigm.
Theorem~\ref{thm:CertifiedKNN:DynamicSize} then follows directly from Lemma~\ref{lem:Warmup:AddDelete}.

\begin{theorem}
  \label{thm:CertifiedKNN:DynamicSize}
  Let \eqsmall{$\dec$} be an instance-based regressor trained on~\eqsmall{$\trainSet$} that partitions~\eqsmall{$\domainX$} into disjoint regions.
  Given \eqsmall{${\xTe \in \domainX}$},
  let real multiset \eqsmall{$\neighTe$}
  be \eqsmall{$\xTe$}'s neighborhood under~\eqsmall{$\dec$} where
  \eqsmall{${\nModel \defeq \abs{\neighTe}}$}.
  For \eqsmall{${\threshold \in \real}$},
  define
  \eqsmall{${\lowNeigh \defeq \setbuild{\y \in \neighTe}{\y \leq \threshold}}$}.
  If model~\eqsmall{$\dec$} is trained on a modified \eqsmall{$\trainSet$} where the total number of inserted and deleted training instances does not exceed

  {%
    \equationSize%
    \begin{equation}%
      \label{eq:CertifiedKNN:DynamicSize}
      \certBound
      =
        2\lvert{\lowNeigh}\rvert
        -
        \nModel
        -
        1
      \text{,}%
    \end{equation}%
  }%
  \noindent%
  it is guaranteed that \eqsmall{${\decFunc{\xTe} \leq \threshold}$}.
\end{theorem}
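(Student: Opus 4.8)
The plan is to reduce Theorem~\ref{thm:CertifiedKNN:DynamicSize} directly to Lemma~\ref{lem:Warmup:AddDelete} by identifying the neighborhood multiset $\neighTe$ with the multiset $\setVals$ of that lemma, and the user-specified $\threshold$ with the lemma's threshold. Since the decision rule is $\decFunc{\xTe} \defeq \medFunc{\neighTe}$, certifying the one-sided bound $\decFunc{\xTe} \leq \threshold$ is literally certifying that $\neighTe$'s median cannot be pushed above $\threshold$. The given hypothesis $\medFunc{\neighTe} \leq \threshold$ supplies the lemma's precondition $\medFunc{\setVals} \leq \threshold$, while $\lowNeigh$ plays the role of $\lowerSetVals$, with $\abs{\lowNeigh}$ and $\abs{\neighTe}$ taking the roles of $\nLower$ and $\nModel$.

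The crux is to show that each training-set modification induces \emph{at most one} insertion/deletion in the multiset $\neighTe$, so that the insertion/deletion paradigm of Sec.~\ref{sec:WarmUp:InsertOrDelete} genuinely applies. First I would invoke the hypothesis that $\dec$ partitions $\domainX$ into disjoint regions whose geometry is fixed \emph{independently} of which training instances populate them (as holds for \rnn{}, whose region is the fixed ball around $\xTe$, and for fully-random trees, whose splits ignore the target values). Consequently the single region containing $\xTe$ is invariant under any modification of $\trainSet$, and $\neighTe$ is exactly the multiset of target values of the training instances lying in that region. Inserting a training instance therefore adds one element to $\neighTe$ if it falls in $\xTe$'s region and leaves $\neighTe$ unchanged otherwise; deleting a training instance removes one element if it lay in that region and otherwise does nothing. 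Because the adversary controls the inserted target values, the values entering $\neighTe$ may be arbitrary reals, matching the lemma's arbitrarily inserted elements, and insertions and deletions remain genuinely distinct operations (no ejection occurs, unlike the fixed-population case).

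This correspondence yields a count-bounding map: if at most $\certBound$ instances are inserted into or deleted from $\trainSet$, then at most $\certBound$ elements are inserted into or deleted from $\neighTe$. Applying Lemma~\ref{lem:Warmup:AddDelete} with $\setVals = \neighTe$ then guarantees $\medFunc{\neighTe} \leq \threshold$ whenever the modification count does not exceed $2\abs{\lowNeigh} - \nModel - 1$, which is exactly Eq.~\eqref{eq:CertifiedKNN:DynamicSize}, and hence $\decFunc{\xTe} \leq \threshold$. I expect the only real obstacle to be the region-invariance argument: one must be explicit that the partition is data-independent, since for data-dependent splits (e.g., standard CART trees) a single poisoned instance could reshape $\xTe$'s region and break the clean one-modification-per-operation correspondence; the theorem's ``partitions $\domainX$ into disjoint regions'' hypothesis is precisely what rules this out.
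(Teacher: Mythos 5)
Your proposal is correct and follows essentially the same route as the paper's proof: both reduce the theorem to Lemma~\ref{lem:Warmup:AddDelete} by identifying $\neighTe$ with $\setVals$ and verifying that each training-set modification induces at most one insertion or deletion in the neighborhood multiset. Your explicit discussion of why the region partition must be data-independent is a welcome elaboration of a point the paper's proof leaves implicit, but it does not change the underlying argument.
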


\citeauthor{Jia:2022:CertifiedKNN} propose an \rnn{}\=/based certified classifier, with the robustness certifier identical to their \knn{} method.
By using our insertion/deletion paradigm for the robustness certifier instead of \citeauthor{Jia:2022:CertifiedKNN}'s approach, Eq.~\eqref{eq:CertifiedKNN:DynamicSize}'s \eqsmall{$\certBound$} roughly doubles.

\begin{figure}[t]
  \newcommand{\OutsideSpacer}{\hfill}
  \newcommand{\InsideSpacer}{\hspace{00pt}}
  \newcommand{\knnMiniWidth}{0.25\columnwidth}

  \centering
  \OutsideSpacer{}
  \begin{subfigure}[t]{\knnMiniWidth}
    \centering
    \captionsetup{justification=centering}
\begin{tikzpicture}[
    node distance = \medianPerturbNodeDist,
    test vertex/.style = {
    },
    scale=\knnTikzScale,
  ]

  \coordinate[
    xTe vertex,
    label=below:\knnTeLabelSize{$\xTe$}
  ] (Center) at (0,0);
  \draw[knn base radius] (Center) circle (1.42);
  \draw[knn base radius, white] (Center) circle (1.08);

  \coordinate[
    below med knn,
    label=below:\knnValueSize{$\setValOne$},
  ] (B1) at (-1.1,+0.90);

  \coordinate[
    below med knn,
    label=below:\knnValueSize{$\setValTwo$},
  ] (B2) at (0.89,0.62);

  \coordinate[
    below med knn,
    label=below:\knnValueSize{$\setValThree$},
  ] (B3) at (0.40,-0.50);

  \coordinate[
    below med knn,
    label=below:\knnValueSize{$\setValFour$},
  ] (B4) at (-0.48,-0.4);

  \coordinate[
    above med knn,
    label=below:\knnValueSize{$\setValFive$},
  ] (A5) at (0.55,0.15);

  \coordinate[other knn] (O1) at (-1.5,-1.0);
  \coordinate[other knn] (O2) at (-1.2,-1.2);
  \coordinate[other knn] (O3) at (-1.25,1.3);
  \coordinate[other knn] (O4) at (1.3,1.0);
  \coordinate[other knn] (O5) at (1.5,-0.7);
  \coordinate[other knn] (O6) at (1.2,-1.2);
  \coordinate[other knn] (O7) at (-1.6,0);
\end{tikzpicture}
     \caption{Unperturbed}%
    \label{fig:CertifiedKNN:Unperturbed}
  \end{subfigure}
  \InsideSpacer{}
  \begin{subfigure}[t]{\knnMiniWidth}
    \centering
    \captionsetup{justification=centering}
\begin{tikzpicture}[
    node distance = \medianPerturbNodeDist,
    test vertex/.style = {
    },
    scale=\knnTikzScale,
  ]

  \coordinate[
    xTe vertex,
    label=below:\knnTeLabelSize{$\xTe$}
  ] (Center) at (0,0);
  \draw[knn base radius, white] (Center) circle (1.42);
  \draw[knn base radius] (Center) circle (1.08);

  \coordinate[
    below med knn,
    label=below:\knnValueSize{$\setValOne$},
  ] (B1) at (-1.1,+0.90);

  \coordinate[
    below med knn,
    label=below:\knnValueSize{$\setValTwo$},
  ] (B2) at (0.89,0.62);

  \coordinate[
    below med knn,
    label=below:\knnValueSize{$\setValThree$},
  ] (B3) at (0.40,-0.50);

  \coordinate[
    below med knn,
    label=below:\knnValueSize{$\setValFour$},
  ] (B4) at (-0.48,-0.4);

  \coordinate[
    above med knn,
    label=below:\knnValueSize{$\setValFive$},
  ] (A5) at (0.55,0.15);

  \coordinate[
    added knn,
    label=above:\knnValueSize{$\infty$},
  ] (A7) at (-0.35,0.45);

  \coordinate[other knn] (O1) at (-1.5,-1.0);
  \coordinate[other knn] (O2) at (-1.2,-1.2);
  \coordinate[other knn] (O3) at (-1.25,1.3);
  \coordinate[other knn] (O4) at (1.3,1.0);
  \coordinate[other knn] (O5) at (1.5,-0.7);
  \coordinate[other knn] (O6) at (1.2,-1.2);
  \coordinate[other knn] (O7) at (-1.6,0);
\end{tikzpicture}
    \caption{Fixed-population \\ ${\certBound = 1}$}%
    \label{fig:CertifiedKNN:SwapPerturbed}
  \end{subfigure}
  \InsideSpacer{}
  \begin{subfigure}[t]{\knnMiniWidth}
    \centering
    \captionsetup{justification=centering}
\begin{tikzpicture}[
    node distance = \medianPerturbNodeDist,
    test vertex/.style = {
    },
    scale=\knnTikzScale,
  ]

  \coordinate[
    xTe vertex,
    label=below:\knnTeLabelSize{$\xTe$}
  ] (Center) at (0,0);
  \draw[knn base radius] (Center) circle (1.42);
  \draw[knn base radius, white] (Center) circle (1.08);

  \coordinate[
    below med knn,
    label=below:\knnValueSize{$\setValOne$},
  ] (B1) at (-1.1,+0.90);

  \coordinate[
    below med knn,
    label=below:\knnValueSize{$\setValTwo$},
  ] (B2) at (0.89,0.62);

  \coordinate[
    below med knn,
    label=below:\knnValueSize{$\setValThree$},
  ] (B3) at (0.40,-0.50);

  \coordinate[
    below med knn,
    label=below:\knnValueSize{$\setValFour$},
  ] (B4) at (-0.48,-0.4);

  \coordinate[
    above med knn,
    label=below:\knnValueSize{$\setValFive$},
  ] (A5) at (0.55,0.15);

  \coordinate[
    added knn,
    label=above:\knnValueSize{$\infty$},
  ] (A6) at (0.35,0.45);

  \coordinate[
    added knn,
    label=above:\knnValueSize{$\infty$},
  ] (A7) at (-0.35,0.45);

  \coordinate[other knn] (O1) at (-1.5,-1.0);
  \coordinate[other knn] (O2) at (-1.2,-1.2);
  \coordinate[other knn] (O3) at (-1.25,1.3);
  \coordinate[other knn] (O4) at (1.3,1.0);
  \coordinate[other knn] (O5) at (1.5,-0.7);
  \coordinate[other knn] (O6) at (1.2,-1.2);
  \coordinate[other knn] (O7) at (-1.6,0);
\end{tikzpicture}
     \caption{Region-based \\ ${\certBound = 2}$}%
    \label{fig:CertifiedKNN:AddDeletePerturbed}
  \end{subfigure}
  \OutsideSpacer{}

  \caption{%
    \textbf{Certified Instance-Based Regression}:
    Fig.~\ref{fig:CertifiedKNN:Unperturbed} visualizes an unperturbed IBL model.
    Test instance $\xTe$'s neighborhood is visualized as a dashed line with neighborhood \eqsmall{$\neighTe$} identical to \eqsmall{$\setVals$} in Fig.~\ref{fig:WarmUp:MedianPerturbation:Initial}.
    Fig.~\ref{fig:CertifiedKNN:SwapPerturbed} shows an attack on a \knnM{} model where the neighborhood's cardinality (\eqsmall{${\nModel = 5}$}) is fixed, and the one attack instance~(\showAddedKnnSquare) replaces one instance in \eqsmall{$\lowerSetVals$}~(\showBelowMedianKnn) (source Fig.~\ref{fig:WarmUp:MedianPerturbation:Swap}).
    A \rnn{}\=/median model is shown in Fig.~\ref{fig:CertifiedKNN:AddDeletePerturbed}, where the two inserted instances~(\showAddedKnnSquare) do not change the neighborhood's radius (source Fig.~\ref{fig:WarmUp:MedianPerturbation:Add}).%
  }%
  \label{fig:CertifiedKNN}
\end{figure}
 
\subsection{Computational Complexity}

Eqs.~\eqref{eq:CertifiedKNN:FixedSize} and~\eqref{eq:CertifiedKNN:DynamicSize} require determining \eqsmall{$\lowNeigh$}'s cardinality,
which has complexity \eqsmall{\bigOT}. %
However, constructing neighborhood~\eqsmall{$\neighTe$} can require scanning the entire training set and has complexity~\eqsmall{\bigOn{}}.
Therefore, certifying each IBL regression prediction's robustness is in~\eqsmall{\bigOn{}} -- the same as \citeauthor{Jia:2022:CertifiedKNN}'s certified \knn{} and \rnn{} classifiers.
\section{Certified Regression for General Models}%
\label{sec:CertifiedGeneral}

Instance-based learners lend themselves to robustness certification.
However, there are many applications where IBLs perform poorly.
This section explores reducing certified regression to a second certified classifier, which will now allow us to use whichever model architecture has the best performance.

Recall %
that \citepos{Levine:2021:DPA} certified classifier, DPA, uses an ensemble trained on partitioned training data.
In this section, we first reduce certified regression to certified classification using DPA.
We then improve the certification performance of DPA and by extension our certified regressor by using tighter, weighted analysis.
All certified regression ensembles we consider have \eqsmall{$\nModel$}~submodels denoted \eqsmall{${\decOne, \ldots, \decFin}$}, and
the ensemble decision function uses median, i.e.,~
\eqsmall{%
  ${%
    \decFunc{\xTe}
      \defeq
      \medFunc{%
        \set{%
          \decFuncOne{\xTe},
          \ldots,
          \decFuncFin{\xTe}
        }%
      }%
  }$
}. %

Since ensemble size~\eqsmall{$\nModel$} is always a user-specified hyperparameter, select odd~\eqsmall{$\nModel$}. %
For arbitrary \eqsmall{${\xTe \in \domainX}$}, let
\eqsmall{${\setVals \defeq \setbuild{\decFuncI{\xTe}}{\modIdx \in \setint{\nModel}}}$}.
Our goal remains to determine~\eqsmall{$\certBound$} -- a pointwise guarantee on the total number of training set modifications where it remains guaranteed that \eqsmall{${\decFunc{\xTe} \leq \threshold}$}.

\subsection{Partitioned Certified Regression}%
\label{sec:CertifiedGeneral:PCR}

Here, the \eqsmall{$\nModel$}~submodel regressors are \keyword{fully-independent}, meaning their training sets are disjoint, and each submodel prediction provides no direct insight into any other submodel's behavior.  %
This simple framework makes \textit{no assumptions about the submodel architecture}; the submodels may be non-parametric or parametric, deep or shallow, etc.
The only requirement is that each submodel returns a deterministic prediction given its training set and feature vector~$\xTe$.

\citeauthor{Levine:2021:DPA} enforce disjoint submodel training sets by using deterministic function~\eqsmall{$\hashTrain$} to partition training set~\eqsmall{$\trainSet$} into \eqsmall{$\nModel$}~disjoint blocks, \eqsmall{${\blockOne, \ldots, \blockT}$}.
Formally, for all \eqsmall{${\modIdx \in \setint{\nModel}}$}, submodel~\eqsmall{$\decI$}'s training set is \eqsmall{${\modTrainSetI = \blockI[\modIdx]}$}.
\newmod{Since each training instance is assigned to exactly one submodel, any training set modification can only affect one submodel.
Under the \keyword{unit-cost assumption} (Def.~\ref{def:RelatedWork:UnitCostAssumption}), each training set modification changes the corresponding submodel's prediction from \eqsmall{$\decFuncI{\xTe}$} to $\infty$ in the worst case.
Thus, perturbing a partitioned ensemble's median prediction follows Sec.~\ref{sec:WarmUp:Swap}'s unweighted swap paradigm where, as explained above, \textit{each perturbed submodel entails one training set modification}.}

Via reduction to DPA, Theorem~\ref{thm:CertifiedGeneral:PCR} directly applies Lemma~\ref{lem:Warmup:Swap} to certify unit-cost, partitioned regression's robustness under arbitrary training set insertions and deletions.

\begin{theorem}%
  \label{thm:CertifiedGeneral:PCR}
  For \eqsmall{${\xTe \in \domainX}$},
  \eqsmall{${\threshold \in \real}$},
  and deterministic function \eqsmall{$\hashTrain$} that partitions set~\eqsmall{$\trainSet$} into disjoint blocks~\eqsmall{${\blockI[1], \ldots, \blockI[\nModel]}$},
  let $\dec$ be an ensemble of \eqsmall{$\nModel$} submodels where \eqsmall{$\nModel$}~is odd, and
  each deterministic submodel \eqsmall{$\decI$} is trained on block~\eqsmall{$\blockI[\modIdx]$}.
  Define \eqsmall{${\lowerSetVals \defeq \setbuild{\decFuncI{\xTe}}{\decFuncI{\xTe} \leq \threshold}}$}.
  Given
  \eqsmall{${\decFunc{\xTe} \defeq \medFunc{\set{\decFuncI[1]{\xTe}, \ldots, \decFuncI[\nModel]{\xTe}}} \leq \threshold}$},
  if model~$\dec$ is trained on a modified \eqsmall{$\trainSet$} where the total number of inserted and deleted training instances does not exceed

  {%
    \equationSize%
    \begin{equation}%
      \label{eq:CertifiedGeneral:PCRBound}%
      \certBound
      =
        \nLower
        -
        \ceil{\frac{\nModel}{2}}
      \text{,}%
    \end{equation}%
  }%
  \noindent%
  it is guaranteed that \eqsmall{${\decFunc{\xTe} \leq \threshold}$}.
\end{theorem}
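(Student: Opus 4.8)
The plan is to reduce the theorem directly to the unweighted swap paradigm of Lemma~\ref{lem:Warmup:Swap}, letting the multiset of submodel predictions \eqsmall{${\setVals = \setbuild{\decFuncI{\xTe}}{\modIdx \in \setint{\nModel}}}$} play the role of the abstract multiset there. Since the ensemble prediction is \eqsmall{${\decFunc{\xTe} = \medFunc{\setVals}}$}, the guarantee \eqsmall{${\decFunc{\xTe} \leq \threshold}$} is literally the statement \eqsmall{${\medFunc{\setVals} \leq \threshold}$}, so everything reduces to counting how many elements of \eqsmall{$\setVals$} an adversary with budget \eqsmall{$\certBound$} can push across the threshold~\eqsmall{$\threshold$}.

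First I would argue that any single insertion or deletion perturbs \emph{at most one} element of \eqsmall{$\setVals$}. Because \eqsmall{$\hashTrain$} partitions \eqsmall{$\trainSet$} into disjoint blocks and \eqsmall{${\modTrainSetI = \blockI[\modIdx]}$}, each training instance belongs to exactly one block and hence influences exactly one submodel; inserting or deleting an instance alters only the block it maps to under~\eqsmall{$\hashTrain$}, leaving the training sets -- and, by the determinism of submodel training, the predictions -- of the other \eqsmall{${\nModel - 1}$} submodels untouched. Consequently, a budget of \eqsmall{$\certBound$} modifications can alter at most \eqsmall{$\certBound$} of the \eqsmall{$\nModel$} values in \eqsmall{$\setVals$} (and strictly fewer if the adversary wastefully places several modifications in the same block).

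Next I would invoke the unit-cost assumption (Def.~\ref{def:RelatedWork:UnitCostAssumption}) to pin down the worst case: a modification to block \eqsmall{$\blockI[\modIdx]$} may change \eqsmall{$\decFuncI{\xTe}$} arbitrarily, so in the worst case it replaces a prediction of value \eqsmall{${\leq \threshold}$} by one of value \eqsmall{${> \threshold}$} (conceptually \eqsmall{$\infty$}). Crucially, because \eqsmall{$\nModel$} is fixed, this is a \emph{replacement} of one element of \eqsmall{$\setVals$} rather than an insertion or deletion of an element -- i.e., exactly a ``swap'' in the sense of Sec.~\ref{sec:WarmUp:Swap}. This is why a partitioned ensemble obeys the swap paradigm rather than the insertion/deletion paradigm, and it is the step one must get right: disjointness gives ``at most one submodel affected per modification,'' while fixed \eqsmall{$\nModel$} gives ``swap, not insert/delete.''

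With these two observations the theorem follows mechanically. The adversary's \eqsmall{$\certBound$} modifications correspond to at most \eqsmall{$\certBound$} swaps on the fixed, odd-cardinality multiset \eqsmall{$\setVals$} with \eqsmall{${\nModel = \abs{\setVals}}$} and \eqsmall{${\lowerSetVals = \setbuild{\decFuncI{\xTe}}{\decFuncI{\xTe} \leq \threshold}}$}. Applying Lemma~\ref{lem:Warmup:Swap} to this \eqsmall{$\setVals$}, \eqsmall{$\threshold$}, and \eqsmall{${\nLower = \abs{\lowerSetVals}}$} guarantees \eqsmall{${\medFunc{\setVals} \leq \threshold}$} whenever the number of swaps does not exceed \eqsmall{${\nLower - \halfModels}$}, which is precisely the bound \eqsmall{${\certBound = \nLower - \ceil{\frac{\nModel}{2}}}$} of Eq.~\eqref{eq:CertifiedGeneral:PCRBound}. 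The main obstacle is not the arithmetic but the reduction bookkeeping in the middle steps -- confirming that disjoint blocks together with deterministic, fixed-size ensembles make each training-set edit a single swap -- after which Lemma~\ref{lem:Warmup:Swap} supplies all of the quantitative work.
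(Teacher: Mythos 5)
Your proposal is correct and follows essentially the same route as the paper's own proof: both verify that a partitioned, deterministic, odd-sized median ensemble satisfies the criteria of the unweighted swap paradigm (disjointness means each training-set edit touches one submodel, the unit-cost assumption makes that a worst-case replacement, and fixed \eqsmall{$\nModel$} makes it a swap rather than an insertion/deletion) and then invoke Lemma~\ref{lem:Warmup:Swap} verbatim for the bound. Your write-up simply spells out the reduction bookkeeping a bit more explicitly than the paper does.
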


We denote this disjoint ensemble regressor as \keyword{partitioned certified regression}~(\disjointSingleMethod).
Suppl.\ Sec.~\ref{sec:App:PcrVsDpa} %
proves that when regression is used for binary classification, \disjointSingleMethod{} and DPA yield identical robustness guarantees~(\eqsmall{$\certBound$}).

\subsection{Weighted Partitioned Certified Regression}%
\label{sec:CertifiedGeneral:NuPCR}

\citeauthor{Levine:2021:DPA} only consider the maximally pessimistic unit-cost assumption.
For a feature vector~\eqsmall{$\xTe$}, it may take multiple training set insertions/deletions %
to corrupt a submodel's prediction.
For example, Theorems~\ref{thm:CertifiedKNN:FixedSize} and~\ref{thm:CertifiedKNN:DynamicSize} prove that IBL predictions are robust to multiple training set modifications.

Fixing the regressor's overall architecture, one obvious way to improve certified robustness~\eqsmall{$\certBound$} is to improve the robustness certifier.
Below, we introduce tighter analysis of each \disjointSingleMethod{} submodel's pointwise robustness so as to move beyond unit cost.
Let \eqsmall{${\covI \in \nats}$}~denote the minimum number of insertions/deletions required to change\footnote{%
  Certified robustness~$\certBound$ is the total number of training set modifications that can be made with it remaining guaranteed that ${\decTe \leq \threshold}$. In contrast, $\covI$~is minimum the number of modifications needed to \textit{perturb} submodel~$\modIdx$'s prediction enough that ${\decFuncI{\xTe} > \threshold}$.
  If $\certBound_{\modIdx}$ were the certified robustness of just \textit{submodel}~$\modIdx$, then ${\covI = \certBound_{\modIdx} + 1}$.
  $\covI$'s definition here follows related work~\citep{Ran:2021:PartialSetCoverRmax}.%
}
the submodel enough where \eqsmall{${\decFuncI{\xTe} > \threshold}$}.
By definition, if \eqsmall{${\decFuncI{\xTe} > \threshold}$} without any training set modifications, \eqsmall{${\covI = 0}$}.
When \eqsmall{${\exists_{\modIdx} \, \covI > 1}$}, better certified guarantees are possible through a weighted framework. %
Theorem~\ref{thm:CertifiedGeneral:NonUniformPCR} directly applies Lemma~\ref{lem:Warmup:NonUniformSwap}'s weighted swap paradigm to adapt \disjointSingleMethod{} (and DPA) to weighted perturbation costs.
We denote this extension \keyword{weighted partitioned certified regression}~(\disjointMultiMethod{}).

\begin{theorem}%
  \label{thm:CertifiedGeneral:NonUniformPCR}
  For \eqsmall{${\xTe \in \domainX}$},
  \eqsmall{${\threshold \in \real}$},
  and function \eqsmall{$\hashTrain$} that partitions set~\eqsmall{$\trainSet$} into disjoint blocks~\eqsmall{${\blockI[1], \ldots, \blockI[\nModel]}$},
  let $\dec$ be an ensemble of \eqsmall{$\nModel$} submodels where \eqsmall{$\nModel$}~is odd.
  Each deterministic submodel \eqsmall{$\decI$} is trained on block~\eqsmall{$\blockI[\modIdx]$} and requires at least ${\covI \in \intsNN}$~modifications to \eqsmall{$\blockI[\modIdx]$} for \eqsmall{${\decFuncI{\xTe} > \threshold}$}.
  For \eqsmall{${\covSet \defeq \setbuild{\covI}{\decFuncI{\xTe} \leq \threshold}}$},
  let \eqsmall{$\smallestCovVals$} be \eqsmall{$\covSet$}'s smallest \eqsmall{${\abs{\covSet} - \halfModels + 1}$} values.
  Given
  \eqsmall{${\decFunc{\xTe} \defeq \medFunc{\set{\decFuncI[1]{\xTe}, \ldots, \decFuncI[\nModel]{\xTe}}} \leq \threshold}$},
  if model~$\dec$ is trained on a modified \eqsmall{$\trainSet$} where the total number of inserted and deleted training instances does not exceed

  {%
    \equationSize%
    \begin{equation}%
      \label{eq:CertifiedGeneral:Thm:NonUniformPCRBound}%
      \certBound
      =
        \sum_{\covSym \in \smallestCovVals}
          \covSym
        -
        1%
      \text{,}
    \end{equation}%
  }%
  \noindent%
  it is guaranteed that \eqsmall{${\decFunc{\xTe} \leq \threshold}$}.
\end{theorem}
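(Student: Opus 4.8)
The plan is to reduce the claim directly to the weighted swap paradigm of Lemma~\ref{lem:Warmup:NonUniformSwap}, in exact parallel to how Theorem~\ref{thm:CertifiedGeneral:PCR} reduced the unit-cost case to Lemma~\ref{lem:Warmup:Swap}. First I would set $\setVals \defeq \set{\decFuncI[1]{\xTe}, \ldots, \decFuncI[\nModel]{\xTe}}$ and recall that the ensemble decision function is $\decFunc{\xTe} = \medFunc{\setVals}$, so that violating the target bound $\decFunc{\xTe} \leq \threshold$ is precisely perturbing the median of $\setVals$ above $\threshold$. The below-threshold submodels form $\lowerSetVals = \setbuild{\decFuncI{\xTe}}{\decFuncI{\xTe} \leq \threshold}$ with associated weight multiset $\covSet$. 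Since flipping a below-threshold submodel requires at least one modification, every $\covSym \in \covSet$ satisfies $\covSym \geq 1$; hence $\covSet$ is a positive-integer weight multiset as Lemma~\ref{lem:Warmup:NonUniformSwap} requires, and the hypothesis $\medFunc{\setVals} \leq \threshold$ guarantees $\nLower \geq \halfModels$, so the $(\nLower - \halfModels + 1)$-subset $\smallestCovVals$ is well defined and nonempty.

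The substance is establishing that an adversary's budget of training-set modifications maps onto the weighted swap cost model. I would argue this in three short steps. (i) Because $\hashTrain$ deterministically assigns each instance to exactly one block, every inserted or deleted instance lies in a single block $\blockI[\modIdx]$ and can therefore alter only submodel $\decI$; thus a total budget $\certBound$ decomposes as $\certBound = \sum_{\modIdx} b_{\modIdx}$ with $b_{\modIdx}$ modifications applied to block $\modIdx$, and the submodels behave mutually independently. (ii) Since $\nModel$ is odd, the median is the $\halfModels$\=/th largest value, so $\medFunc{\setVals} > \threshold$ holds if and only if at least $\halfModels$ predictions exceed $\threshold$; equivalently, the adversary must convert at least $\nLower - \halfModels + 1$ below-threshold submodels into above-threshold ones. (iii) By the definition of $\covI$, converting submodel $\modIdx$ requires $b_{\modIdx} \geq \covI$, and any smaller expenditure on that block leaves its vote at or below $\threshold$ and is thus wasted for moving the median; moreover pushing an already-above-threshold submodel downward only helps the defender, so w.l.o.g.\ the adversary spends budget only on below-threshold flips. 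Together these steps show that the minimum budget that perturbs the median equals the minimum total weight of any $(\nLower - \halfModels + 1)$\=/subset of $\covSet$, namely $\sum_{\covSym \in \smallestCovVals} \covSym$.

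With this correspondence established, the theorem follows immediately: Lemma~\ref{lem:Warmup:NonUniformSwap} guarantees $\medFunc{\setVals} \leq \threshold$ whenever the spent budget does not exceed $\sum_{\covSym \in \smallestCovVals} \covSym - 1 = \certBound$, which is Eq.~\eqref{eq:CertifiedGeneral:Thm:NonUniformPCRBound}. I expect the main obstacle to be the decomposition argument of steps (i)--(iii): carefully justifying that disjoint blocks make the per-submodel flip costs independent and additive, that an expenditure below $\covI$ on a block cannot contribute to perturbing the median, and that the adversary never benefits from modifying above-threshold submodels. Once that reduction is pinned down, the weighted swap lemma handles all remaining arithmetic.
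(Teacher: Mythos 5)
Your proposal is correct and follows essentially the same route as the paper: it reduces the claim to the weighted swap paradigm of Lemma~\ref{lem:Warmup:NonUniformSwap} by checking that the ensemble's median decision function, the disjoint (hence independent) submodel training blocks, the swap-style perturbations, and the odd fixed $\nModel$ satisfy that lemma's hypotheses, with the per-submodel costs $\covI$ supplying the weights. The paper's proof states this as a brief criteria checklist, whereas you spell out the budget-decomposition and counting steps explicitly, but the argument is the same.
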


It can be easily shown that \disjointMultiMethod{} always yields certified robustness at least as good as \disjointSingleMethod{}.
Although proposed in the context of regression, our weighted formulation also notably improves certified classification as shown in Sec.~\ref{sec:ExpRes:Results}.

\subsection{Computational Complexity}%
\label{sec:CertifiedGeneral:Complexity}

Both \disjointSingleMethod{} and \disjointMultiMethod{} require training \eqsmall{\bigOT{}}~models.
As established by Lemmas~\ref{lem:Warmup:Swap} and~\ref{lem:Warmup:NonUniformSwap},
the computational complexity of \disjointSingleMethod{} and \disjointMultiMethod{} (resp.)\ to certify each ensemble prediction is~\eqsmall{\bigOT{}}~\citep{Blum:1973:TimeBoundsSelection} -- the same complexity as DPA.%
\footnote{Not included in \disjointMultiMethod{}'s complexity is the time to determine {${\covOne, \ldots, \covFin}$}.}
\newcommand{\overlapParagraph}[1]{%
  \vspace{4pt}%
  \noindent%
  \textbf{#1}
}
\begin{figure*}[t]
  \centering
\centering
\newcommand{\mainFontSize}{\footnotesize}

\newcommand{\blockIdStr}[1]{{\scriptsize $\blockBoldI{#1}$}}
\newcommand{\modelIdStr}[1]{$\decI[#1]$}

\newcommand{\predictionStr}[1]{\textbf{#1}}

\newcommand{\descBoxHeight}{6.75cm}

\newcommand{\modelDatasetSpacer}{0.86cm}  %
\newcommand{\trainSetSpacer}{4.00cm}

\newcommand{\mappingDescSpacer}{2.90cm}
\newcommand{\mappingDescWidth}{2.40cm}
\newcommand{\mappingDescHeight}{5.00cm}

\newcommand{\trainSetMidPoint}{0.47}
\newcommand{\blockMidPoint}{0.35}

\newcommand{\trainSetArrowOutSpacer}{1.00cm}

\newcommand{\blockBoxWidth}{2.2cm}
\newcommand{\blockArrowInSpacer}{0.85cm}
\newcommand{\blockArrowOutSpacer}{0.90cm}

\newcommand{\modelSpacer}{3.85cm}
\newcommand{\modelArrowInSpacer}{0.60cm}
\newcommand{\modelArrowOutSpacer}{0.60cm}
\newcommand{\modelBoxWidth}{1.7cm}

\newcommand{\predictionSpacer}{3.50cm}
\newcommand{\predArrowInSpacer}{0.45cm}
\newcommand{\predBoxWidth}{1.5cm}

\newcommand{\thresholdLabelSpacer}{1.50cm}

\newcommand{\predLabelDescDistance}{50pt}

\newcommand{\labelDistanceSpacer}{0pt}

\begin{tikzpicture}[
  ]

  \node (block5B) [block06] {\blockIdStr{6}};
  \node (block5A) [block05, above=0pt of block5B.before top, anchor=after bottom] {\blockIdStr{5}};
  \coordinate (Block5Mid) at ($(block5A)!\blockMidPoint!(block5B)$);  %
  \coordinate[left of=Block5Mid, node distance=\blockArrowInSpacer] (Block5ArrowIn);
  \coordinate[right of=Block5Mid, node distance=\blockArrowOutSpacer] (Block5ArrowOut);

  \node (block4B) [block07, above=\modelDatasetSpacer of block5B.before top, anchor=after bottom] {\blockIdStr{7}};
  \node (block4A) [block01, above=0pt of block4B.before top, anchor=after bottom] {\blockIdStr{1}};
  \coordinate (Block4Mid) at ($(block4A)!\blockMidPoint!(block4B)$);  %
  \coordinate[left of=Block4Mid, node distance=\blockArrowInSpacer] (Block4ArrowIn);
  \coordinate[right of=Block4Mid, node distance=\blockArrowOutSpacer] (Block4ArrowOut);

  \node (block3B) [block05, above=\modelDatasetSpacer of block4B.before top, anchor=after bottom] {\blockIdStr{5}};
  \node (block3A) [block04, above=0pt of block3B.before top, anchor=after bottom] {\blockIdStr{4}};
  \coordinate (Block3Mid) at ($(block3A)!\blockMidPoint!(block3B)$);  %
  \coordinate[left of=Block3Mid, node distance=\blockArrowInSpacer] (Block3ArrowIn);
  \coordinate[right of=Block3Mid, node distance=\blockArrowOutSpacer] (Block3ArrowOut);

  \node (block2B) [block06, above=\modelDatasetSpacer of block3B.before top, anchor=after bottom] {\blockIdStr{6}};
  \node (block2A) [block03, above=0pt of block2B.before top, anchor=after bottom] {\blockIdStr{3}};
  \coordinate (Block2Mid) at ($(block2A)!\blockMidPoint!(block2B)$);  %
  \coordinate[left of=Block2Mid, node distance=\blockArrowInSpacer] (Block2ArrowIn);
  \coordinate[right of=Block2Mid, node distance=\blockArrowOutSpacer] (Block2ArrowOut);

  \node (block1B) [block04, above=\modelDatasetSpacer of block2B.before top, anchor=after bottom] {\blockIdStr{4}};
  \node (block1A) [block02, above=0pt of block1B.before top, anchor=after bottom] {\blockIdStr{2}};
  \coordinate (Block1Mid) at ($(block1A)!\blockMidPoint!(block1B)$);  %
  \coordinate[left of=Block1Mid, node distance=\blockArrowInSpacer] (Block1ArrowIn);
  \coordinate[right of=Block1Mid, node distance=\blockArrowOutSpacer] (Block1ArrowOut);

  \coordinate (BlockDatasetMid) at ($(Block1Mid)!0.505!(Block5Mid)$);

  \node [%
    description box,
    minimum width=\blockBoxWidth,
    minimum height=\descBoxHeight,
    label={[align=center, label distance=\labelDistanceSpacer, font=\mainFontSize] Submodel \\ Training Sets}
  ] at (BlockDatasetMid) {};

  \node (block07) [
    block07,
    left=\trainSetSpacer of block4B,
  ] {\blockIdStr{7}};

  \node (block06) [block06, above=0pt of block07.before top, anchor=after bottom] {\blockIdStr{6}};
  \node (block05) [block05, above=0pt of block06.before top, anchor=after bottom] {\blockIdStr{5}};
  \node (block04) [block04, above=0pt of block05.before top, anchor=after bottom] {\blockIdStr{4}};
  \node (block03) [block03, above=0pt of block04.before top, anchor=after bottom] {\blockIdStr{3}};
  \node (block02) [block02, above=0pt of block03.before top, anchor=after bottom] {\blockIdStr{2}};

  \node (block01) [
    block01,
    above=0pt of block02.before top,
    anchor=after bottom,
    label={[align=center, label distance=4pt, font=\mainFontSize]Training set~$\trainSet$ \\ partitioned by $\hashTrain$},
  ] {\blockIdStr{1}};

  \coordinate (TrainSetMid) at ($(block01)!\trainSetMidPoint!(block07)$);  %
  \coordinate[right of=TrainSetMid, node distance=\trainSetArrowOutSpacer] (TrainSetArrowOut) ;

  \coordinate[left=\mappingDescSpacer of BlockDatasetMid]  (mappingCenter) ;
  \node [%
    minimum width=\mappingDescWidth,
    minimum height=\mappingDescHeight,
    label={[align=center, label distance=\labelDistanceSpacer, font=\mainFontSize] Block Mapping \\ by $\hashModel$}
  ] at (mappingCenter) {};

  \draw [dataset line] (TrainSetArrowOut) -- (Block5ArrowIn)  {};
  \draw [dataset line] (TrainSetArrowOut) -- (Block4ArrowIn)  {};
  \draw [dataset line] (TrainSetArrowOut) -- (Block3ArrowIn)  {};
  \draw [dataset line] (TrainSetArrowOut) -- (Block2ArrowIn)  {};
  \draw [dataset line] (TrainSetArrowOut) -- (Block1ArrowIn)  {};

  \node (model01) [submodel, right of=Block1Mid, node distance=\modelSpacer] {\modelIdStr{1}};
  \coordinate[left of=model01, node distance=\modelArrowInSpacer] (model01ArrowIn);
  \coordinate[right of=model01, node distance=\modelArrowOutSpacer] (model01ArrowOut);

  \node (model02) [submodel, right of=Block2Mid, node distance=\modelSpacer] {\modelIdStr{2}};
  \coordinate[left of=model02, node distance=\modelArrowInSpacer] (model02ArrowIn);
  \coordinate[right of=model02, node distance=\modelArrowOutSpacer] (model02ArrowOut);

  \node (model03) [submodel, right of=Block3Mid, node distance=\modelSpacer] {\modelIdStr{3}};
  \coordinate[left of=model03, node distance=\modelArrowInSpacer] (model03ArrowIn);
  \coordinate[right of=model03, node distance=\modelArrowOutSpacer] (model03ArrowOut);

  \node (model04) [submodel, right of=Block4Mid, node distance=\modelSpacer] {\modelIdStr{4}};
  \coordinate[left of=model04, node distance=\modelArrowInSpacer] (model04ArrowIn);
  \coordinate[right of=model04, node distance=\modelArrowOutSpacer] (model04ArrowOut);

  \node (model05) [submodel, right of=Block5Mid, node distance=\modelSpacer] {\modelIdStr{5}};
  \coordinate[left of=model05, node distance=\modelArrowInSpacer] (model05ArrowIn);
  \coordinate[right of=model05, node distance=\modelArrowOutSpacer] (model05ArrowOut);

  \node [%
    description box,
    minimum width=\modelBoxWidth,
    minimum height=\descBoxHeight,
    label={[align=center, label distance=\labelDistanceSpacer, font=\mainFontSize] Ensemble of \\ ${\nModel = 5}$ Submodels\vphantom{Training Sets}}
  ] at (model03) {};

  \draw [dataset line] (Block1ArrowOut) -- (model01ArrowIn)  {};
  \draw [dataset line] (Block2ArrowOut) -- (model02ArrowIn)  {};
  \draw [dataset line] (Block3ArrowOut) -- (model03ArrowIn)  {};
  \draw [dataset line] (Block4ArrowOut) -- (model04ArrowIn)  {};
  \draw [dataset line] (Block5ArrowOut) -- (model05ArrowIn)  {};

  \node (pred01) [prediction below, right of=model01, node distance=\predictionSpacer] {\predictionStr{\setValOne}};
  \coordinate[left of=pred01, node distance=\predArrowInSpacer] (pred01ArrowIn);

  \node (pred02) [prediction below, right of=model02, node distance=\predictionSpacer] {\predictionStr{\setValTwo}};
  \coordinate[left of=pred02, node distance=\predArrowInSpacer] (pred02ArrowIn);

  \node (pred03) [prediction below, right of=model03, node distance=\predictionSpacer] {\predictionStr{\setValThree}};
  \coordinate[left of=pred03, node distance=\predArrowInSpacer] (pred03ArrowIn);

  \node (pred04) [prediction below, right of=model04, node distance=\predictionSpacer] {\predictionStr{\setValFour}};
  \coordinate[left of=pred04, node distance=\predArrowInSpacer] (pred04ArrowIn);

  \node (pred05) [prediction above, right of=model05, node distance=\predictionSpacer] {\predictionStr{\setValFive}};
  \coordinate[left of=pred05, node distance=\predArrowInSpacer] (pred05ArrowIn);

  \node [%
    description box,
    minimum width=\predBoxWidth,
    minimum height=\descBoxHeight,
    label={[align=center, label distance=\labelDistanceSpacer, font=\mainFontSize] Submodel \\ Predictions\vphantom{Training Sets}}
  ] at (pred03) {};

  \draw [prediction line] (model01ArrowOut) -- (pred01ArrowIn)  {};
  \draw [prediction line] (model02ArrowOut) -- (pred02ArrowIn)  {};
  \draw [prediction line] (model03ArrowOut) -- (pred03ArrowIn)  {};
  \draw [prediction line] (model04ArrowOut) -- (pred04ArrowIn)  {};
  \draw [prediction line] (model05ArrowOut) -- (pred05ArrowIn)  {};

  \coordinate (ThresholdMid) at ($(pred04)!0.5!(pred05)$);
  \node [execute at begin node=\setlength{\baselineskip}{1.8ex}] (ThresholdLabel) [
    right of=ThresholdMid,
    node distance=\predLabelDescDistance,
    text width=30pt,
    align=center,
    font=\footnotesize,
  ] {Threshold ~~$\threshold$~~};

  \draw [->, label line] (ThresholdLabel) -- (ThresholdMid)  {};

  \coordinate (Median) at ($(pred03)$);
  \node (MedianLabel) [
    median label,
    right of=Median,
    node distance=\predLabelDescDistance,
    font=\footnotesize,
  ] {\begin{tabular}{c} Initial \\ Median \end{tabular}};
  \draw [->, label line] (MedianLabel) -- (pred03)  {};

\end{tikzpicture}
    \caption{%
    \textbf{Overlapping Certified Ensemble}: Simple visualization of the ensemble architecture for (weighted) overlapping certified regression.
    Function~\eqsmall{$\hashTrain$} partitions training set~\eqsmall{$\trainSet$} into (\eqsmall{${\nBlocks = 7}$}) blocks.
    Function~\eqsmall{$\hashModel$} defines each of the \eqsmall{${\nModel = 5}$} submodel training sets, \eqsmall{${\modTrainSetOne, \ldots, \modTrainSetI[5]}$}.
    The ensemble prediction is the median submodel prediction, i.e., \eqsmall{${\decFunc{\xTe} \defeq \medFunc{\set{\decFuncOne{\xTe}, \ldots, \decFuncFin{\xTe}}}}$}.%
  }
  \label{fig:CertifiedOverlap:Ensemble}
\end{figure*}

\section{Certified Regression Using Overlapping Training Data}%
\label{sec:CertifiedOverlap}

This section reduces certified regression to a third certified classifier, specifically \citepos{Wang:2022:DeterministicAggregation} reformulation of DPA where the submodels are trained on overlapping data.
This makes the submodels interdependent, meaning one training set modification may alter multiple submodel predictions.
Fig.~\ref{fig:CertifiedOverlap:Ensemble} visualizes an ensemble trained on overlapping training sets.
Again, $\nModel$ is the number of submodels.%
\footnote{
  In practice, for overlapping certified regression to guarantee better robustness than \bothDisjoint{} the number of submodels generally must increase by several folds over partitioned regression.%
}
Function \eqsmall{$\func{\hashTrain}{\domainZ}{\setint{\nBlocks}}$} still partitions the instance space into $\nBlocks$~disjoint blocks, where ${\nBlocks \geq \nModel}$.
Following \citeauthor{Wang:2022:DeterministicAggregation}, a second deterministic function \eqsmall{$\func{\hashModel}{\setint{\nBlocks}}{\powerSetInt{\nModel}}$} maps each training set block to one or more submodel training sets.  Formally, submodel \eqsmall{$\decI$}'s training set is
\eqsmall{%
  ${%
    \modTrainSetI%
    \defeq%
    \bigsqcup_{%
            \modIdx \in \hashModelFunc{\blockIdx}
      }
      \blockI%
  }$%
}. %
\noindent%
Let \eqsmall{${\spreadDegI \defeq \abs{\hashModelFunc{\blockIdx}}}$} denote \eqsmall{$\blockI$}'s \keyword{spread degree}, i.e.,~the number of models that use~\eqsmall{$\blockI$} during training.
Denote the \keyword{maximum spread degree} as \eqsmall{${\spreadDegMax \defeq \max\set{\spreadDegOne, \ldots, \spreadDegFin}}$}.
The ensemble's decision function is still the median submodel prediction. %

Below, we first consider certified regression on overlapping data under the unit-cost assumption.
We then improve overlapping regression by leveraging our weighted reformulation.

\subsection{Overlapping Certified Regression}%
\label{sec:CertifiedOverlap:Uniform}

Irrespective of whether the submodels are trained on disjoint or overlapping data, under the unit-cost assumption, at least \eqsmall{${\nLower - \halfModels}$} submodel predictions must exceed~\eqsmall{$\threshold$} to perturb the ensemble's median.
Observe that each submodel training set~\eqsmall{${\modTrainSetI \subset \trainSet}$} is composed of one or more dataset blocks.
Perturbing \textit{any} block in~\eqsmall{$\modTrainSetI$} is sufficient to perturb the submodel's prediction, with an optimal attacker \textit{minimizing the number of training set (block) modifications}.

If the goal were to perturb all \eqsmall{$\nModel$}~submodels, then for arbitrary block mapping function~\eqsmall{$\hashModel$}, determining the minimum number of blocks that need to be modified reduces to \keyword{minimum set cover}, which is NP\=/hard~\citep{Slavik:1997:GreedySetCover}.
Specifically, the set to cover is \eqsmall{${\lowerSubMods \defeq \setbuild{\modIdx}{\decFuncI{\xTe} \leq \threshold}}$}, i.e.,~the submodels predicting at most~\eqsmall{$\threshold$}, and
the collection of subsets is
   \eqsmall{%
     ${%
       \mathbf{S}%
       \defeq%
         \big\{%
             \setbuildDynamic{\blockIdx}%
                      {\modIdx \in \hashModelFunc{\blockIdx}}
             \bm:  %
             \decFuncI{\xTe} \leq \threshold%
         \big\}%
     }$%
   },
which contains the dataset blocks each relevant submodel is trained on.

However, recall that for median perturbation under unweighted swaps, we only need to perturb (i.e.,~cover) \eqsmall{${\nLower - \halfModels}$} submodels -- not all of them.
Therefore, rather than mapping to set cover, our problem reduces to the related problem of \keyword{partial set cover}, where only a constant fraction of the instances (i.e.,~submodels) need to be covered.
For arbitrary block mapping function~\eqsmall{$\hashModel$}, Lemma~\ref{lem:CertifiedOverlap:Uniform:Hardness} below establishes that finding the optimal~\eqsmall{$\certBound$} here is NP\=/hard~\citep{Slavik:1997:PartialDCover,Elomaa:2010:GreedyPartialCover}.

\begin{lemma}%
  \label{lem:CertifiedOverlap:Uniform:Hardness}
  Finding optimal certified robustness~\eqsmall{$\certBound$} for overlapping certified regression is NP\=/hard.
\end{lemma}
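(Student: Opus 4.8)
The plan is to prove hardness by a polynomial reduction \emph{from} minimum set cover, which is NP\=/hard~\citep{Slavik:1997:GreedySetCover}; the discussion preceding the lemma reduces the other direction, and here I would exploit the same correspondence in reverse. Recall that, under the unit-cost assumption, modifying a single instance of any block~$\blockI$ perturbs \emph{every} submodel whose training set contains~$\blockI$, and that by Lemma~\ref{lem:Warmup:Swap} up to $\nLower - \halfModels$ flips keep the ensemble median at most~$\threshold$, so perturbing it requires pushing at least $\nLower - \halfModels + 1$ of the below-threshold submodels above~$\threshold$. Thus computing $\certBound$ is equivalent to finding the minimum number of blocks an adversary must touch so that this many below-threshold submodels flip, with $\certBound$ equal to that minimum minus one. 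The goal is therefore to engineer an ensemble whose ``minimum number of blocks needed to flip the required submodels'' coincides with the size of a minimum set cover.

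Concretely, given an instance with universe $\groundSet$ and collection $\setCollection = \set{\subsetI[1], \ldots, \subsetI[\nSubset]}$, I would introduce one block~$\blockI[j]$ per subset~$\subsetI[j]$ and one below-threshold ``element'' submodel per $u \in \groundSet$, having $\hashModel$ assign element submodel~$u$ to block~$\blockI[j]$ exactly when $u \in \subsetI[j]$ (and fixing each such submodel's prediction to some value at most~$\threshold$). Then touching block~$\blockI[j]$ with one insertion/deletion flips precisely the element submodels for $u \in \subsetI[j]$ -- it ``selects''~$\subsetI[j]$ -- so a family of blocks flips all element submodels if and only if the corresponding subsets cover~$\groundSet$. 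Hence the minimum number of blocks that flips every element submodel equals the minimum set-cover size, and if the adversary can be forced to flip \emph{all} $\abs{\groundSet}$ element submodels, then $\certBound$ reads off that optimum minus one.

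The main obstacle is precisely this forcing step: the median moves only once $\nLower - \halfModels + 1$ below-threshold submodels flip, and since $\nLower \le \nModel$ this target is at most about $\nLower/2$, so a naive construction would let the adversary cover merely half the universe (a \emph{partial} cover) rather than all of it. To pin the required number of flips to $\abs{\groundSet}$, I would pad the ensemble with (i)~above-threshold ``filler'' submodels to tune $\halfModels$ while keeping $\nModel$ odd, and (ii)~un-perturbable below-threshold ``anchor'' submodels -- submodels assigned no block by $\hashModel$, hence trained on empty, poisoning-invariant data -- so that the only flippable below-threshold predictions are the $\abs{\groundSet}$ element submodels. Choosing the anchor and filler counts so that $\nLower - \halfModels + 1 = \abs{\groundSet}$ then compels the adversary to flip every element submodel, identifying the minimum attack cost with the minimum set cover; unused blocks can be added if the structural constraint $\nBlocks \ge \nModel$ must be satisfied. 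Equivalently, one may reduce directly from \emph{partial} set cover~\citep{Slavik:1997:PartialDCover}, calibrating $\nModel$ so that the coverage target $\nLower - \halfModels + 1$ matches the partial-cover target, which the fillers realize for any value up to roughly~$\nLower/2$. Either way the construction is polynomial and yields $\certBound$ equal to the cover optimum minus one, so any polynomial-time algorithm for $\certBound$ would solve (partial) set cover; verifying this calibration and the legitimacy of the un-perturbable anchors within the model class is the only delicate part, the remainder being bookkeeping.
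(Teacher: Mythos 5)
Your proposal is correct and is essentially the paper's proof: both reduce from (partial) set cover by identifying ground-set elements with flippable below-threshold submodels, subsets with training blocks, and padding the ensemble with constant-valued submodels trained on no data---your below-$\threshold$ ``anchors'' and above-$\threshold$ ``fillers''---so that the number of submodels that must be flipped matches the coverage target. The paper's construction uses $2\abs{\groundSet}+1$ submodels and calibrates that target directly to the partial-cover parameter $\swapBound$, which is precisely the alternative calibration you sketch at the end; the off-by-one difference in your flip-target convention is immaterial to the hardness argument.
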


  Although our problem is NP\=/hard, it is polynomial-time approximable.
Specifically, the approximation uses the famous greedy set-cover algorithm where in each iteration, the subset (training block~\eqsmall{$\blockI$}) covering the most remaining elements (submodels) is selected~\citep{Chvatal:1979:GreedySetCover,Slavik:1997:PartialDCover}.
Let \eqsmall{$\greedyBound$} denote the bound found by this greedy method, and define
\eqsmall{${\swapBound \defeq \nLower - \halfModels}$}.
Then for the non-naive case where \eqsmall{${\swapBound \geq 2}$}, %

{%
  \equationSize%
  \begin{equation}\label{eq:CertifiedOverlap:Greedy:ApproximationRatio}
      \certBound
        \geq
        \ceil{%
            \frac{\greedyBound}%
                 {\min\set{%
                    \harmonic{\spreadDegMax},
                    \ln \swapBound - \ln\ln \swapBound + 3 + \ln\ln 32 - \ln 32
                   }%
                 }
        }%
      \text{,}%
  \end{equation}%
}%
\noindent%
where \eqsmall{$\harmonic{\spreadDegMax}$} is the \eqsmall{$\spreadDegMax$}\=/th harmonic number.
This bound
follows directly from partial set cover \keyword{approximation factor} analysis \citetext{\citealp[Thm.~4]{Slavik:1997:GreedySetCover}; \citealp[Thm.~3]{Slavik:1997:PartialDCover}}.%
\footnote{%
  Eq.~\eqref{eq:CertifiedOverlap:Greedy:ApproximationRatio}'s bound is tighter (often significantly so) than the much more famous approximation factor, ${\harmonic{\swapBound}}$, of \citet{Johnson:1974:GreedySetCover} and \citet{Lovasz:1975:RatioOptimalCovers}.%
}
\citet{Slavik:1997:GreedySetCover} shows that the difference between this approximation factor's overall lower and upper bound is only roughly~1.1, meaning this general approximation is quite good overall.

However, in most cases, the performance advantage of overlapping versus disjoint unit-cost regressors is small enough that the greedy optimality gap wipes out all gains.
Instead, we rely on Fig.~\ref{fig:CertifiedGeneral:ILP}'s integer linear program~(ILP) to bound \eqsmall{$\certBound$} in the overlapping case.\footnote{Fig.~\ref{fig:CertifiedGeneral:ILP} jointly formulates calculating $\certBound$ under unit and weighted costs.}
This ILP is directly adapted from standard partial set-cover, where for unit costs \eqsmall{${\forall_{\modIdx} \, \covI = 1}$}.

While the ILP is still NP\=/hard in the worst case, modern LP solvers often find a (near) optimal solution in reasonable time (e.g.,~a few seconds)~\citep{Gurobi}.
In cases where finding true robustness~\eqsmall{$\certBound$} is computationally expensive,
these solvers generally return guaranteed bounds on~\eqsmall{$\certBound$} that are (much) better than the greedy approximation~\citep{Vanderbei:2014:LinearProgramming}.%
\footnote{Sec.~\ref{sec:ExpRes}'s experiments use a fixed time limit to ensure tractability.}
We refer to this unit-cost, ILP-based approach as \keyword{overlapping certified regression}~(\overlapSingleMethod).

\subsection{Weighted Overlapping Certified Regression}%
\label{sec:CertifiedOverlap:NonUniform}

Recall that Sec.~\ref{sec:CertifiedGeneral:NuPCR} improves certified regressor \disjointSingleMethod{} by reformulating DPA so as not to be restricted by the unit-cost assumption.
Here, we follow the same approach of improving certified regressor \overlapSingleMethod{} by generalizing \citepos{Wang:2022:DeterministicAggregation} certified classifier to non-unit costs.

As with \disjointMultiMethod{} earlier, \eqsmall{${\covI > 1}$} entails that submodel \eqsmall{$\decI$}'s training set must be modified at least \eqsmall{$\covI$}~times for \eqsmall{${\decFuncI{\xTe} > \threshold}$}.
This prevents weighted overlapping regression from applying partial set cover since each submodel~\eqsmall{$\decI$} now has a \textit{coverage} requirement.
Instead, \keyword{partial set multicover}~(PSMC) generalizes partial set cover to support coverage requirements \eqsmall{${\covI \geq 0}$}~\citep{Shi:2019:ApproximatingPartialSetMulticover,Ran:2021:PartialSetCoverRmax}, %
and we adapt PSMC to weighted, overlapping regression.
PSMC, and by extension our task, is provably hard. %

\begin{corollary}%
  \label{col:CertifiedOverlap:NonUniform:Hardness}
  Finding the optimal certified robustness~\eqsmall{$\certBound$} for weighted overlapping certified regression is NP\=/hard.
\end{corollary}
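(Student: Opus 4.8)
The plan is to derive the corollary directly from the already-established unit-cost result by exhibiting (unit-cost) overlapping certified regression as a special case of the weighted variant. Weighted overlapping certified regression is parameterized by the per-submodel coverage requirements~$\covI$, and the unit-cost setting of Section~\ref{sec:CertifiedOverlap:Uniform} is recovered verbatim by fixing $\forall_{\modIdx} \, \covI = 1$. I would therefore argue that any algorithm computing the optimal weighted robustness~$\certBound$ also solves the unit-cost problem, so a polynomial-time algorithm for the weighted case would yield one for the unit-cost case, contradicting Lemma~\ref{lem:CertifiedOverlap:Uniform:Hardness} unless $\textnormal{P} = \textnormal{NP}$.

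First I would pin down the decision-problem formulation so that ``finding optimal~$\certBound$'' carries a precise NP-hardness meaning: given an ensemble, a test point~$\xTe$, threshold~$\threshold$, coverage requirements~$\set{\covI}$, and an integer~$k$, decide whether $\certBound \geq k$. Next I would reduce from the unit-cost decision problem, which Lemma~\ref{lem:CertifiedOverlap:Uniform:Hardness} shows to be NP-hard: given a unit-cost instance, emit the identical instance augmented with $\forall_{\modIdx} \, \covI = 1$. This map runs in linear time and is manifestly a legal weighted instance, since $1 \in \intsNN$ for every submodel~$\decI$ with $\decFuncI{\xTe} \leq \threshold$.

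The crux is verifying that the reduction preserves the optimum, i.e., that the optimal weighted robustness on the constructed instance equals the optimal unit-cost robustness. This holds because the underlying combinatorial problem—partial set multicover with all coverage requirements equal to~$1$—is exactly partial set cover, which is the combinatorial core of unit-cost overlapping regression; the set $\lowerSubMods$ to be covered and the corresponding subset collection are left unchanged. Consequently the feasible attacker strategies and their budgets coincide across the two instances, so the minimum cost to perturb the ensemble median, and hence~$\certBound$, is identical. This alignment of the two formulations is the only nontrivial point, while the reduction itself is trivial; because the unit-cost hardness in turn rests on partial set cover, the corollary can equivalently be read as an immediate consequence of the NP-hardness of partial set multicover~\citep{Shi:2019:ApproximatingPartialSetMulticover,Ran:2021:PartialSetCoverRmax}.
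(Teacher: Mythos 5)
Your proposal is correct and takes essentially the same route as the paper: the paper's proof likewise observes that the unweighted case maps trivially to the weighted one by setting $\forall_{\modIdx}\, \covI = 1$ and invokes Lemma~\ref{lem:CertifiedOverlap:Uniform:Hardness}. Your additional care in fixing the decision-problem formulation and verifying that the optimum is preserved (all-ones partial set multicover collapses to partial set cover) is a more explicit rendering of the same argument, not a different one.
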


PSMC is far less studied than (partial) set cover.
PSMC is polynomial-time approximable -- albeit with worse known bounds than partial set cover.
\citet{Ran:2021:PartialSetCoverRmax} provide the best-known PSMC bounds;
their method is much more complicated than greedy partial set cover and relies on a reduction to another NP\=/hard problem, minimum densest subcollection.
Let $\greedyBound$ be the solution generated by \citeauthor{Ran:2021:PartialSetCoverRmax}'s algorithm, then

{%
  \equationSize%
  \begin{equation}%
    \label{eq:CertifiedOverlap:NonUniform:ApproximationRatio}
    \certBound
      \geq
      \ceil{%
        \frac{\greedyBound}%
             {4\lg \nModel \harmonic{\spreadDegMax}\ln\swapBound + 2\lg \nModel \sqrt{\nModel}}%
      }%
    \text{.}%
  \end{equation}%
}%
Like with unweighted overlapping regression, Eq.~\eqref{eq:CertifiedOverlap:NonUniform:ApproximationRatio}'s approximation factor is large enough that it usually wipes out the performance gains derived from weighted costs.
Instead, we use Fig.~\ref{fig:CertifiedGeneral:ILP}'s ILP to bound~\eqsmall{$\certBound$} %
in accordance with Lem.~\ref{lem:Warmup:NonUniformSwap}.
In the ILP,
${\isMultiVar = 1}$ in the weighted case and 0~otherwise.
Hence, at least \eqsmall{${(\nLower - \halfModels + 1)}$} submodels must be covered (i.e.,~perturbed) in the weighted case.
Following Eq.~\eqref{eq:WarmUp:NonUniformSwap:Bound}, sum \eqsmall{${\sum_{\blockIdx = 1}^{\nBlocks} \dsBlockVarI}$} is decremented by one in the ILP.

We refer to this overlapping ILP-based approach as \keyword{weighted overlapping certified regression}~(\overlapMultiMethod).

\begin{figure}[t]
  \centering
  {%
    \equationSize%
\newcommand{\ilpLineTerm}{,\quad}

\begin{mini!}
  {}{\certBound=\sum_{\blockIdx = 1}^{\nBlocks} \dsBlockVarI - \isMultiVar}{}{}
  \addConstraint{\lowerSubMods}%
                {= \setbuild{\modIdx}{\decFuncI{\xTe} \leq \threshold} \ilpLineTerm}%
                {}
  \addConstraint{\covMax}%
                {= \max\setbuild{\covI}{\modIdx \in \setint{\nModel}}}%
                {}
  \addConstraint{\isMultiVar}%
                {= \ind{\covMax > 1}}
                {}
  \addConstraint{\sum_{\modIdx \in \lowerSubMods} \modelVarI}%
                {\geq \nLower - \halfModels + \isMultiVar \ilpLineTerm}%
                {\text{(Median perturb.)}}
  \addConstraint{\covI \modelVarI}%
                {\leq \sum_{\blockI \subseteq \modTrainSetI} \dsBlockVarI \ilpLineTerm}%
                {\modIdx \in \lowerSubMods}
  \addConstraint{\modelVarI \in \set{0,1}}%
                {\ilpLineTerm}%
                {\modIdx \in \lowerSubMods}
  \addConstraint{\dsBlockVarI \in \set{0, \ldots, \covMax}}%
                {\ilpLineTerm}%
                {\blockIdx \in \setint{\nBlocks}}
\end{mini!}
  }%
  \caption{%
      \textbf{Overlapping Certified Regression Integer Linear Program}:
      Adapted from the partial set (multi)cover integer linear program.
      Calculates certified robustness~$\certBound$ for both \overlapSingleMethod{} and \overlapMultiMethod{}
      with indicator variable~$\isMultiVar$ adjusting the program to account for weighted costs.
      For arbitrary feature vector~$\xTe$,
      \eqsmall{$\lowerSubMods$} is the set of submodels that predict \eqsmall{${\decFuncI{\xTe} \leq \threshold}$}.
      Variable \eqsmall{$\dsBlockVarI$} contains the number of modifications made to training set block~\eqsmall{$\blockI$}.
      Binary variable \eqsmall{${\modelVarI = 1}$} if submodel \eqsmall{$\decI$} has been sufficiently modified for \eqsmall{${\decFuncI{\xTe} > \threshold}$} and 0~otherwise.%
    }
  \label{fig:CertifiedGeneral:ILP}
\end{figure}

\subsection{Computational Cost}

See suppl.\ Sec.~\ref{sec:App:MoreExps:GurobiTime} for an empirical evaluation and extended discussion of the \overlapSingleMethod{} \& \overlapMultiMethod{} ILP execution time.

\section{Certifying Any Model Beyond Unit Cost}%
\label{sec:BeyondUnitCost}

The preceding sections describe the benefits of having more robust ensemble components (i.e., \eqsmall{${\covI > 1}$}) but do not address how to find \eqsmall{$\covI$}.
Apart from IBLs and ensembles, the two methods we focus on in this work, we know of no general method for computing insertion/deletion robustness efficiently.
We attribute this scarcity to the task's difficulty.
Nonetheless, we believe this work shows that certification beyond unit cost merits future study.  %
This section explores certifying beyond unit cost from two perspectives.
First, we consider the obvious idea of combining IBLs with ensembles and explain why that performs poorly.
Next, we propose a simple, general approach to certify any (sub)model beyond unit cost, albeit with a (slightly) more restricted threat model.

\subsection{Combining Instance-Based Learners \& Ensembles}%
\label{sec:BeyondUnitCost:CombiningIBLsAndEnsembles}

The points raised below apply to both fixed-population and region-based IBLs.
We exclusively discuss \knnMethod{} here with the extension to other certified IBLs straightforward.

In practice, %
function~\eqsmall{$\hashTrain$} partitions instance space~\eqsmall{$\domainZ$} uniformly at random (u.a.r.) into $\nBlocks$~approximately equal-sized regions.
For simplicity and w.l.o.g., consider an ensemble of \knnMethod{} submodels trained on disjoint subsets where ${\nModel = \nBlocks}$.

\newcommand{\kNeighMain}{\kNeigh'}

Let \eqsmall{$\kNeighMain$} and \eqsmall{$\certBound$} denote the neighborhood size and certified robustness (resp.)\ of a \knnMethod{} model trained on i.i.d.\ training set~\eqsmall{$\trainSet$}.
If \eqsmall{$\trainSet$} is partitioned u.a.r.\ to train $\nModel$ \knnMethod{} submodels each with \eqsmall{${\kNeigh \approx \frac{\kNeighMain}{\nModel}}$}, then each submodel's expected robustness is roughly~\eqsmall{$\frac{\certBound}{\nModel}$}.
In the best case for the defender (\eqsmall{${\forall_{\modIdx}\, \decFuncI{\xTe} \leq \threshold}$}), an adversary only needs to perturb at most \eqsmall{$\halfModels$}~submodels.
Combining the above with
Theorem~\ref{thm:CertifiedGeneral:NonUniformPCR} for \disjointMultiMethod{},
this \knnMethod{} ensemble's expected certified robustness is approximately %

{%
  \equationSize%
  \begin{equation}%
    \label{eq:BeyondUnitCost:CombiningIBLsAndEnsembles}%
    \halfModels
    \left(
      \frac{\certBound}%
           {\nModel}
    \right)
    -%
    1%
    =
    \frac{\certBound}%
         {2}
    +
    \frac{\certBound}%
         {2\nModel}
    -
    1
    <%
    \certBound%
    \text{.}%
  \end{equation}%
}%
\noindent%
As ${\nTr,\,\nModel \rightarrow \infty}$, then by Eq.~\eqref{eq:BeyondUnitCost:CombiningIBLsAndEnsembles}, a \knnMethod{} ensemble's expected robustness \textit{decreases by 50\% versus the single \knnMethod{} model baseline}.
Intuitively, for ensembles, an adversary only needs to directly attack about half of the submodels and by extension half of the training data.
In contrast, when there is only a single \knnMethod{} model trained on all of~$\trainSet$, the adversary must attack the whole training set.

\subsection{Certifying Non-Unit Costs by Construction}
\label{sec:BeyondUnitCost:CertifyingByConstruction}

Since IBLs are a poor candidate to marry with ensembles, we need an alternative approach to certify a model's robustness beyond \eqsmall{${\covSym = 1}$}.
Given the dearth of existing methods (known to us), we fill in the gap and propose a simple, general-purpose method to \textit{certify robustness against arbitrary deletions}.

To be clear, this is a (slightly) restricted version of the full threat model considered so far, which allows arbitrary insertions and deletions.
Nonetheless, this restricted threat model still has broad applicability.
For example, an adversary may only be able to insert poisoned instances into a training set but not delete clean ones~\citep{Chen:2017:Targeted,Liu:2017:RobustLinearRegression,Shafahi:2018:PoisonFrogs,Wallace:2021,Hammoudeh:2022:GAS}.
Motivated by \citepos{Cook:1982:InfluenceRegression} classic \keyword{case deletion diagnostics},
we use a constructive proof to certify a (sub)model's pointwise robustness under instance deletions.
Consider training ${(\nTr + 1)}$~deterministic models -- one model using full set~\eqsmall{${\trainSet = \set{(\xI, \yI)}_{\trIdx = 1}^{\nTr}}$} and another $\nTr$~models on each of the leave-one-out subsets~\eqsmall{${\trainSet \setminus (\xI, \yI)}$} for all~${\trIdx \in \setint{\nTr}}$.
If all ${(\nTr + 1)}$~trained models make the same prediction (e.g.,~a value not exceeding~$\threshold$ for some {$\xTe$}), then by construction, the model trained on all of~\eqsmall{$\trainSet$} has, at minimum, ${\covSym = 2}$ for arbitrary deletions.
Lemma~\ref{lem:BeyondUnitCost:CertifyingDeletions} generalizes the above for an arbitrary number of deletions ${\covSym < \nTr}$.

\begin{lemma}
  \label{lem:BeyondUnitCost:CertifyingDeletions}
  \newcommand{\trSubLem}{\trSub'}
  For
  \eqsmall{${\xTe \in \domainX}$}, %
  training set \eqsmall{$\trainSet$} where \eqsmall{$\powerSet{\trainSet}$} is its power set,
  \eqsmall{${\covSym \in \setint{\abs{\trainSet} - 1}}$},
  and \eqsmall{${\threshold \in \real}$},
  denote a deterministic model trained on subset \eqsmall{${\trSub \subseteq \trainSet}$} as \eqsmall{$\decSubset{\trSub}$}.
  Given
  \,\eqsmall{${\forall_{\trSubLem \in \powerSet{\trainSet}}\, \abs{\trSubLem} < \covSym \implies \decSubsetFunc{\trainSet \setminus \trSubLem}{\xTe} \leq \threshold}$},
  then for any \eqsmall{${\trainSetPerturb \subset \trainSet}$},
  if \eqsmall{${\decSubsetFunc{\trainSetPerturb}{\xTe} > \threshold}$} then at least \eqsmall{$\covSym$} instances from~\eqsmall{$\trainSet$} were deleted in \eqsmall{$\trainSetPerturb$}.

\end{lemma}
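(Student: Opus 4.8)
The plan is to recognize that the statement is nothing more than the contrapositive of its own hypothesis, so the argument reduces to a relabeling of the deleted instances followed by a single instantiation of the premise. First I would fix an arbitrary proper subset $\trainSetPerturb \subset \trainSet$ and introduce the deletion set $D \defeq \trainSet \setminus \trainSetPerturb$, which records exactly which instances were removed from $\trainSet$ to obtain $\trainSetPerturb$. By construction $\trainSetPerturb = \trainSet \setminus D$, the quantity $\abs{D}$ is the number of deletions, and $D \in \powerSet{\trainSet}$. Here the paper's standing assumption that training is deterministic is what lets me identify $\decSubsetFunc{\trainSetPerturb}{\xTe}$ with $\decSubsetFunc{\trainSet \setminus D}{\xTe}$: both refer to the single model determined by the set $\trainSetPerturb$, so the premise and the hypothesis speak about the same prediction.

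Next I would assume the premise $\decSubsetFunc{\trainSetPerturb}{\xTe} > \threshold$, equivalently $\decSubsetFunc{\trainSet \setminus D}{\xTe} > \threshold$, and argue by contradiction. Suppose fewer than $\covSym$ instances were deleted, i.e.\ $\abs{D} < \covSym$. Instantiating the lemma's hypothesis at the quantified subset $D$ (legal since $D \in \powerSet{\trainSet}$ and $\abs{D} < \covSym$) gives $\decSubsetFunc{\trainSet \setminus D}{\xTe} \leq \threshold$, which contradicts the premise. Hence $\abs{D} \geq \covSym$, which is exactly the claim that at least $\covSym$ instances of $\trainSet$ were deleted in $\trainSetPerturb$.

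I do not expect a genuine analytic obstacle; the entire content of the lemma is the correspondence between a perturbed (deletion-only) training set and the subset of instances that were removed to create it. The only point meriting care is the bookkeeping around well-formedness: I would verify that the constraint $\covSym \in \setint{\abs{\trainSet} - 1}$ keeps every admissible deletion set $D$ a proper subset of $\trainSet$ (so that $\trainSet \setminus D$ is nonempty and the hypothesis is never applied in an ill-defined way), and that the empty-deletion case is harmless. The genuinely operational difficulty of this approach—actually certifying a target $\covSym$ by training every model obtained from deleting fewer than $\covSym$ instances and checking each predicts at most $\threshold$—belongs to the constructive procedure described in the surrounding text rather than to the logical statement, so the lemma proper collapses to this one-line contrapositive.
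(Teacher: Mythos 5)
Your proposal is correct and takes essentially the same route as the paper: the paper's proof likewise observes that if every model trained on a subset of size at least $\abs{\trainSet} - \covSym + 1$ satisfies the invariant, then fewer than $\covSym$ deletions cannot violate it, which is exactly your contrapositive instantiated at the deletion set $D = \trainSet \setminus \trainSetPerturb$. Your explicit remark that determinism is what makes the identification of predictions well-defined matches the caveat the paper appends after its proof.
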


A strength of Lemma~\ref{lem:BeyondUnitCost:CertifyingDeletions} is its flexibility; it can be adapted to any model class, including both classifiers and regressors.
Its clear limitation is its computational complexity.

\vspace{5pt}%
\noindent%
\textit{Computational Complexity}:
Certifying ${\covSym > 1}$ requires training \eqsmall{$\bigO{\nTr^{(\covSym - 1)}}$}~models; this is a one-time, amortized cost.%
\footnote{%
  It may be possible to train just one model using full training set~$\trainSet$ and then apply certified machine unlearning~\citep{Guo:2020:CertifiedRemoval,Brophy:2021:MachineUnlearningForests} to get the remaining models.%
}

Consider separately the cost to certify each prediction.
During inference, the \eqsmall{$\bigO{\nTr^{(\covSym - 1)}}$}~models are checked.
While this may be problematic in some cases, it should be contextualized.
Recall that Sec.~\ref{sec:CertifiedKNN} explores IBLs like \knn{}, which have inference complexity~\eqsmall{\bigOn}.
Therefore, our method to certify ${\covSym = 2}$ has the same time complexity \textit{during inference} as \knn{}.

\subsection{More Submodels vs.\ Weighted Costs}
\label{sec:BeyondUnitCost:MoreModelsVsWeighted}

Increasing submodel count~\eqsmall{$\nModel$} and using weighted costs are partially conflicting approaches to increase~\eqsmall{$\certBound$}.
A natural question is which of the two approaches yields better certified robustness.
Above, we explain why increasing~\eqsmall{$\nModel$} is a poor strategy for IBLs.
For ensembles, increasing \eqsmall{$\nModel$} generally means that each submodel is trained on fewer data.

As an intuition, consider when \eqsmall{${\forall_{\modIdx}\, \covI = 2}$}.
For a unit-cost ensemble to certify equivalent~\eqsmall{$\certBound$}, submodel count~\eqsmall{$\nModel$} must about double, and
each submodel is trained on 50\% fewer data, which can significantly degrade submodel performance.
In contrast, weighted costs with ${\covSym = 2}$ reduces submodel training set sizes by~1 (Lem.~\ref{lem:BeyondUnitCost:CertifyingDeletions}).
By training weighted submodels on much more data, weighted submodels can outperform submodels from ensembles with larger~\eqsmall{$\nModel$}.
This improved submodel performance can in turn improve certified robustness.%

\newcommand{\expResParagraph}[1]{%
  \vspace{5pt}%
  \noindent%
  \textbf{#1}
}
\newcounter{TakeawayCounter}
\setcounter{TakeawayCounter}{0}
\newcommand{\ResTakeaway}[1]{%
  \vspace{5pt}%
  \stepcounter{TakeawayCounter}%
  \noindent%
  \textbf{Takeaway \#\theTakeawayCounter}: \textit{#1}
}

\section{Evaluation}%
\label{sec:ExpRes}

This section evaluates our five primary certified regressors: \knnM{} certified regression~(\knnMethod), %
partitioned certified regressors \disjointSingleMethod{} \& \disjointMultiMethod{}
as well as overlapping certified regressors \overlapSingleMethod{} \& \overlapMultiMethod{}.
Additional experimental results are in the supplement, including full \knnMethod{} certification plots~(\ref{sec:App:MoreExps:FullKNN}), alternate $\threshold$ value analysis~(\ref{sec:App:MoreExps:AltThresholds}), model training times~(\ref{sec:App:MoreExps:TrainingTime}), ILP execution time (\ref{sec:App:MoreExps:TrainingTime}), etc.

\revised{%
  To the extent of our knowledge, we propose the first pointwise certified regression methods that make no assumptions about the test distribution or model architecture.
  Without a clear baseline, we compare our five methods against each other. %
  As a reference on the clean-data performance, we report each dataset's ``\textit{uncertified}'' (non-robust) accuracy.
}

\subsection{Experimental Setup}%
\label{sec:ExpRes:Setup}

Due to space, most evaluation setup details (e.g.,~hyperparameters) are deferred to suppl.\ Sec.~\ref{sec:App:ExpSetup} with a brief summary below.
For each experiment in this section, at least ten trials were performed.
To improve readability, we only report the mean values below with variances in suppl.\ Sec.~\ref{sec:App:MoreExps:DetailedResults}.

\expResParagraph{Dataset Configuration}
Each (sub)model is trained on \eqsmall{$\frac{1}{\datasetDiv}$}\=/th of the training data, where ${\datasetDiv \in \intsPos}$.
For \knnMethod{}, always ${\datasetDiv = 1}$.
For our four ensemble-based methods \bothDisjoint{} and \bothOverlap{}, $\datasetDiv$~can significantly affect the ensemble's accuracy and best-case certified robustness~(\eqsmall{$\certBound$}).
As such, for each dataset, we report results with three different $\datasetDiv$ values.
For all ensembles, function~\eqsmall{$\hashTrain$} partitions training set~$\trainSet$ u.a.r.%
\footnote{%
  Each dataset's largest $\datasetDiv$~value maximized the ensembles' certified robustness~($\certBound$).
  For each dataset, we also report small and medium $\datasetDiv$~values.
  In practice, $\datasetDiv$ should be as small as possible while guaranteeing sufficient robustness given each application's maximum anticipated poisoning rate.
}

For our partitioned regressors \bothDisjoint{}, $\trainSet$ is split into $\datasetDiv$~blocks, with ${\nModel = \datasetDiv}$.
For our overlapping regressors \bothOverlap{}, we followed \citepos{Wang:2022:DeterministicAggregation} overlapping certified classifier evaluation.
Specifically, $\trainSet$ is partitioned into ${\datasetDiv \spreadDegreeSym}$~blocks u.a.r.
All blocks have fixed spread degree~${\spreadDegreeSym > 1}$ (see Tab.~\ref{tab:ExpRes:DatasetInfo}), and \eqsmall{$\hashModel$} assigns blocks to submodels at random.
Hence,
each overlapping ensemble necessarily has ${\nModel = \datasetDiv \spreadDegreeSym}$ submodels.

\expResParagraph{Submodel Architectures}
To demonstrate their generality, our ensemble methods use two different submodel architectures, namely ridge regression and \xgb{}~\citep{Chen:2016:XGBoost} gradient-boosted forests.
Model determinism is enforced via a fixed random seed.
Below, we report whichever submodel architecture performed the best on a held-out validation set.

\expResParagraph{Evaluation Metric}
For each test instance~\eqsmall{${(\xTe, \yTe)}$}, our goal is to determine the largest pointwise certified robustness~\eqsmall{$\certBound$} that guarantees \eqsmall{${\lThreshold \leq \decFunc{\xTe} \leq \uThreshold}$}.
Throughout this evaluation, \eqsmall{${\lThreshold \defeq \yTe - \threshold}$} and \eqsmall{${\uThreshold \defeq \yTe + \threshold}$}.  %
These bounds are w.r.t.\ each test example's \textit{true target value}~$\yTe$, \underline{not} predicted value~\eqsmall{$\decFunc{\xTe}$}.
Therefore, a large certified robustness~\eqsmall{$\certBound$} means that the \textit{prediction is both accurate and stable}.
Here, \keyword{error threshold}~$\threshold$ may be a specific fraction (e.g.,~15\%) of each instance's target value~$\yTe$ or a fixed value for the entire dataset (see Table~\ref{tab:ExpRes:DatasetInfo}). %
In practice, the appropriate $\threshold$ value is application specific.

\newcommand{\certAccVal}{\psi}
Our evaluation metric is \keyword{certified accuracy}, which is the fraction of instances with robustness \eqsmall{${\certBound \geq \certAccVal}$} for \eqsmall{${\certAccVal \in \nats}$}.
In each trial, we calculated the certified robustness~(\eqsmall{$\certBound$}) for at least 100~random test instances and report the mean certified accuracy across all trials.
See suppl.\ Sec.~\ref{sec:App:MoreExps:DetailedResults} for the certified accuracy variance.
Note that existing certified classifiers were previously evaluated using certified accuracy~\citep{Jia:2022:CertifiedKNN,Levine:2021:DPA,Wang:2022:DeterministicAggregation} with ${\threshold = 0}$, i.e.,~the predicted label must match true label~\eqsmall{$\yTe$}.

\expResParagraph{Datasets}
Our certified regressors are evaluated on six datasets: five regression and one binary classification.
Like previous work~\citep{Brophy:2022:TreeInfluence}, the datasets are preprocessed where all categorical features are transformed into one-hot encodings.
Table~\ref{tab:ExpRes:DatasetInfo} summarizes each dataset's key attributes, including its size, error threshold~($\threshold$), ensemble submodel architecture, etc. A brief description of each dataset is below.

Ames~\citep{DeCock:2011:AmesHousing} and Austin~\citep{AustinHousingDataset} estimate home prices in two American cities.
Diamonds~\citep{DiamondsDataset} predicts a diamond's price based on attributes such as cut, color, carat, etc.
Weather~\citep{Malinin:2021:Shifts} estimates ground temperature (in degrees Celsius) using date, time, and longitude/latitude information.
For computational efficiency, Weather's size was downsampled by $10\times$ u.a.r.
Life~\citep{LifeExpectancyDataset} estimates life expectancy (in years) using epidemiological and other national statistics.
Spambase~\citep{SpambaseDataset} is a binary classification dataset where emails are labeled as either spam or ham.
Spambase's positive training prior is 39\%.

\expResParagraph{Certifying ${\covSym > 1}$}
For our two weighted methods, \disjointMultiMethod{} and \overlapMultiMethod{}, our evaluation attempts to certify each submodel's robustness against deletions up to ${\covSym = 2}$.

\begin{table}[t]
  \centering
  \caption{%
      \textbf{Evaluation Dataset Summary}:
      Training set size~($\nTr$),
      data dimension,
      overlapping spread degree~($\spreadDegreeSym$),
      error threshold~($\threshold$),
      and submodel architecture for the six datasets.
      Error thresholds that are a percentage of each instance's true target value are denoted~${X\% \cdot \y}$.
      Alternate $\threshold$ values are evaluated in suppl.\ Sec.~\ref{sec:App:MoreExps:AltThresholds}.
  }\label{tab:ExpRes:DatasetInfo}
  {%
    \tableSize%
\newcommand{\fifteenY}{$15\% \cdot \y$}
\newcommand{\rDeg}{XX}

\newcommand{\dsName}[2]{#1}

\newcommand{\tabXGB}{\xgb{}}
\newcommand{\clsThreshold}{0}

\begin{tabular}{@{}lrrrrr@{}}
  \toprule
  Dataset   & Size~($\nTr$) & Dim. & Deg.~($\spreadDegreeSym$) & Error~($\threshold$)  & Submodel \\
  \midrule
  \dsName{Ames}{DeCock:2011:AmesHousing}
            & 2.6k    & 253   & 17     & \fifteenY{}      & \tabXGB{}  \\
  \dsName{Austin}{AustinHousingDataset}
            & 12k     & 315   & 13     & \fifteenY{}      & \tabXGB{}  \\
  \dsName{Diamonds}{DiamondsDataset}
            & 48k     & 26    & 9      & \fifteenY{}      & Ridge      \\
  \dsName{Weather}{Malinin:2021:Shifts}
            & 308k    & 140   & 5      & $3^{\circ}$\,C   & Ridge      \\
  \dsName{Life}{LifeExpectancyDataset}
            & 2.6k    & 204   & 13     & 3~years          & \tabXGB{}  \\
  \dsName{Spambase}{SpambaseDataset}
            & 4.1k    & 57    & 17     & \clsThreshold{}  & Ridge      \\
  \bottomrule
\end{tabular}
  }%
\end{table}

\subsection{Analyzing the Certified Accuracy}%
\label{sec:ExpRes:Results}

Figure~\ref{fig:ExpRes:CertificationBoundTrend} visualizes our methods' mean certified accuracy for the six datasets.
Due to space, the corresponding numerical values, including variance, appear in Sec.~\ref{sec:App:MoreExps:DetailedResults}.
Below, we briefly summarize the experiments' primary takeaways.

\ResTakeaway{Both our ensemble and IBL regressors certify non-trivial fractions of the training set.}
For the Ames and Life datasets, \overlapMultiMethod{} certifies 50\% of test predictions up to 1\% training set corruption.
Similarly, \knnMethod{} certifies 30\% of predictions on Ames up to 4\%~corruption.
These certified guarantees are without explicit assumptions about the data distribution or, in the case of ensembles, the submodel's architecture.
For other datasets, we certify predictions up to hundreds or thousands of training set modifications.

\ResTakeaway{Ensemble regressors have better peak performance.}
Across all six datasets, the ensemble-based methods all had better peak certified accuracy than \knnMethod{}.
The performance gap was as large as 3.5$\times$ and is not primarily due to feature dimension as \knnMethod{} performed worst on Diamonds, which has the smallest dimension by far.

\ResTakeaway{\overlapMultiMethod{} achieves the largest certified robustness~(\eqsmall{$\certBound$}) amongst the ensemble methods.}
This is observed using each dataset's largest $\datasetDiv$ value.
For all six datasets, there is a (significant) gap between \overlapMultiMethod{} (\protect\pgfPlotsLegendRef{\ref{leg:ExpRes:Bound:WOCR}}) and our second-best ensemble method, \disjointMultiMethod{} (\protect\pgfPlotsLegendRef{\ref{leg:ExpRes:Bound:WPCR}}).

\ResTakeaway{\knnMethod{} achieves the largest certified robustness.}
Although \knnMethod{} certifies (far) fewer instances than the ensembles, for instances that it can certify, its maximum robustness~\eqsmall{$\certBound$} is generally far larger than that of \overlapMultiMethod{}.
For example with Weather, \knnMethod{}'s maximum~\eqsmall{$\certBound$} is 5$\times$ larger than \overlapMultiMethod{}'s.
Suppl.\ Sec.~\ref{sec:App:MoreExps:FullKNN} best visualizes this trend in its plots of \knnMethod{}'s full certified accuracy.

\ResTakeaway{\overlapMultiMethod{} achieves state-of-the-art certified accuracy for binary classification.}
While regression is this work's primary focus, recall
that binary classification can be solved by a regressor.
For binary classification, \knnMethod{}'s \eqsmall{$\certBound$} is identical to \citepos{Jia:2022:CertifiedKNN} \knn{} classifier; \disjointSingleMethod{} certifies equivalent robustness as DPA, and \overlapSingleMethod{} very closely approximates \citepos{Wang:2022:DeterministicAggregation} overlapping method.
Observe that \overlapMultiMethod{} outperforms the unweighted ensembles and \knnMethod{} on Spambase's~\citep{SpambaseDataset} two largest $\datasetDiv$ values.
Note that Spambase's maximum $\datasetDiv$ value cannot be increased further without severely degrading submodel performance.
This provides empirical evidence for Sec.~\ref{sec:BeyondUnitCost:MoreModelsVsWeighted}'s claim that a weighted strategy can outperform increasing submodel count~$\nModel$.

\ResTakeaway{$\datasetDiv$ can significantly affect certified accuracy.}
Previous certified classifier evaluations~\citep{Jia:2022:CertifiedKNN,Wang:2022:DeterministicAggregation,Levine:2021:DPA} under-explore $\datasetDiv$'s role.
Those works primarily evaluate vision datasets where the training data is supplemented by
(1)~using a pre-trained model to extract much better features~\citep{Jia:2021:CertifiedBaggingRobustness,Jia:2022:CertifiedKNN}
or (2)~using vision data augmentation~\citep{Levine:2021:DPA,Wang:2022:DeterministicAggregation}.
For the tabular datasets evaluated here, such options are not as available.

Without such augmentation, increasing~$\datasetDiv$ can significantly degrade an ensemble's peak certified accuracy.
As an example, the ensembles' peak certified accuracy can decline by up to 28\% between training a model on all of~$\trainSet$ versus a dataset's maximum $\datasetDiv$~value (compare to uncertified accuracy \pgfPlotsLegendRef{\ref{leg:ExpRes:Bound:Uncertified}} in Fig.~\ref{fig:ExpRes:CertificationBoundTrend}).
Therefore, when thinking about certified classifiers and regressors, always consider the potential benefits of external (clean) data augmentation.
For instance, in our experiments, XGBoost certified ensembles' accuracy improved by multiple percent when using \textit{mixup} data augmentation~\citep{Zhang:2018:Mixup}.%

\clearpage
\newpage
\newgeometry{margin=0.75in, top=0.65in}

\newcommand{\boundTrend}[9]{%
  \centering
  \pgfplotstableread[col sep=comma] {plots/data/#1}\thedata%
  \begin{tikzpicture}
    \begin{axis}[
      smooth,
      width={#2},%
      height={\BoundTrendHeight},%
      xmin={0},%
      xmax={#3},%
      xtick distance={#4},
      minor x tick num={3},
      x tick label style={font=\boundFontSize,align=center},%
      xlabel={#5},
      xmajorgrids,
      axis x line*=bottom,  %
      ymin=0,
      ymax={#6},
      ytick distance={#7},
      minor y tick num={3},
      y tick label style={font=\boundYTickFontSize,align=center},%
      ylabel style={font=\boundYAxisFontSize,align=center},%
      ylabel={#8},
      ymajorgrids,
      axis y line*=left,  %
      mark size=0pt,
      #9,
      ]
      \shadePlot{Baseline}{baseline trend line}{overlap line fill}

      \shadePlot{KNN}{knn trend line}{overlap line fill}

      \shadePlot{Disjoint}{single disjoint line}{disjoint line fill}
      \shadePlot{Overlap}{single overlap line}{overlap line fill}

      \shadePlot{Multi-Disjoint}{multi disjoint line}{disjoint line fill}
      \shadePlot{Multi-Overlap}{multi overlap line}{overlap line fill}

    \end{axis}
  \end{tikzpicture}
}

\newcommand{\shadePlot}[3]{%
  \addplot[#2] table [x=X, y=#1-Mean] \thedata;
}

\newcommand{\boundDatasetDivCaption}[1]{${\datasetDiv = #1}$}
\newcommand{\boundLegendSpacer}{\vspace{6pt}}

\newcommand{\baseXlabelStr}{Certified Robustness~($\certBound$)}
\newcommand{\certifiedAccStr}{Certified Acc.\ (\%)}

\newcommand{\boundXLabel}[1]{\footnotesize\baseXlabelStr{}, ${\datasetDiv = #1}$}
\newcommand{\boundYLabel}[1]{\textbf{#1} \\[1.03ex] \certifiedAccStr}
\newcommand{\boundSubcaption}[2]{}
\newcommand{\BoundTrendHeight}{1.50in}
\newcommand{\boundYAxisFontSize}{\scriptsize}
\newcommand{\FirstMiniWidth}{0.362\textwidth}
\newcommand{\FirstPlotWidth}{0.98\textwidth}
\newcommand{\RestMiniWidth}{0.308\textwidth}
\newcommand{\RestPlotWidth}{\textwidth+30pt}
\newcommand{\boundDatasetSpacer}{\vspace{04pt}}

\newcommand{\plotHorizontalSpacer}{\hfill}

\newcommand{\boundYTickDist}{20}

\newcommand{\SkipZeroTick}{yticklabels={,,20,40,60,80},}
\newcommand{\SkipZeroTickFifteen}{yticklabels={,,15,30,45,60},}
\newcommand{\NoYTicks}{yticklabels={,,,},}

\begin{figure*}[t]
  \centering
\newcommand{\oursText}{~\revised{\scriptsize(ours)}}
\newcommand{\boundLegendFontSize}{\footnotesize}
\begin{tikzpicture}
  \begin{axis}[%
      hide axis,  %
      no marks,
      xmin=0,  %
      xmax=1,
      ymin=0,
      ymax=1,
      scale only axis,
      width=1mm, %
      legend cell align={left},              %
      legend style={font=\boundLegendFontSize},
      legend columns=6,
      legend style={/tikz/every even column/.append style={column sep=0.35cm}},
      legend image post style={xscale=0.6},  %
    ]
    \addplot [
      baseline trend line,
      line width=0.9pt,
    ] coordinates {(0,0)};
    \addlegendentry{Uncertified (${\datasetDiv = 1}$)}%
    \label{leg:ExpRes:Bound:Uncertified}

    \addplot [
      single disjoint line,
      line width=0.9pt,
    ] coordinates {(0,0)};
  \addlegendentry{\disjointSingleMethod{}\oursText{}}%
    \label{leg:ExpRes:Bound:PCR}

    \addplot [
      single overlap line,
      line width=0.9pt,
    ] coordinates {(0,0)};
    \addlegendentry{\overlapSingleMethod{}\oursText{}}%
    \label{leg:ExpRes:Bound:OCR}

    \addplot [
      multi disjoint line,
      line width=0.9pt,
    ] coordinates {(0,0)};
    \addlegendentry{\disjointMultiMethod{}\oursText{}}%
    \label{leg:ExpRes:Bound:WPCR}

    \addplot [
      multi overlap line,
      line width=0.9pt,
    ] coordinates {(0,0)};
    \addlegendentry{\overlapMultiMethod{}\oursText{}}%
    \label{leg:ExpRes:Bound:WOCR}

    \addplot [
      knn trend line,
      line width=0.9pt,
    ] coordinates {(0,0)};
    \addlegendentry{\knnMethod{}\oursText{}}%
    \label{leg:ExpRes:Bound:KNN}
  \end{axis}
\end{tikzpicture}
 
  \newcommand{\AmesYMax}{93.5}
  \boundLegendSpacer{}
  \begin{subfigure}[b]{\textwidth}
    \begin{subfigure}[b]{\FirstMiniWidth}
      \boundTrend{ames/ames_00025.csv}  %
                 {\FirstPlotWidth}    %
                 { 23}                %
                 {  5}                %
                 {\boundXLabel{25}}   %
                 {\AmesYMax}          %
                 {\boundYTickDist}    %
                 {\boundYLabel{Ames Housing}}  %
                 {\SkipZeroTick}  %
      \boundSubcaption{Ames Housing}{25}
    \end{subfigure}
    \plotHorizontalSpacer
    \begin{subfigure}[b]{\RestMiniWidth}
      \boundTrend{ames/ames_00125.csv}  %
                 {\RestPlotWidth}       %
                 { 47}                  %
                 { 10}                  %
                 {\boundXLabel{125}}    %
                 {\AmesYMax}            %
                 {\boundYTickDist}      %
                 {}                     %
                 {\NoYTicks{}}  %
      \boundSubcaption{Ames Housing}{125}
    \end{subfigure}
    \plotHorizontalSpacer
    \begin{subfigure}[b]{\RestMiniWidth}
      \boundTrend{ames/ames_00251.csv}  %
                 {\RestPlotWidth}       %
                 { 97}                  %
                 { 20}                  %
                 {\boundXLabel{251}}    %
                 {\AmesYMax}            %
                 {\boundYTickDist}      %
                 {}                     %
                 {\NoYTicks{}}  %
      \boundSubcaption{Ames Housing}{251}
    \end{subfigure}
  \end{subfigure}

  \newcommand{\AustinYMax}{73.0}
  \boundDatasetSpacer{}
  \begin{subfigure}[b]{\FirstMiniWidth}
    \boundTrend{austin/austin_00051.csv}  %
               {\FirstPlotWidth}    %
               { 21}                %
               {  5}                %
               {\boundXLabel{51}}   %
               {\AustinYMax}        %
               { 15}                %
               {\boundYLabel{Austin Housing}}  %
               {\SkipZeroTickFifteen}  %
    \boundSubcaption{Austin Housing}{51}
  \end{subfigure}
  \plotHorizontalSpacer
  \begin{subfigure}[b]{\RestMiniWidth}
    \boundTrend{austin/austin_00301.csv}  %
               {\RestPlotWidth}     %
               {115}                %
               { 25}                %
               {\boundXLabel{301}}  %
               {\AustinYMax}        %
               { 15}                %
               {}                   %
               {\NoYTicks{}}  %
    \boundSubcaption{Austin Housing}{301}
  \end{subfigure}
  \plotHorizontalSpacer
  \begin{subfigure}[b]{\RestMiniWidth}
    \boundTrend{austin/austin_00701.csv}  %
               {\RestPlotWidth}     %
               {215}                %
               { 50}                %
               {\boundXLabel{701}}  %
               {\AustinYMax}        %
               { 15}                %
               {}                   %
               {\NoYTicks{}}  %
    \boundSubcaption{Austin Housing}{701}
  \end{subfigure}

  \newcommand{\DiamondsYMax}{78.0}
  \boundDatasetSpacer{}
  \begin{subfigure}[b]{\FirstMiniWidth}
    \boundTrend{diamonds/diamonds_00151.csv}  %
               {\FirstPlotWidth}              %
               {160}                          %
               { 30}                          %
               {\boundXLabel{151}}            %
               {\DiamondsYMax}                %
               {\boundYTickDist}              %
               {\boundYLabel{Diamonds}}       %
               {\SkipZeroTick}                %
    \boundSubcaption{Diamonds}{151}
  \end{subfigure}
  \plotHorizontalSpacer
  \begin{subfigure}[b]{\RestMiniWidth}
    \boundTrend{diamonds/diamonds_00501.csv}  %
               {\RestPlotWidth}               %
               {475}                          %
               {100}                          %
               {\boundXLabel{501}}            %
               {\DiamondsYMax}                %
               {\boundYTickDist}              %
               {}                             %
               {\NoYTicks{}}                  %
    \boundSubcaption{Diamonds}{501}
  \end{subfigure}
  \plotHorizontalSpacer
  \begin{subfigure}[b]{\RestMiniWidth}
    \boundTrend{diamonds/diamonds_01001.csv}  %
               {\RestPlotWidth}               %
               {675}                          %
               {150}                          %
               {\boundXLabel{1001}}           %
               {\DiamondsYMax}                %
               {\boundYTickDist}              %
               {}                             %
               {\NoYTicks{}}  %
    \boundSubcaption{Diamonds}{1001}
  \end{subfigure}

  \newcommand{\WeatherYMax}{88.0}
  \boundDatasetSpacer{}
  \begin{subfigure}[b]{\FirstMiniWidth}
    \boundTrend{shifts-weather-medium/shifts-weather-medium_00051_filt.csv}  %
               {\FirstPlotWidth}              %
               { 49}                          %
               { 10}                          %
               {\boundXLabel{51}}             %
               {\WeatherYMax}                 %
               {\boundYTickDist}              %
               {\boundYLabel{Weather}} %
               {\SkipZeroTick}  %
    \boundSubcaption{Weather}{51}
  \end{subfigure}
  \plotHorizontalSpacer
  \begin{subfigure}[b]{\RestMiniWidth}
   \boundTrend{shifts-weather-medium/shifts-weather-medium_01501_filt.csv}  %
               {\RestPlotWidth}               %
               {1650}                         %
               { 300}                         %
               {\boundXLabel{1501}}           %
               {\WeatherYMax}                 %
               {\boundYTickDist}              %
               {}                             %
               {\NoYTicks{}}  %
    \boundSubcaption{Weather}{1501}
  \end{subfigure}
  \plotHorizontalSpacer
  \begin{subfigure}[b]{\RestMiniWidth}
    \boundTrend{shifts-weather-medium/shifts-weather-medium_03001_filt.csv}  %
               {\RestPlotWidth}               %
               {2700}                         %
               { 600}                         %
               {\boundXLabel{3001}}           %
               {\WeatherYMax}                 %
               {\boundYTickDist}              %
               {}                             %
               {\NoYTicks{}}  %
    \boundSubcaption{Weather}{3001}
  \end{subfigure}

  \newcommand{\LifeYMax}{95.0}
  \boundDatasetSpacer{}
  \begin{subfigure}[b]{\FirstMiniWidth}
    \boundTrend{life/life_00025.csv}   %
               {\FirstPlotWidth}       %
               { 26}                   %
               {  5}                   %
               {\boundXLabel{25}}      %
               {\LifeYMax}             %
               {\boundYTickDist}       %
               {\boundYLabel{Life}}    %
               {\SkipZeroTick}         %
    \boundSubcaption{Life}{25}
  \end{subfigure}
  \plotHorizontalSpacer
  \begin{subfigure}[b]{\RestMiniWidth}
    \boundTrend{life/life_00101.csv}   %
               {\RestPlotWidth}        %
               { 52}                   %
               { 10}                   %
               {\boundXLabel{101}}     %
               {\LifeYMax}             %
               {\boundYTickDist}       %
               {}                      %
               {\NoYTicks{}}           %
    \boundSubcaption{Life}{101}
  \end{subfigure}
  \plotHorizontalSpacer
  \begin{subfigure}[b]{\RestMiniWidth}
    \boundTrend{life/life_00201.csv}   %
               {\RestPlotWidth}        %
               {130}                   %
               { 30}                   %
               {\boundXLabel{201}}     %
               {\LifeYMax}             %
               {\boundYTickDist}       %
               {}                      %
               {\NoYTicks{}}  %
    \boundSubcaption{Life}{201}
  \end{subfigure}

  \newcommand{\SpambaseYMax}{90.0}
  \boundDatasetSpacer{}
  \begin{subfigure}[b]{\FirstMiniWidth}
    \boundTrend{spambase/spambase_00025.csv}  %
               {\FirstPlotWidth}              %
               { 26}                          %
               {  5}                          %
               {\boundXLabel{25}}             %
               {\SpambaseYMax}                %
               {\boundYTickDist}              %
               {\boundYLabel{Spambase}}  %
               {\SkipZeroTick}  %
    \boundSubcaption{Spambase}{25}
  \end{subfigure}
  \plotHorizontalSpacer
  \begin{subfigure}[b]{\RestMiniWidth}
    \boundTrend{spambase/spambase_00151.csv}  %
               {\RestPlotWidth}               %
               {115}                          %
               { 25}                          %
               {\boundXLabel{151}}            %
               {\SpambaseYMax}                %
               {\boundYTickDist}              %
               {}     %
               {\NoYTicks{}}  %
    \boundSubcaption{Spambase}{151}
  \end{subfigure}
  \plotHorizontalSpacer
  \begin{subfigure}[b]{\RestMiniWidth}
    \boundTrend{spambase/spambase_00301.csv}  %
               {\RestPlotWidth}               %
               {215}                          %
               { 50}                          %
               {\boundXLabel{301}}            %
               {\SpambaseYMax}                %
               {\boundYTickDist}              %
               {}     %
               {\NoYTicks{}}  %
    \boundSubcaption{Spambase}{301}
  \end{subfigure}
  \null{}\vfill
  \caption{\textbf{Certified Accuracy}:
    Mean certified accuracy (larger is better) for our five primary certified regressors.
    \knnMethod{} is always trained on all of training set~$\trainSet$ (i.e., ${\datasetDiv = 1}$).
    Ensemble submodels are trained on $\frac{1}{\datasetDiv}$\=/th of~$\trainSet$, with three $\datasetDiv$ values tested per dataset.
    The x\=/axis is clipped to enhance readability; see suppl.\ Sec.~\ref{sec:App:MoreExps:FullKNN} for \knnMethod{}'s full results.
    \revised{
      The best performing method depends on the target certified robustness~$\certBound$.
      For smaller $\certBound$~values, \overlapMultiMethod{} achieves the best certified accuracy.
      For larger {$\certBound$}~values, \knnMethod{} outperforms the ensemble methods.
      This result aligns with previous findings on certified classification~\citep{Jia:2022:CertifiedKNN}.
      Sec.~\ref{sec:ExpRes:Results} summarizes these experiments' primary takeaways.
    }
    See Sec.~\ref{sec:App:MoreExps:DetailedResults} for the numerical results, including variance.
  }
  \label{fig:ExpRes:CertificationBoundTrend}
\end{figure*}
 \restoregeometry
\clearpage
\newpage

\section{Conclusions}%
\label{sec:Conclusions}

This paper describes a novel reduction from certified regression to certified classification based on median perturbation.
Applying this reduction, we propose six new certified regressors that require no assumptions about the data distribution or model architecture.
As improved voting-based, certified classifiers are proposed in the future, our reduction can be applied to those methods too.
This enables certified regression to keep pace with the rapid advances in certified classification.%

While this work focuses on certified defenses against poisoning attacks, some certified \textit{evasion} defenses also rely on voting-based guarantees~\citep{Levine:2020:RandomizedAblation,Jia:2022:CertifiedKNN}.
Our reduction from certified regression to certified classification applies to those certified evasion defenses as well.
Lastly, %
our empirical results show that improved certified guarantees are possible when the unit-cost assumption is replaced by our tighter weighted analysis.
These certification gains apply to both classification and regression, but
Sec.~\ref{sec:BeyondUnitCost:CertifyingByConstruction}'s approach is computationally expensive.
We advocate for better methods that efficiently certify beyond ${\covSym = 1}$.
\section*{Acknowledgments}
  The authors thank Jonathan Brophy and Will Bolden for helpful discussions and feedback on earlier drafts of this manuscript.

  This work was supported by a grant from the Air Force Research Laboratory and the Defense Advanced Research Projects Agency (DARPA) — agreement number FA8750\=/16\=/C\=/0166, subcontract K001892\=/00\=/S05, as well as a second grant from DARPA, agreement number HR00112090135.
  This work benefited from access to the University of Oregon high performance computer, Talapas.
\newrefcontext[sorting=nyt]
\renewcommand*{\bibfont}{\footnotesize}
\printbibliography%
\newrefcontext[sorting=ynt]

\FloatBarrier
\newpage
\clearpage
\startcontents  %
\newpage
\onecolumn
\thispagestyle{empty}
\pagenumbering{arabic}%
\renewcommand*{\thepage}{A\arabic{page}}

\appendix

\SupplementaryMaterialsTitle{}

\setlength{\parskip}{8pt}

\begin{center}
  \textbf{\large Organization of the Appendix}
\end{center}
\printcontents{Appendix}{1}[2]{}

\clearpage
\newpage
\section{\revised{Nomenclature Reference}}%
\label{sec:App:Nomenclature}

\revised{%
\renewcommand*{\arraystretch}{1.2}
\begin{longtable}{lp{5.7in}}
  \caption{%
    \revised{%
      \textbf{Nomenclature Reference}:
      Related symbols are grouped together.
      For example, the first group lists the acronyms of our primary certified regressors.
      Note that this table continues on the next page.%
    }%
  }\label{tab:App:Nomenclature}
  \\\toprule
  \endfirsthead
  \caption{%
    \revised{%
      \textbf{Nomenclature Reference (Continued)}:
      Related symbols are grouped together.%
    }%
  }%
  \\\toprule
  \endhead
  \knnMethod{}      & Our \knn{}-based certified regressor (Sec.~\ref{sec:CertifiedKNN:FixedPopulation}) \\
  \disjointSingleMethod{}
                    & Our \underline{p}artitioned \underline{c}ertified \underline{r}egressor (Sec.~\ref{sec:CertifiedGeneral:PCR}) \\
  \disjointMultiMethod{}
                    & Our \underline{w}eighted-cost \underline{p}artitioned \underline{c}ertified \underline{r}egressor (Sec.~\ref{sec:CertifiedGeneral:NuPCR}) \\
  \disjointSingleMethod{}
                    & Our \underline{o}verlapping \underline{c}ertified \underline{r}egressor (Sec.~\ref{sec:CertifiedOverlap:Uniform}) \\
  \disjointMultiMethod{}
                    & Our \underline{w}eighted-cost \underline{o}verlapping \underline{c}ertified \underline{r}egressor (Sec.~\ref{sec:CertifiedOverlap:NonUniform}) \\
  \midrule
  $\certBound$      & Pointwise certified robustness -- number of possible training set insertions or deletions without violating the prediction bounds -- our primary goal \\
  \midrule
  $\setint{k}$      & Integer set $\set{1, \ldots, k}$       \\
  $\powerSetInt{k}$ & Power set of integer set~$\setint{k}$  \\
  $\ind{q}$         & Indicator function where ${\ind{q} = 1}$ if $q$ is true and 0 otherwise \\
  $\medFunc{A}$     & Median of (multi)set $A$  \\
  $\harmonic{k}$    & $k$\=/th harmonic number where \eqsmall{${\harmonic{k} = \sum_{i = 1}^{k} \frac{1}{i}}$} \\
  \midrule
  $\X$              & Feature vector \\
  $\domainX$        & Feature domain where ${\forall_{\X}\, \X \in \domainX}$ and ${\domainX \subseteq \real^{\dimX}}$ \\
  $\dimX$           & Feature dimension \\
  $\y$              & Target value \\
  $\domainY$        & Target range where ${\forall_{\y} \, \y \in \domainY}$ and ${\domainY \subseteq \real}$ \\
  $\domainZ$        & Instance space where ${\domainZ \defeq \domainX \times \domainY}$  \\
  ${(\xTe, \yTe)}$  & Arbitrary test instance \\
  \midrule
  $\trainSet$       & Training set where ${\trainSet \subseteq \domainZ}$ \\
  $\nTr$            & Training set size where ${\nTr \defeq \abs{\trainSet}}$ \\
  $\nBlocks$        & Number of training set blocks \\
  $\hashTrain$      & Training set partitioning function where $\func{\hashTrain}{\domainZ}{\setint{\nBlocks}}$ \\
  $\blockIdx$       & Training set block index where ${\blockIdx \in \setint{\nBlocks}}$ \\
  \eqsmall{$\blockI$}
                    & $\blockIdx$\=/th training set block where
                      \eqsmall{${\blockI \defeq \setbuild{\z \in \trainSet}{\hashTrainFunc{\z} = \blockIdx}}$}
                      and
                      \eqsmall{${\forall_{\blockIdx' \ne \blockIdx} \, \blockI \cap \blockI[\blockIdx'] = \emptyset}$}
                      \\
  \midrule
  $\dec$            & Robust regressor -- either an ensemble or instance-based learner -- where $\func{\dec}{\domainX}{\domainY}$ \\
  $\decFunc{\xTe}$  & Regressor $\dec$'s prediction for test feature vector~$\xTe$ \\
  $\threshold$      & One-sided upper bound for robustness certification where ${\decFunc{\xTe} \leq \threshold}$ \\
  $\lThreshold$     & Two-sided lower bound for robustness certification where ${\lThreshold \leq \decFunc{\xTe} \leq \uThreshold}$ \\
  $\uThreshold$     & Two-sided upper bound for robustness certification where ${\lThreshold \leq \decFunc{\xTe} \leq \uThreshold}$ \\
  \midrule
  \knn{}            & Vanilla $\kNeigh$\=/nearest neighbors \\
  \knnM{}           & $\kNeigh$\=/nearest neighbors with median as the decision function \\
  \rnn{}            & Radius nearest neighbors \\
  $\neighTe$        & Nearest-neighbors neighborhood (multi)set for test feature vector~$\xTe$ \\
  \midrule
  $\nModel$         & Ensemble submodel count where by definition $\nModel$ is selected to be odd\=/valued \\
  $\modIdx$         & Submodel index where ${\modIdx \in \setint{\nModel}}$ \\
  $\decI$           & $\modIdx$\=/th ensemble submodel \\
  $\covI$           & Number of training set modifications required to violate invariant ${\lThreshold \leq \decFuncI{\xTe} \leq \uThreshold}$. Note that $\covI$ is one larger than $\decI$'s certified robustness \\
  $\covMax$         & Maximum submodel modification requirement where ${\covMax \defeq \max_{\modIdx} \covI}$ \\
  $\hashModel$      & Overlapping training set block assignment function where $\func{\hashModel}{\setint{\nBlocks}}{\powerSetInt{\nModel}}$ \\
  $\modTrainSetI$   & Submodel $\decI$'s training set where for overlapping regression \eqsmall{${\modTrainSetI = \bigcup_{\substack{\blockIdx \in \setint{\nBlocks} \\ \modIdx \in \hashModelFunc{\blockIdx}}} \blockI}$} \\
  $\datasetDiv$     & Inverse of the fraction of the training set used to train each submodel, where \eqsmall{${\forall_{\modIdx} \, q = \frac{\nTr}{\lvert \modTrainSetI \rvert}}$} \\
  $\spreadDegI$     & Spread degree of training set block \eqsmall{$\blockI$} where \eqsmall{${\spreadDegI \defeq \lvert{\hashModelFunc{\blockI}}\rvert}$} \\
  $\spreadDegMax$   & Maximum spread degree where ${\spreadDegMax \defeq \max_{\blockIdx} \spreadDegI}$ \\
  $\lowerSubMods$   & Set of ensemble submodels predicting ${\decFuncI{\xTe} \leq \threshold}$ where ${\lowerSubMods \subseteq \setint{\nModel}}$ \\
  \midrule
  $\setVals$        & Real-valued (multi)set, e.g., \knn{} neighborhood or ensemble submodel predictions, where ${\nModel = \abs{\setVals}}$ \\
  $\lowerSetVals$   & Lower thresholded real-valued multiset where ${\lowerSetVals \defeq \setbuild{\setScalarI \in \setVals}{\setScalarI \leq \threshold}}$ \\
  $\upperSetVals$   & Upper thresholded real-valued multiset where ${\upperSetVals \defeq \setbuild{\setScalarI \in \setVals}{\setScalarI > \threshold}}$ \\
  $\setValsZO$      & Binarized multiset where ${\setValsZO \defeq \setbuild{\sgnp{\setScalarI}}{\setScalarI \in \setVals}}$ \\
  $\setValsMod$     & Adversarially perturbed real-valued (multi)set formed from (multi)set $\setVals$ \\
  \midrule
  $\covSet$         & Weight set where ${\covSet \defeq \setbuild{\covI}{\modIdx \in \setint{\nModel}}}$ \\
  $\lowerCovVals$   & Weight set corresponding to values set $\lowerSetVals$ where ${\lowerCovVals \defeq \setbuild{\covI \in \covSet}{\setScalarI \in \setVals}}$ \\
  $\swapBound$      & Midpoint distance where ${\swapBound \defeq \nLower - \halfModels}$ \\
  $\smallestCovVals$
                    & ${\swapBound}$ smallest values in $\lowerCovVals$ \\
  \midrule
  ILP               & Integer linear program \\
  $\dsBlockVarI$    & ILP integral variable representing the number of instance modifications made to training set block~$\blockI$ \\
  $\modelVarI$      & ILP binary variable which equals~1 if submodel $\decI$ has been perturbed such that ${\decFuncI{\xTe} > \threshold}$ and 0~otherwise \\
  $\isMultiVar$     & ILP binary variable which equals~1 if in the case of weighted analysis and 0~otherwise \\
  PSMC              & Partial set multicover \\
  \midrule
  $\greedyBound$    & Upper-bound on certified robustness~$\certBound$ returned by a greedy algorithm \\
  \bottomrule
\end{longtable}
}%
 
\clearpage
\newpage
\newcommand{\proofSectionHeader}[1]{%
  \vspace{10pt}%
  \noindent%
  \textbf{\large #1}

}

\section{Worst-Case Insertions and Deletions are Interchangeable: Proof}%
\label{sec:App:AdditionalLemma}
\begin{lemma}%
  \label{lem:App:Proofs:AddDeleteSame}%
  \newcommand{\setScalarIM}{\setScalarI_{\nModUpper}}
  For real multiset~\eqsmall{$\setVals$} of cardinality \eqsmall{${\nModel > 1}$},
  if an arbitrarily-large value is inserted into~\eqsmall{$\setVals$} or the smallest value in \eqsmall{$\setVals$} is deleted, the resulting sets' medians are equivalent.
\end{lemma}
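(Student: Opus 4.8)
The plan is to prove the equality of medians by a direct case analysis on the parity of $\nModel$, tracking exactly how each operation shifts the relevant order statistics. First I would fix a decreasing-order labeling of $\setVals$, writing its sorted values as $v_1 \geq v_2 \geq \cdots \geq v_\nModel$, so that $v_i$ denotes the $i$-th largest element. Under this labeling, inserting an arbitrarily large value $\infty$ yields the sequence $\infty, v_1, \ldots, v_\nModel$ of length $\nModel+1$, whose $j$-th largest entry is $\infty$ when $j=1$ and $v_{j-1}$ when $j \geq 2$; deleting the smallest value $v_\nModel$ yields the sequence $v_1, \ldots, v_{\nModel-1}$ of length $\nModel-1$, whose $j$-th largest entry is simply $v_j$. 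I would then apply the median definition from Sec.~\ref{sec:Preliminaries} to each resulting multiset.

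When $\nModel$ is odd, both new multisets have even cardinality. The inserted multiset (size $\nModel+1$) has median equal to the midpoint of its $\frac{\nModel+1}{2}$-th and $\frac{\nModel+3}{2}$-th largest entries, which are $v_{(\nModel-1)/2}$ and $v_{(\nModel+1)/2}$; the deleted multiset (size $\nModel-1$) has median equal to the midpoint of its $\frac{\nModel-1}{2}$-th and $\frac{\nModel+1}{2}$-th largest entries, which are the same two values $v_{(\nModel-1)/2}$ and $v_{(\nModel+1)/2}$. When $\nModel$ is even, both new multisets have odd cardinality: the inserted multiset's median is its $(\frac{\nModel}{2}+1)$-th largest entry, namely $v_{\nModel/2}$, while the deleted multiset's median is its $\frac{\nModel}{2}$-th largest entry, also $v_{\nModel/2}$. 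In every case the two medians coincide, which gives the claim.

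The step I would flag as the crux is verifying that the inserted value $\infty$ is never itself one of the selected median order statistics, since otherwise the insertion side would be infinite while the deletion side stays finite. This is precisely where the hypothesis $\nModel > 1$ is used: it forces the largest index queried on the insertion side (the $\frac{\nModel+3}{2}$-th largest when $\nModel$ is odd, forcing $\nModel \geq 3$, and the $(\frac{\nModel}{2}+1)$-th largest when $\nModel$ is even, forcing $\nModel \geq 2$) to remain at most $\nModel$, so the median always lands on genuine elements $v_i$ of the original multiset rather than on $\infty$. I would close by observing that the shared conclusion — both operations make the median select the same central order statistics of $\setVals$ — is exactly the intuition that a worst-case insertion and a worst-case deletion perturb the median identically, justifying the interchangeability invoked in Figs.~\ref{fig:WarmUp:MedianPerturbation:Add} and~\ref{fig:WarmUp:MedianPerturbation:Delete}.
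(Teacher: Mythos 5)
Your proof is correct and follows essentially the same approach as the paper's: both arguments track which order statistics of $\setVals$ the median selects after each operation and show that insertion of $\infty$ and deletion of the minimum land on the same underlying elements. If anything, yours is slightly more complete, since the paper's proof only writes out the odd-$\nModel$ case while you explicitly verify the even case and the role of the hypothesis ${\nModel > 1}$ in keeping $\infty$ out of the selected order statistics.
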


\begin{proof}
  For simplicity and w.l.o.g., assume \eqsmall{$\setVals$} is ordered where \eqsmall{${\setScalarOne \leq \cdots \leq \setScalarFin}$}.
  Let \eqsmall{${\medIdx \in \sbrack{1, \nModel}}$} denote the median's index.
  If \eqsmall{$\nModel$}~is odd, \eqsmall{${\medIdx = \halfModels}$}; otherwise, when \eqsmall{$\nModel$} is even, \eqsmall{${\medIdx = \frac{\nModel}{2} + \frac{1}{2}}$}, i.e.,~the midpoint between the \eqsmall{$\frac{\nModel}{2}$}\=/th and \eqsmall{${(\frac{\nModel}{2} + 1)}$}\=/th largest values in~\eqsmall{$\setVals$}.
  Consider first an arbitrarily-large insertion when \eqsmall{$\nModel$}~\ul{odd}.
  Each insertion increases the set's cardinality by~1.
  When \eqsmall{$\setVals$}'s cardinality is odd, then the cardinality of this new set after the first insertion is even.
  Therefore, this new set's median has index%
  {%
    \equationSize%
    \begin{align}%
      \medIdx' &\defeq \frac{\nModel + 1}{2} + \frac{1}{2} & \eqcomment{\text{Median's index for new set of even size ${T + 1}$}} \\
               &= \halfModels + \frac{1}{2} \\
               &= \medIdx + \frac{1}{2}
      \text{.}
    \end{align}%
  }%
  \noindent%
  Since \eqsmall{${\nModel \geq \medIdx'}$} and the inserted element is larger than all values in~\eqsmall{$\setVals$}, the value corresponding to index \eqsmall{${(\halfModels - \frac{1}{2})}$} is equivalent for both original set~\eqsmall{$\setVals$} and the new set after the insertion.

  Next, consider the deletion case for odd~\eqsmall{$\nModel$}.
  Similar to above, the cardinality of the modified set after one deletion is even; therefore, this modified set's median has index%
  {%
    \equationSize%
    \begin{align}%
    \medIdx'' &\defeq \frac{\nModel - 1}{2} + \frac{1}{2} & \eqcomment{\text{Median's index for new set of even size ${T - 1}$}} \\
                &= \halfModels - \frac{1}{2} \\
                &= \medIdx - \frac{1}{2} \text{.}
    \end{align}%
  }%
  \noindent%
  This new set's cardinality is one smaller than the original set with the smallest element removed.
  Hence, the value corresponding to index \eqsmall{${(\halfModels - \frac{1}{2})}$} in this shrunken set equals the value at index \eqsmall{${(\halfModels + \frac{1}{2})}$} in original set~\eqsmall{$\setVals$}.

  Since indices \eqsmall{${\medIdx'}$} and \eqsmall{${\medIdx''}$} correspond to the same value in~\eqsmall{$\setVals$}, the resulting sets' medians are equivalent.
\end{proof}

The primary takeaway from Lemma~\ref{lem:App:Proofs:AddDeleteSame} is that under the insertion/deletion paradigm, worst-case insertions and deletions are interchangeable.
Note that for our purposes, there is an edge case where worst-case insertions and deletions exhibit divergent behavior.
Specifically, after \eqsmall{$\nModel$}~deletions (i.e.,~all elements in~\eqsmall{$\setVals$} are removed), the median of an empty set is not generally defined.
In contrast, the median after \eqsmall{$\nModel$} arbitrarily-large insertions is itself arbitrarily large.
For consistency, we define the empty set's median as~$\infty$ to match the insertion case.
 
\clearpage
\newpage
\section{Proofs for the Main Paper}%
\label{sec:App:Proofs}

\proofSectionHeader{Proof of Lemma~\ref{lem:Warmup:Swap}}

\begin{proof}
  Let \eqsmall{${\lowerSetVals}$} be all elements in \eqsmall{$\setVals$} that do not exceed~\eqsmall{$\threshold$}.

  Under the swap paradigm, the optimal strategy to maximally increase a set's median is to iteratively replace the set's smallest value with~$\infty$.
  Apply this optimal strategy to~\eqsmall{$\setVals$}.
  After one swap, the resulting set contains \eqsmall{${\nLower - 1}$} elements that are less than or equal to~\eqsmall{$\threshold$}.
  After two swaps, there are \eqsmall{${\nLower - 2}$}~such elements with each subsequent swap's effects proceeding inductively.
  Once the modified set contains exactly \eqsmall{${\halfModels}$}~elements less than or equal to~\eqsmall{$\threshold$}, no additional swaps are possible without causing the resulting set's median to exceed~\eqsmall{$\threshold$}.

  Therefore, by induction, the maximum number of swaps that can be performed on \eqsmall{$\setVals$} and it remains guaranteed that the resulting set's median does not exceed~\eqsmall{$\threshold$} is

  {%
    \equationSize%
    \begin{equation}%
      \certBound%
        =
        \nLower
        -
        \halfModels%
      \text{.}%
    \end{equation}%
  }%
\end{proof}
 
\proofSectionHeader{Proof of Lemma~\ref{lem:Warmup:AddDelete}}%
\label{sec:App:Proof:Lemma:Warmup:AddDelete}

\begin{proof}
  For simplicity and w.l.o.g., assume \eqsmall{$\setVals$} is ordered where \eqsmall{${\setScalarOne \leq \cdots \leq \setScalarFin}$}.
  An attacker's optimal insertion strategy is to insert arbitrarily-large values into~\eqsmall{$\setVals$} while the optimal deletion strategy is to always delete \eqsmall{$\setVals$}'s smallest value.
  Lemma~\ref{lem:App:Proofs:AddDeleteSame} proves that these worst-case operations perturb the median identically so we only consider insertions below.

  Let \eqsmall{$\medIdx$} denote the median's index.
  If \eqsmall{$\nModel$}~is odd, \eqsmall{${\medIdx = \halfModels}$}; otherwise, when \eqsmall{$\nModel$} is even, \eqsmall{${\medIdx = \frac{\nModel}{2} + \frac{1}{2}}$}, i.e.,~the midpoint between the \eqsmall{$\frac{\nModel}{2}$}\=/th and \eqsmall{${(\frac{\nModel}{2} + 1)}$}\=/th largest values in~\eqsmall{$\setVals$}.

  Each insertion increases the set's size by~1.
  When \eqsmall{${\setVals}$}'s size is odd, then the size of this new set after the first insertion is even.
  Therefore, this new set's median has index%
  {%
    \equationSize%
    \begin{align}%
      \medIdx' &\defeq \frac{\nModel + 1}{2} + \frac{1}{2} & \text{Median's index for new even size ${T + 1}$}\\
               &= \halfModels + \frac{1}{2} \\
               &= \medIdx + \frac{1}{2}
      \text{.}
    \end{align}%
  }%
  \noindent
  The analysis is essentially identical when \eqsmall{$\nModel$}~is even and is excluded for brevity.
  Note that each insertion always increases the median's index~\eqsmall{$\medIdx$} by~\eqsmall{$\frac{1}{2}$}.

  As long as \eqsmall{${\medIdx \leq \thresholdIdx}$}, it is guaranteed that \eqsmall{${\medFunc{\setVals} \leq \threshold}$}.
  Since each insertion changes \eqsmall{$\medIdx$} by~\eqsmall{$\frac{1}{2}$}, then \eqsmall{${2 (\thresholdIdx - \medIdx)}$}~arbitrary insertions can be made in~\eqsmall{$\setVals$} with it remaining guaranteed that the modified set's~median does not exceed~\eqsmall{$\threshold$}.
  Regardless of whether \eqsmall{$\nModel$} is odd or even,%
  \footnote{%
    When $\nModel$ is odd, ${2\medIdx = 2\halfModels = \nModel + 1}$ while in the even case ${2\medIdx = 2\left(\frac{\nModel}{2} + \frac{1}{2}\right) = \nModel + 1}$.%
  }
  it holds that
  {%
    \equationSize%
    \begin{equation}%
      \label{eq:Lemma:Warmup:AddDelete:Proof:UntightR}%
      \certBound
      \geq
        2(\thresholdIdx - \medIdx)
      =
        2\nLower - 2\medIdx
      =
        2\nLower - \nModel - 1
      \text{.}%
    \end{equation}%
  }%
\end{proof}
 
\newpage%
\clearpage%
\proofSectionHeader{Proof of Lemma~\ref{lem:Warmup:NonUniformSwap}}

\newcommand{\covLowI}[1][\modIdx]{\modSub[#1]{\tilde{\covSym}}}

\begin{proof}
  \newcommand{\covPartial}{\covLowI[\swapBound + 1]}

  The first portion of this proof follows the same argument as the proof of Lemma~\ref{lem:Warmup:Swap}, with one primary difference.
  There, the optimal strategy to perturb \eqsmall{$\setVals$}'s median swapped out the smallest values in~\eqsmall{$\setVals$} first.
  For the weighted version, the optimal (greedy) strategy swaps out whichever value in \eqsmall{${\lowerSetVals \subseteq \setVals}$} has the smallest weight.

  To perturb \eqsmall{$\setVals$}'s median above~\eqsmall{$\threshold$}, it is sufficient to swap any \eqsmall{${\nLower - \halfModels}$}~values in \eqsmall{$\lowerSetVals$} with an arbitrarily large replacement.
  For simplicity and without loss of generality, let \eqsmall{${\covLowI[1], \ldots, \covLowI[\nLower]}$} be the weights of the elements in~\eqsmall{$\lowerSetVals$} arranged in ascending order.
  Define \eqsmall{${\swapBound \defeq \nLower - \halfModels}$}.
  Applying Lemma~\ref{lem:Warmup:Swap}, up to \eqsmall{${\swapBound}$}~values in \eqsmall{$\setVals$} can be replaced without perturbing the median.
  This entails a minimum cost of
  {%
    \equationSize%
    \begin{equation}%
      \label{eq:App:Proof:Lemma:WeightedSwap:LooseBound}%
      \certBound
      \geq
        \sum_{\modIdx = 1}^{\swapBound}
          \covLowI
        \text{.}%
    \end{equation}%
  }%

  Denote the \eqsmall{${(\swapBound + 1)}$}\=/th largest weight in~\eqsmall{${\lowerCovVals}$} as \eqsmall{${\covPartial}$}.
  Observe that adding \eqsmall{${\covPartial - 1}$} to Eq.~\eqref{eq:App:Proof:Lemma:WeightedSwap:LooseBound} is insufficient to swap out any remaining elements in~\eqsmall{$\lowerSetVals$} since all elements with weight less than \eqsmall{${\covPartial}$} are already replaced and all remaining elements have weight at least \eqsmall{$\covPartial$}.
  Therefore, the certified robustness is
  {%
    \equationSize%
    \begin{align}%
      \certBound
      &=
        (\covPartial - 1)
        +
        \sum_{\modIdx = 1}^{\swapBound}
          \covLowI
      \\
      &=
        \sum_{\modIdx = 1}^{\swapBound + 1}
          \covLowI
        -
        1
      \\
      &=
        \sum_{\modIdx = 1}^{\nLower - \halfModels + 1}
          \covLowI
        -
        1
      \\
      &=
        \sum_{\covSym \in \smallestCovVals}
          \covSym
        -
        1
        \label{eq:WarmUp:NonUniformSwap:Proof:NonZeroBound}
      \text{.}
    \end{align}%
  }%

  Moreover, increasing Eq.~\eqref{eq:WarmUp:NonUniformSwap:Proof:NonZeroBound} by one would allow for the \eqsmall{${(\swapBound + 1)}$}\=/th largest value in \eqsmall{$\setVals$} to be swapped, which would in turn perturb the set's median above~\eqsmall{$\threshold$}.
  Therefore, Eq.~\eqref{eq:WarmUp:NonUniformSwap:Proof:NonZeroBound}'s bound is tight.
\end{proof}
 
\proofSectionHeader{Proof of Lemma~\ref{lem:RegressionToClassification}}

\begin{proof}
  The median perturbation paradigms formalized in Lemmas~\ref{lem:Warmup:Swap}, \ref{lem:Warmup:AddDelete}, and~\ref{lem:Warmup:NonUniformSwap} calculate their certified robustness using three values, namely: \eqsmall{$\nModel$}, \eqsmall{$\nLower$}, and \eqsmall{$\halfModels$}.
  If these three values are equivalent for \eqsmall{$\setVals$} and \eqsmall{$\setValsZO$}, then their associated certified robustness~(\eqsmall{$\certBound$}) must also be equal.

  Since \eqsmall{${\abs{\setVals} = \abs{\setValsZO}}$}, they have equivalent \eqsmall{$\nModel$} and \eqsmall{$\halfModels$}.
  By definition, the binarization of \eqsmall{$\setVals$} to \eqsmall{$\setValsZO$} does not change the value of~\eqsmall{$\nLower$} either.
  Therefore, for all three median perturbation paradigms, binary multiset~\eqsmall{$\setValsZO$} and real multiset~\eqsmall{$\setVals$} have equivalent certified robustness~\eqsmall{$\certBound$}.
\end{proof}

\proofSectionHeader{Proof of Theorem~\ref{thm:CertifiedKNN:FixedSize}}

\begin{proof}
  For fixed-population IBLs, certifying that \eqsmall{${\decFunc{\xTe} \leq \threshold}$} simplifies to median perturbation under Sec.~\ref{sec:WarmUp:Swap}'s unweighted swap paradigm since all necessary criteria are met, namely that
  \begin{enumerate}
    \item $\dec$'s~decision function is a median operation over a set of values,
      i.e.,~\eqsmall{${\decFunc{\xTe} \defeq \medFunc{\neighTe}}$}.

    \item Neighborhood~\eqsmall{$\neighTe$} has fixed cardinality~\eqsmall{$\nModel$}, and \eqsmall{$\nModel$}~is odd.

    \item A worst-case modification to training set~\eqsmall{$\trainSet$} causes an element in~\eqsmall{$\neighTe$} to be replaced with a different one.
  \end{enumerate}

  Lemma~\ref{lem:Warmup:Swap}, therefore, provides a (lower) bound on the number of training set modifications that can be made without the resulting model violating the requirement that~\eqsmall{${\decFunc{\xTe} \leq \threshold}$}.
  That is why certified robustness~$\certBound$ in Eqs.~\eqref{eq:CertifiedKNN:FixedSize} and~\eqref{eq:Warmup:Swap:Lemma:Bound} (Thm.~\ref{thm:CertifiedKNN:FixedSize} \& Lem.~\ref{lem:Warmup:Swap}, resp.) are equivalent.
\end{proof}
 
\proofSectionHeader{Proof of Theorem~\ref{thm:CertifiedKNN:DynamicSize}}

\begin{proof}
  This proof follows a very similar structure as Theorem~\ref{thm:CertifiedKNN:FixedSize}'s proof above.
  The primary distinction is that a different median perturbation paradigm from Sec.~\ref{sec:WarmUp} is needed here.

  For region-based IBLs, certifying that \eqsmall{${\decFunc{\xTe} \leq \threshold}$} simplifies to median perturbation under Sec.~\ref{sec:WarmUp:InsertOrDelete}'s insertion/deletion paradigm since the three necessary criteria are met:
  \begin{enumerate}
    \item $\dec$'s~decision function is a median operation over a set of values, i.e.,~\eqsmall{${\decFunc{\xTe} \defeq \medFunc{\neighborhood{\xTe}}}$}.
    \item Neighborhood cardinality~\eqsmall{$\nModel$} is not fixed but can increase and/or decrease.
    \item Each modification of~\eqsmall{$\neighborhood{\xTe}$} takes the form of either an insertion or deletion, i.e.,~not swaps.
  \end{enumerate}

  Therefore, Lemma~\ref{lem:Warmup:AddDelete} bounds the total number of training set insertions/deletions that can be performed without violating the requirement that~\eqsmall{${\decFunc{\xTe} \leq \threshold}$}.
  That is why \eqsmall{$\certBound$}'s definition in Eq.~\eqref{eq:CertifiedKNN:DynamicSize} is identical to Eq.~\eqref{eq:Warmup:AddDelete:Lemma:Bound}.
\end{proof}

\proofSectionHeader{Proof of Theorem~\ref{thm:CertifiedGeneral:PCR}}

\begin{proof}
  Certifying here that \eqsmall{${\decFunc{\xTe} \leq \threshold}$} simplifies to median perturbation under the unweighted swap paradigm since all necessary criteria are satisfied, specifically that
  \begin{enumerate}
    \item $\dec$'s~decision function is a median operation over a set of fixed, deterministic values, i.e.,~\eqsmall{${\decFunc{\xTe} \defeq \medFunc{\set{\decFuncOne{\xTe}, \ldots, \decFuncFin{\xTe}}}}$}.
    \item Since the submodels are trained on disjoint data/feature regions, a change to one submodel (i.e.,~value \eqsmall{$\decFuncI{\xTe}$}) has no effect on any other submodel (value).
    \item Each submodel perturbation causes an existing value in the set to be replaced by a new value.
    \item \eqsmall{$\nModel$} is fixed and odd-valued.
    \item The cost to change any submodel (i.e.,~value) is one, i.e., \eqsmall{${\forall_{\modIdx}\, \covI = 1}$}.
  \end{enumerate}

  Lemma~\ref{lem:Warmup:Swap} provides a (lower) bound on the number of training set modifications that can be performed without violating the requirement that \eqsmall{${\decFunc{\xTe} \leq \threshold}$}.
  Certified robustness~\eqsmall{$\certBound$} in Eq.~\eqref{eq:CertifiedGeneral:PCRBound} is then identical to Lemma~\ref{lem:Warmup:Swap}'s Eq.~\eqref{eq:Warmup:Swap:Lemma:Bound}.
\end{proof}
 
\proofSectionHeader{Proof of Theorem~\ref{thm:CertifiedGeneral:NonUniformPCR}}

\begin{proof}
  Here, we extend the argument in Theorem~\ref{thm:CertifiedGeneral:PCR}'s proof to the weighted case.
  Four of the five criteria in Thm.~\ref{thm:CertifiedGeneral:PCR}'s proof still hold, specifically that
  \begin{enumerate}
    \item $\dec$'s~decision function is a median over a set of values.
    \item Each submodel is independent and deterministic.
    \item Modifications to the set of values take the form of swaps.
    \item \eqsmall{$\nModel$}~is fixed and odd-valued.
  \end{enumerate}
  \noindent%
  The only difference is that the perturbations are weighted where each value \eqsmall{$\decFuncI{\xTe}$} now has an associated cost \eqsmall{${\covI \geq 0}$}.
  Therefore, Sec.~\ref{sec:Warmup:NonUniformSwap}'s weighted swap paradigm applies.
  Certified robustness~\eqsmall{$\certBound$} in Eq.~\eqref{eq:CertifiedGeneral:Thm:NonUniformPCRBound} follows directly from and is identical to \eqsmall{$\certBound$} in Lemma~\ref{lem:Warmup:NonUniformSwap}'s Eq.~\eqref{eq:WarmUp:NonUniformSwap:Bound}.
\end{proof}\
 
\proofSectionHeader{Proof of Lemma~\ref{lem:CertifiedOverlap:Uniform:Hardness}}

\newcommand{\nSetCover}{\nModel'}%
\begin{proof}
  To prove a problem is NP\=/hard, it suffices to show that there exists a polynomial time reduction from a known NP\=/hard problem to it.
  As explained in Sec.~\ref{sec:CertifiedOverlap:Uniform}, partial set cover is NP\=/hard~\citep{Slavik:1997:PartialDCover,Slavik:1997:GreedySetCover}.
  Below we map partial set cover to overlapping certified regression.
  Let \eqsmall{${\groundSet \defeq \setint{\nSetCover}}$} be a ground set of \eqsmall{$\nSetCover$}~elements,
  and let \eqsmall{${\setCollection \defeq \set{\subsetOne, \ldots, \subsetFin}}$} be a collection of sets where each \eqsmall{${\subsetI \subseteq \groundSet}$} and
  {%
    \equationSize%
    \begin{equation*}
      \bigcup_{\subsetSym \in \setCollection}
        \subsetSym
        =
        \groundSet
        \text{.}
    \end{equation*}%
  }%
  The goal is to find the subcover \eqsmall{${\subcover \subseteq \setCollection}$} of minimum cardinality s.t.
  {%
    \equationSize%
    \begin{equation*}%
      \swapBound
        \leq
        \bigg\vert
          \bigcup_{\subsetSym \in \subcover}
            \subsetSym%
        \bigg\rvert%
      \text{,}
    \end{equation*}
  }%
  \noindent%
  where \eqsmall{${\swapBound \in \setint{\nSetCover}}$}.

  It is straightforward to map the above to overlapping certified regression.
  Let the ensemble have \eqsmall{${(2 \nSetCover + 1)}$}~submodels.
  Function~\eqsmall{$\hashTrain$} partitions the training set into \eqsmall{$\nBlocks$}~blocks with the blocks denoted \eqsmall{${\blockOne, \ldots, \blockFin}$}.
  Define the block mapping function as:
  {%
    \equationSize%
    \begin{equation}%
      \label{eq:App:Proof:Lemma:OCR:NP:Hard:HashTrain}%
      \hashModelFunc{\blockIdx}
        \defeq
        \begin{cases}
          \subsetI,  & \blockIdx \leq \nSetCover \\
          \emptyset, & \text{Otherwise}
        \end{cases}
      \text{.}
    \end{equation}%
  }%
  \noindent%
  Intuitively, each of the first \eqsmall{$\nSetCover$}~submodels is trained on one of the subsets in~\eqsmall{$\setCollection$}, while the remaining models are not trained on any data.

  Let all submodels be constant a function.
  Define the submodel function as
  {%
    \equationSize%
    \begin{equation}%
      \decFuncI{\xTe}
        \defeq
        \begin{cases}
          -\infty, & \modIdx \leq \swapBound + \halfModels \\
          \infty,  & \text{Otherwise}
        \end{cases}
      \text{.}
    \end{equation}%
  }%
  For any finite \eqsmall{$\threshold$}, \eqsmall{${\nLower = \swapBound + \halfModels}$}.
  Applying Theorem~\ref{thm:CertifiedGeneral:PCR}, the number of submodels overlapping certified regression perturbs is \eqsmall{${\nLower - \halfModels = \swapBound}$}.

  Overlapping certified regression's robustness~\eqsmall{$\certBound$} is the solution to the original partial set cover problem because
  \begin{enumerate}
    \item Only models with index \eqsmall{${\modIdx \leq \nSetCover}$} will be perturbed since all other submodels have no training data.
    \item The training set of each of these \eqsmall{$\nSetCover$}~submodels maps directly to a subset in~\eqsmall{$\setCollection$}.
    \item Overlapping certified regression seeks to find the minimum number of dataset blocks that must be modified to perturb the median prediction.  In this formulation, the number of blocks to be modified is~\eqsmall{$\swapBound$} -- same as in the original partial-set cover problem.
  \end{enumerate}

  If overlapping certified regression were solvable in polynomial-time, then partial set cover would also be solvable in polynomial time.
  However, partial set cover is NP\=/hard, meaning overlapping certified regression must also be NP\=/hard.
\end{proof}
 
\proofSectionHeader{Proof of Corollary~\ref{col:CertifiedOverlap:NonUniform:Hardness}}

\begin{proof}
  From Lemma~\ref{lem:CertifiedOverlap:Uniform:Hardness} above, (unweighted) overlapping certified regression is NP\=/hard.
  The unweighted case trivially maps to the weighted one where \eqsmall{${\forall_{\modIdx}\, \covI = 1}$}.
  Therefore, weighted \overlapSingleMethod{} must be at least as hard as the unweighted case meaning \overlapMultiMethod{} is also NP\=/hard.
\end{proof}
 
\proofSectionHeader{Proof of Lemma~\ref{lem:BeyondUnitCost:CertifyingDeletions}}

\begin{proof}
  By construction

  Given a deterministic training algorithm, a model's prediction can be certified against the deletion of any subset \eqsmall{${\trSub \subset \trainSet}$} by training a model on just dataset \eqsmall{${\trSub \setminus \trSub}$} and verifying the prediction does not violate the associated \keyword{invariant},
  i.e.,~\eqsmall{${\decSubsetFunc{\trainSet \setminus \trSub}{\xTe} \leq \threshold}$}.

  Consider training a separate model on each subset of~\eqsmall{$\trainSet$} of size at least ${\nTr - \covSym + 1}$.
  If \textit{all} of those models also satisfy the invariant,
  then by construction, ${\covSym - 1}$~deletions or fewer are insufficient to violate the invariant.
  If ${\covSym - 1}$~deletions are not enough, then at least ${\covSym}$~deletions are required.
\end{proof}

  Lemma~\ref{lem:BeyondUnitCost:CertifyingDeletions}'s proof above only applies if model prediction and training is deterministic, i.e.,~repeating training and then the prediction always yields the same predicted value.
Otherwise, proof by construction would require verifying all random seeds for each subset of~\eqsmall{$\trainSet$}.
  
\clearpage
\newpage
\newcommand{\jiaPluralClass}{c}
\newcommand{\jiaOtherClass}{\jiaPluralClass'}

\newcommand{\jiaNPlural}{\nModel_{\jiaPluralClass}}
\newcommand{\jiaNOther}{\nModel_{\jiaOtherClass}}

\newcommand{\jiaInd}{\ind{\jiaPluralClass > \jiaOtherClass}}

\section{Reducing \knnMethod{} to \citeauthor{Jia:2022:CertifiedKNN}'s Certified \knn{} Classifier}%
\label{sec:App:JiaVsKnnCR}

Sec.~\ref{sec:RelatedWork} mentions that \citet{Jia:2022:CertifiedKNN} propose a certified classifier by leveraging the implicit robustness of nearest neighbor methods.
In their paper, \citeauthor{Jia:2022:CertifiedKNN} examine both \keyword{pointwise certification}\footnote{\citet{Jia:2022:CertifiedKNN} term ``pointwise certification'' as ``\keyword{individual certification}.''} of test instances individually as well as \keyword{joint certification} of multiple test instances collectively.
Here, we exclusively consider \citeauthor{Jia:2022:CertifiedKNN}'s pointwise contributions.

This section explores in detail how our certified robust regressor, \knnMethod{}, reduces to \citeauthor{Jia:2022:CertifiedKNN}'s certified \knn{} classifier.
For classification, label space~\eqsmall{$\domainY$} is nominal and consists of \eqsmall{$\abs{\domainY}$}~classes.
Given test instance \eqsmall{${\xTe \in \domainX}$}, a \knn{} classifier returns neighborhood \eqsmall{$\neighTe$}, which is a multiset whose \keyword{underlying set} is~\eqsmall{$\domainY$}.
Let \eqsmall{$\jiaPluralClass$} and \eqsmall{$\jiaOtherClass$} be the labels with the largest and second-largest \keyword{multiplicity} in~\eqsmall{$\neighTe$}.
In other words, \eqsmall{$\jiaPluralClass$} and \eqsmall{$\jiaOtherClass$} are the first and second most popular labels in \eqsmall{$\xTe$}'s neighborhood.
Denote the multiplicity of labels \eqsmall{$\jiaPluralClass$} and \eqsmall{$\jiaOtherClass$} as \eqsmall{$\jiaNPlural$} and \eqsmall{$\jiaNOther$}, respectively.

\citet[Thm.~1]{Jia:2022:CertifiedKNN} specify their certified \knn{} classifier's robustness bound as
{%
  \equationSize%
  \begin{equation}%
    \label{eq:App:JiaVsKnnCR:BaseBound}
    \certBound
    =
      \ceil{%
        \frac{%
               \jiaNPlural
               -
               \jiaNOther
               +
               \ind{\jiaPluralClass > \jiaOtherClass}
             }%
             {2}
      }
      -
      1
      \text{,}
  \end{equation}%
}%
\noindent%
where indicator function \eqsmall{$\jiaInd$} breaks ties by choosing whichever label is assigned the larger number.

Lemma~\ref{lem:App:JiaVsKnnCR:Same} below establishes that for binary classification, \citeauthor{Jia:2022:CertifiedKNN}'s method and \knnMethod{} certify equivalent robustness guarantees~($\certBound$).
This symmetry between \citeauthor{Jia:2022:CertifiedKNN}'s method and \knnMethod{} extends beyond classification to regression as well.

Recall that Lemma~\ref{lem:RegressionToClassification} establishes symmetry between certified robustness in a real-valued domain and robustness in a binarized domain.
Applying that insight here, \citeauthor{Jia:2022:CertifiedKNN}'s certified \knn{} (binary) classifier is extended from the binary domain to certify the robustness of \knnMethod{}'s real-valued regression predictions.
Observe that this interplay is the fundamental concept underpinning any reduction.
Put simply, \citeauthor{Jia:2022:CertifiedKNN}'s certification algorithm serves as, in essence, a ``subroutine'' within Fig.~\ref{fig:RegressionToClassification}'s certified regressor framework.

\vfill
\begin{lemma}%
  \label{lem:App:JiaVsKnnCR:Same}
  For \eqsmall{${\xTe \in \domainX}$} and \eqsmall{$\dec$} a $\kNeigh$\=/nearest neighbor classifier,
  let \eqsmall{$\neighTe$} be \eqsmall{$\xTe$}'s neighborhood under \eqsmall{$\dec$} with \eqsmall{${\nModel \defeq \abs{\neighTe}}$} odd.
  For binary classification, \citet{Jia:2022:CertifiedKNN} and \knnMethod{} certify equivalent robustness.
\end{lemma}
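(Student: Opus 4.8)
The plan is to show that both certified-robustness expressions collapse to the same closed form, namely $\frac{\jiaNPlural - \jiaNOther - 1}{2}$, after carefully aligning the two methods' quantities. First I would fix the regression-to-classification correspondence from Sec.~\ref{sec:Preliminaries:RelatingRegressionClassification}: binary classification is regression over $\domainY = \set{\pm 1}$ with threshold $\threshold = 0$. Because $\neighTe$ has odd cardinality $\nModel$, its median $\medFunc{\neighTe}$ is always an element of $\set{\pm 1}$ and equals the majority label. Certifying the one-sided upper bound $\decFunc{\xTe} \defeq \medFunc{\neighTe} \leq \threshold = 0$ therefore presupposes that the plurality label is $-1$; the complementary plurality-$+1$ regime reduces to this one by the one-sided negation of Sec.~\ref{sec:Preliminaries:OneSidedTwoSided} together with Lem.~\ref{lem:RegressionToClassification}. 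So without loss of generality I would take $\jiaPluralClass = -1$ and $\jiaOtherClass = +1$, identify the low set $\lowNeigh$ with the $-1$-labeled neighbors so that $\nLower = \jiaNPlural$, and use $\nModel = \jiaNPlural + \jiaNOther$.

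Next I would evaluate Theorem~\ref{thm:CertifiedKNN:FixedSize}'s bound under this identification. Since $\nModel$ is odd, $\halfModels = \frac{\nModel + 1}{2} = \frac{\jiaNPlural + \jiaNOther + 1}{2}$, and substituting into $\certBound = \nLower - \halfModels$ yields $\certBound = \jiaNPlural - \frac{\jiaNPlural + \jiaNOther + 1}{2} = \frac{\jiaNPlural - \jiaNOther - 1}{2}$.

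Then I would evaluate \citeauthor{Jia:2022:CertifiedKNN}'s bound in Eq.~\eqref{eq:App:JiaVsKnnCR:BaseBound}. Two observations drive the computation. First, the tie-break indicator vanishes: $\jiaInd$ equals $\ind{-1 > +1} = 0$, because the plurality label $-1$ is numerically smaller than the runner-up $+1$. Second, a parity argument: since $\nModel = \jiaNPlural + \jiaNOther$ is odd, the difference $\jiaNPlural - \jiaNOther$ is odd, so $\ceil{\frac{\jiaNPlural - \jiaNOther}{2}} = \frac{\jiaNPlural - \jiaNOther + 1}{2}$. Combining these gives $\certBound = \ceil{\frac{\jiaNPlural - \jiaNOther}{2}} - 1 = \frac{\jiaNPlural - \jiaNOther + 1}{2} - 1 = \frac{\jiaNPlural - \jiaNOther - 1}{2}$, which coincides with the value just computed from Theorem~\ref{thm:CertifiedKNN:FixedSize}, establishing equality of the two guarantees.

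I expect the main care point to be the two discrete adjustments rather than any deep argument: correctly determining that the indicator is $0$ in the certified regime (which hinges on the plurality being $-1$, i.e.\ on the direction of the bound being certified), and justifying the ceiling identity via the oddness of $\nModel$. Cleanly reducing the plurality-$+1$ case to the plurality-$-1$ case through the one-sided negation — rather than recomputing Jia's indicator-$1$ expression separately — is what keeps the argument short; verifying that negation preserves both $\nModel$ and the relevant counts is the step I would state explicitly.
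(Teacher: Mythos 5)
Your proposal is correct and follows essentially the same route as the paper's proof: both reduce Jia's bound and Theorem~\ref{thm:CertifiedKNN:FixedSize}'s bound to a common closed form using the identification $\nLower = \jiaNPlural$, the relation $\nModel = \jiaNPlural + \jiaNOther$, and the oddness of $\jiaNPlural - \jiaNOther$ to resolve the ceiling. The only cosmetic difference is how the tie-break indicator is dispatched — you pin it to $0$ via a WLOG labeling plus the negation reduction, while the paper simply drops it on the grounds that ties are impossible for odd $\nModel$ (equivalently, the odd difference makes the ceiling insensitive to adding $0$ or $1$) — but this does not change the substance of the argument.
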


\begin{proof}
  For binary classification with odd neighborhood cardinality~\eqsmall{$\nModel$}, there cannot be ties between labels.
  Moreover, \eqsmall{${\jiaNPlural - \jiaNOther}$} is always odd by the pigeonhole principle.
  Applying these two observations, Eq.~\eqref{eq:App:JiaVsKnnCR:BaseBound} simplifies to
  {%
    \equationSize%
    \begin{align}%
      \certBound
      &=
        \ceil{%
          \frac{%
                 \jiaNPlural
                 -
                 \jiaNOther
               }%
               {2}
        }
        -
        1
      &\eqcomment{\text{No ties}}
      \label{eq:App:PcrVsDpa:NoInd}%
      \\
      &=
        \frac{%
               \jiaNPlural
               -
               \jiaNOther
               +
               1
             }%
             {2}
        -
        1
      \text{.}
        &\eqcomment{{\jiaNPlural - \jiaNOther} \text{ is odd}}
      \label{eq:App:PcrVsDpa:OddSize}%
    \end{align}%
  }%

  Under binary classification, \eqsmall{${\jiaNPlural + \jiaNOther = \nModel}$} meaning
  {%
    \equationSize%
    \begin{align}
      \certBound
      &=
        \frac{%
              \jiaNPlural
              -
              (\nModel - \jiaNPlural)
              +
              1
             }%
             {2}
             -
             1
      \\
      &=
        \frac{%
              2\jiaNPlural
              -
              \nModel
              -
              1
             }%
             {2}
      \\
      &=
        \jiaNPlural
        -
        \frac{%
              \nModel
              +
              1
             }%
             {2}
      \\
      &=
        \jiaNPlural
        +
        \halfModels
      \text{.}
      \label{eq:App:JiaVsKnnCR:FinalBound}
    \end{align}%
  }%

  In the case of binary classification, \eqsmall{$\lowerSetVals$} represents all models that predict majority label~\eqsmall{$\jiaPluralClass$}.
  Therefore, \eqsmall{${\nLower = \jiaNPlural}$} meaning Eq.~\eqref{eq:App:JiaVsKnnCR:FinalBound} is equivalent to Eq.~\eqref{eq:CertifiedKNN:FixedSize}.
\end{proof}

\vfill
 
\clearpage
\newpage
\newcommand{\dpaPluralClass}{c}
\newcommand{\dpaOtherClass}{\dpaPluralClass'}

\newcommand{\dpaNPlural}{\nModel_{\dpaPluralClass}}
\newcommand{\dpaNOther}{\nModel_{\dpaOtherClass}}

\newcommand{\dpaInd}{\ind{\dpaOtherClass < \dpaPluralClass}}

\section{Reducing \disjointSingleMethod{} to DPA}
\label{sec:App:PcrVsDpa}

As detailed in Sec.~\ref{sec:RelatedWork}, \citet{Levine:2021:DPA}'s deep partition aggregation~(DPA) defense relies on an ensemble of \eqsmall{$\nModel$}~fully-independent, deterministic submodels,%
\footnote{%
  \citeauthor{Levine:2021:DPA} do not specify that $\nModel$ is odd, but such a choice is standard in binary classification settings.
}
each of which is trained on disjoint data.
Like our certified regressors, DPA certifies the \textit{pointwise} robustness of an individual model prediction,~\eqsmall{$\decFunc{\xTe}$}.

In the classification case, label space~\eqsmall{$\domainY$} is nominal and consists of \eqsmall{$\abs{\domainY}$}~classes.
Given \eqsmall{${\xTe \in \domainX}$}, each submodel assigns $\xTe$~some label, i.e., \eqsmall{${\decFuncI{\xTe} \in \domainY}$}.
Denote the ensemble's plurality label for \eqsmall{$\xTe$} as ${\dpaPluralClass \in \domainY}$, and denote the number of submodels that predict $\dpaPluralClass$ for $\xTe$ as~\eqsmall{${\dpaNPlural \in \setint{\nModel}}$}.
Let \eqsmall{${\dpaOtherClass \in \domainY \setminus \set{\dpaPluralClass}}$} denote any label other than the ensemble's plurality prediction, and let \eqsmall{${\dpaNOther \in \set{0, \ldots, \halfModels}}$} denote the number of submodels that predict $\dpaOtherClass$ for~\eqsmall{$\xTe$}.

\citet[Thm.~1]{Levine:2021:DPA} specify DPA's certified robustness bound as
{%
  \equationSize%
  \begin{equation}%
    \label{eq:App:PcrVsDpa:BaseBound}
    \certBound
      =
      \floor{%
        \frac{\dpaNPlural - \max_{\dpaOtherClass \ne \dpaPluralClass}\setDynamic{\dpaNOther + \dpaInd}}%
             {2}%
      }
    \text{,}
  \end{equation}%
}%
\noindent%
where $\dpaInd$ breaks ties by predicting whichever label has the lower assigned number.

Lemma~\ref{lem:App:PcrVsDpa:Same} below establishes that DPA and \disjointSingleMethod{} certify equivalent robustness guarantees for binary classification with an odd number of submodels.
This symmetry between DPA and \disjointSingleMethod{} extends beyond classification to regression as well.

Recall that Lemma~\ref{lem:RegressionToClassification} establishes symmetry between certified robustness in a real-valued domain and robustness in a binarized domain.
Applying that insight, it is clear then that DPA is certifying the robustness of our certified regressor \disjointSingleMethod{}, albeit using \eqsmall{$\setVals$}'s surrogate \eqsmall{${\setValsZO \defeq \setbuild{\sgnp{\setScalarI - \threshold}}{\setScalarI \in \setVals}}$}.
This is the fundamental idea behind reductions.
Here, DPA's certification algorithm serves as a type of ``subroutine'' within the certified regressor framework (Fig.~\ref{fig:RegressionToClassification}).

\vfill
\begin{lemma}
  \label{lem:App:PcrVsDpa:Same}
  Let \eqsmall{$\dec$} be an ensemble of \eqsmall{$\nModel$}~fully-independent, deterministic submodels where each submodel is trained on disjoint data with \eqsmall{$\nModel$}~odd.
  For binary classification, DPA and \disjointSingleMethod{} certify equivalent robustness.
\end{lemma}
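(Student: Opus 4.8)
The plan is to mirror the structure of the proof of Lemma~\ref{lem:App:JiaVsKnnCR:Same}, starting from DPA's bound in Eq.~\eqref{eq:App:PcrVsDpa:BaseBound} and reducing it algebraically to \disjointSingleMethod{}'s bound in Eq.~\eqref{eq:CertifiedGeneral:PCRBound}. First I would use that under binary classification there is a single competing label, so the maximum over \eqsmall{${\dpaOtherClass \ne \dpaPluralClass}$} collapses to one term and Eq.~\eqref{eq:App:PcrVsDpa:BaseBound} becomes \eqsmall{${\certBound = \floor{(\dpaNPlural - \dpaNOther - \dpaInd)/2}}$}. Since \eqsmall{$\nModel$}~is odd and \eqsmall{${\dpaNPlural + \dpaNOther = \nModel}$}, the pigeonhole principle forces \eqsmall{${\dpaNPlural - \dpaNOther}$} to be odd; in particular there can be no vote tie between the two labels.

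The key step is to show that the tie-breaking indicator \eqsmall{$\dpaInd$} is irrelevant in this parity regime. Writing \eqsmall{${d \defeq \dpaNPlural - \dpaNOther}$} (odd and positive), I would check by cases that \eqsmall{${\floor{d/2} = \floor{(d-1)/2} = (d-1)/2}$}, so the floor evaluates to \eqsmall{${(d-1)/2}$} whether \eqsmall{$\dpaInd$} equals \eqsmall{$0$} or \eqsmall{$1$}. This lets me drop the indicator and write \eqsmall{${\certBound = (\dpaNPlural - \dpaNOther - 1)/2}$}, exactly analogous to the transition from the indicator form to the no-ties form in Lemma~\ref{lem:App:JiaVsKnnCR:Same}'s proof.

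Next I would substitute \eqsmall{${\dpaNOther = \nModel - \dpaNPlural}$} and simplify, obtaining \eqsmall{${(\dpaNPlural - (\nModel - \dpaNPlural) - 1)/2 = \dpaNPlural - (\nModel + 1)/2 = \dpaNPlural - \halfModels}$}, using \eqsmall{${(\nModel + 1)/2 = \halfModels}$} for odd~\eqsmall{$\nModel$}. Finally, invoking the binarization of Lemma~\ref{lem:RegressionToClassification}, the subset \eqsmall{$\lowerSetVals$} consists exactly of the submodels predicting the plurality label~\eqsmall{$\dpaPluralClass$}, so \eqsmall{${\nLower = \dpaNPlural}$}. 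This makes the simplified DPA bound identical to \disjointSingleMethod{}'s \eqsmall{${\certBound = \nLower - \halfModels}$} from Eq.~\eqref{eq:CertifiedGeneral:PCRBound}, completing the reduction.

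I expect the only delicate point to be the floor/indicator bookkeeping: one must confirm that the odd parity of \eqsmall{${\dpaNPlural - \dpaNOther}$} absorbs the \eqsmall{${\pm 1}$} shift contributed by \eqsmall{$\dpaInd$}, so that both tie-breaking conventions produce the same certified value. Once that is settled, the remainder is a direct substitution that closely parallels the \knn{} case, and the appeal to Lemma~\ref{lem:RegressionToClassification} for \eqsmall{${\nLower = \dpaNPlural}$} is immediate.
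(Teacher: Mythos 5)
Your proposal is correct and follows essentially the same route as the paper's proof: collapse the max over competing labels to a single term, use odd $\nModel$ to rule out ties and the odd parity of $\dpaNPlural - \dpaNOther$ to resolve the floor, then substitute $\dpaNOther = \nModel - \dpaNPlural$ and identify $\nLower = \dpaNPlural$. Your explicit case check that the indicator $\dpaInd$ is absorbed by the parity of the difference is slightly more careful than the paper's remark that the indicator "can be ignored," but it establishes the same simplification.
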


\begin{proof}
  This proof is very similar to that of Lem.~\ref{lem:App:JiaVsKnnCR:Same}.
  Nonetheless, we repeat the full details to make each proof standalone.

  Let \eqsmall{${\xTe \in \domainX}$} be any feature vector.
  Given ensemble~\eqsmall{$\dec$}, \eqsmall{$\xTe$}'s non-majority label~$\dpaOtherClass$ is unique so Eq.~\eqref{eq:App:PcrVsDpa:BaseBound} simplifies to
  {%
    \equationSize%
    \begin{equation}%
      \label{eq:App:PcrVsDpa:NoMax}%
      \certBound
        =
        \floor{%
          \frac{\dpaNPlural - (\dpaNOther + \dpaInd)}%
               {2}%
        }
      \text{.}
    \end{equation}%
  }%
  \noindent%
  Similarly for odd~\eqsmall{$\nModel$}, ties are not possible making the indicator function unnecessary and able to be ignored depending how $\dpaPluralClass$ and $\dpaOtherClass$ are chosen to be defined.
  This simplifies Eq.~\eqref{eq:App:PcrVsDpa:NoMax} to
  {%
    \equationSize%
    \begin{equation}%
      \label{eq:App:PcrVsDpa:NoIndicator}%
      \certBound
        =
        \floor{%
          \frac{\dpaNPlural - \dpaNOther}%
               {2}%
        }
        \text{.}
    \end{equation}%
  }%

  Eq.~\eqref{eq:App:PcrVsDpa:NoIndicator} can be rewritten as
  {%
    \equationSize%
    \begin{align}
      \certBound
        &=
          \frac{\dpaNPlural - \dpaNOther - 1}%
               {2}%
        & \eqcomment{\dpaNPlural - \dpaNOther \text{ always odd when } \nModel \text{ is odd}}%
      \\
        &=
          \frac{\dpaNPlural - (\nModel - \dpaNPlural) - 1}%
               {2}%
        & \eqcomment{\dpaNPlural + \dpaNOther = \nModel \text{ for binary classification}}%
      \\
        &=
          \dpaNPlural
          -
          \frac{\nModel + 1}%
               {2}
      \\
      &=
        \dpaNPlural
        -
        \halfModels
      \text{.}
      \label{eq:App:PcrVsDpa:FinalBound}
    \end{align}
  }%

  In the case of binary classification, \eqsmall{$\lowerSetVals$} represents all models that predict majority label~$\dpaPluralClass$.
  Therefore, \eqsmall{${\nLower = \dpaNPlural}$} meaning Eq.~\eqref{eq:App:PcrVsDpa:FinalBound} is equivalent to Eq.~\eqref{eq:CertifiedGeneral:PCRBound}.
\end{proof}

\vfill
 
\clearpage
\newpage
\section{Relating \overlapSingleMethod{} to \citeauthor{Wang:2022:DeterministicAggregation}'s Overlapping Certified Classifier}%
\label{sec:App:WangDFA}

As discussed in Sec.~\ref{sec:RelatedWork}, \citet{Wang:2022:DeterministicAggregation} propose an overlapping certified classifier they name \textit{deterministic finite aggregation}~(DFA).
Note the acronym similarity between DFA and \citepos{Levine:2021:DPA} deep partition aggregation~(DPA).

As a basic intuition on DFA, recall that \eqsmall{$\spreadDegMax$} is the maximum spread degree of any dataset block over the \eqsmall{${\nModel}$}~overlapping submodels.
As detailed in Sec.~\ref{sec:CertifiedOverlap}, when certifying robustness for both classification and regression, (partially) covering set~\eqsmall{$\setint{\nModel}$} is \textit{not} the goal.
Rather, the goal is to partially cover set
{%
  \equationSize%
  \begin{equation}%
    \lowerSubMods%
    \defeq%
      \setbuild{\modIdx}%
               {\decFuncI{\xTe} \leq \threshold}%
    \subseteq%
      \setint{\nModel}%
    \text{,}%
  \end{equation}%
}%
\noindent%
i.e.,~those subset of models that predict at most~\eqsmall{$\threshold$}.
Over this restricted subset, the maximum block spread degree may be less than \eqsmall{$\spreadDegMax$}.
As a very simplified explanation, \citeauthor{Wang:2022:DeterministicAggregation} use this insight to provide worst-case deterministic bounds on the robustness of an overlapping classification prediction.

Our discussion of overlapping regression in Sec.~\ref{sec:CertifiedOverlap} does not directly apply \citeauthor{Wang:2022:DeterministicAggregation}'s robustness certifier for multiple reasons, including
\begin{enumerate}
  \item \citeauthor{Wang:2022:DeterministicAggregation} do not explain that determining optimal \eqsmall{$\certBound$} under their formulation is NP\=/hard. We believe this is an important insight.

  \item \citeauthor{Wang:2022:DeterministicAggregation} do not analyze the optimality gap of their approach.  Both our ILP approach and partial set cover approximation give optimality bounds.
    What is more, our ILP-based approach can actually prove optimality gives \eqsmall{${\covOne, \ldots, \covFin}$}.
    In contrast, when \citeauthor{Wang:2022:DeterministicAggregation}'s solution is optimal, no indication of that fact is readily given.

  \item \citeauthor{Wang:2022:DeterministicAggregation} only consider the unit-cost assumption, meaning their formulation does not natively support weighted robustness analysis.  To ensure a fair and direct comparison between \overlapSingleMethod{} and \overlapMultiMethod{}, we use an ILP for both methods.
\end{enumerate}

One clear limitation of our choice to use an ILP for overlapping robustness certification is the added computational cost.
However, in our implementation, solving a single ILP uses very few resources, e.g.,~just a single core in a multicore CPU.
Moreover, we enforce tractable computation times by specifying an ILP time limit.
While these two factors combined are not a panacea, they do assuage (some) computational concerns.
 
\clearpage
\newpage
\section{On the Tightness of Certified Regression}%
\label{sec:App:Tightness}

This section explores two cases where our certified robustness bounds may not be tight.

\subsection{Region-Based Neighborhood IBL}%
\label{sec:App:Tightness:RegionBasedIBL}

Recall that our insertion/deletion paradigm specifies the certified robustness as
{%
  \equationSize%
  \begin{equation*}%
    \certBound
    =
      2\nLower
      -
      \nModel
      -
      1
    \text{.}
  \end{equation*}%
}%
\noindent%
Consider the case where \eqsmall{${\nLower < \nModel}$}.
After \eqsmall{$\certBound$} worst-case insertions into set~\eqsmall{$\setVals$}, the median is \eqsmall{$\frac{\max\lowerSetVals + \min\upperSetVals}{2}$}.
It is possible that \eqsmall{$\threshold$} can still exceed this value, meaning one more worst-case insertion/deletion is possible.
In other words, Eq.~\eqref{eq:Warmup:AddDelete:Lemma:Bound}'s bound would be non-tight by at most one.

To make the insertion/deletion paradigm's certified robustness bound tight, redefine Lemma~\ref{lem:Warmup:AddDelete}'s certified robustness as,
{%
  \equationSize%
  \begin{equation}%
    \label{eq:Warmup:AddDelete:Lemma:Bound:Tight}
    \certBound
    =
      2\nLower
      -
      \nModel
      -
      1
      +
      \indSize{%
        \frac{\max{\lowerSetVals} + \min{\upperSetValsPerturb}}%
             {2}
        \leq
        \threshold%
      }%
      {\bigg}
    \text{,}
  \end{equation}%
}%
\noindent%
where
\eqsmall{${\upperSetValsPerturb \defeq \upperSetVals \cup \set{\infty}}$}.%
\footnote{%
  Including~$\infty$
  addresses an edge case in Eq.~\eqref{eq:Warmup:AddDelete:Lemma:Bound:Tight} when ${\setVals = \lowerSetVals}$, and ${\min \emptyset}$ is undefined. When ${\upperSetVals \ne \emptyset}$, the $\infty$ has no effect.%
}

Since Theorem~\ref{thm:CertifiedKNN:DynamicSize} derives its bounds directly from Lemma~\ref{lem:Warmup:AddDelete}, our certified robustness bound for region-based neighborhood IBLs can also be non-tight by one.
We deliberately simplify our formulation to exclude this case to ensure consistency with the other presented ideas.

\subsection{Weighted Overlapping Certified Regression}

A weighted ensemble never has a worse \textit{true} certified robustness than its unit-cost equivalent.
However, there are cases where \overlapMultiMethod{}'s bound is one lower than \overlapSingleMethod{}'s bound, i.e.,~Fig.~\ref{fig:CertifiedGeneral:ILP}'s ILP bound is not tight.
This occurs, when for optimal~\eqsmall{$\certBound$}, it is possible to perturb \eqsmall{${\nLower - \halfModels + 1}$}~submodels (i.e.,~one more than the required \eqsmall{${\nLower - \halfModels}$} submodels).
In such cases, Eq.~\eqref{eq:WarmUp:NonUniformSwap:Bound}'s decrementing by one causes the bound to be one less than the ideal value.
It is possible to formulate a more complicated ILP to account for this corner case.
However, we deliberately keep the formulation simple, knowing \eqsmall{$\certBound$} may be marginally non-tight.
Sec.~\ref{sec:ExpRes}'s evaluation shows this niche case occurs only rarely in practice -- almost exclusively when $\nModel$ is small.
 
\clearpage
\newpage
\newcommand{\expSetupParagraph}[1]{%
  \vspace{4pt}%
  \noindent%
  \textbf{#1}~%
}%

\newcommand{\ExpSetupTabCaption}[2]{%
  \textbf{#1 Hyperparameters}:
  Hyperparameter settings for the three datasets that used #2 as the ensemble submodel architecture.
  Hyperparameters are reported for the three $\datasetDiv$~values used in Fig.~\ref{fig:ExpRes:CertificationBoundTrend} and Sec.~\ref{sec:App:MoreExps:DetailedResults}.
  We also report the hyperparameters for uncertified accuracy when ${\datasetDiv = 1}$.%
}

\section{Evaluation Setup}%
\label{sec:App:ExpSetup}

This section details the evaluation setup used in Section~\ref{sec:ExpRes}'s experiments, including implementation details, dataset configuration, and hyperparameter settings.

Our source code can be downloaded from \url{\sourceCodeUrl}.
All experiments were implemented and tested in Python~3.7.1.
Experiments were performed using one core of a fourteen-core Intel E5\=/2690v4 CPU and 12GB of RAM.
Ridge regression models were trained using Scikit-Learn~\citep{ScikitLearn}, while the decision forests used the XGBoost library~\citep{Chen:2016:XGBoost}.

The overlapping regressor ILPs (Fig.~\ref{fig:CertifiedGeneral:ILP}) were optimized using Gurobi~\citep{Gurobi} with a time limit of 1200s.
Our implementation loads the \texttt{gurobipy} \href{https://pypi.org/project/gurobipy/}{Gurobi python package} by default.
Gurobi is a commercial product and requires a license to solve most non-trivial linear programs.
Gurobi offers free, unlimited \href{https://www.gurobi.com/academia/academic-program-and-licenses/}{licenses for academic use}, including both for an individual system and for a cloud/HPC environment.

\subsection{Dataset Configuration}

Our source code automatically downloads all necessary datasets.

Regarding dataset preprocessing, categorical features were transformed into one-hot-encoded features in line with previous work~\citep{Brophy:2022:TreeInfluence,Hammoudeh:2022:GAS:Arxiv}.
Standardizing features by dataset mean/variance breaks submodel independence and so was not performed.
Minimal manual feature engineering was performed to improve the housing datasets' results, e.g.,~adding a home's age, total square feet, total number of bathrooms, etc.; this feature engineering was done based on existing features in the dataset (e.g.,~total square feet equals the sum of the first and second-floor square footage).
None of the engineered features affect submodel independence.

Most of the six datasets in Sec.~\ref{sec:ExpRes:Setup} do not have a dedicated test set.
In such cases, the data was split 90\%/10\% at random between training and test.

When training \knnMethod{} models, each feature dimension was normalized to the range~$\sbrack{0, 1}$.
Without feature normalization, \knnMethod{} generally prioritizes whichever feature has the largest magnitude.
This transformation implicitly restricts arbitrary insertions to the feature range in the original dataset.
Such normalization is implicitly done in certified classifier evaluation on image datasets where each pixel has a consistent, fixed range.

\subsection{\revised{Dataset Target Value Statistics}}

\revised{%
Table~\ref{tab:App:ExpSetup:DatasetYStats} summarizes the test set's target~($\y$) value distribution statistics for Sec.~\ref{sec:ExpRes}'s five regression datasets.

Recall from Table~\ref{tab:ExpRes:DatasetInfo} that the Ames, Austin, and Diamonds datasets set error threshold~$\threshold$ as a fixed percentage of~$\yTe$.
This choice was made because these three datasets exhibit significant $\y$~variance.
For example, for Diamonds, the largest $\y$~value (\$18.8k) is about two orders of magnitude larger than the smallest $\y$~value (\$339).
Using a fixed $\threshold$~value on these three datasets would have made certifying instances with small~$\y$ unrealistically easy while making certification of instances with large~$\y$ unreasonably difficult.
Making the error threshold a fraction of $\yTe$ allows the certification difficulty to be more consistent across the range of $\y$~values.

Datasets Weather and Life used fixed $\threshold$ values of 3~degrees (Celsius) and 3~years respectively.
Both of these threshold values are less than one-third of each dataset's $\y$ standard deviation.

Supplemental Section~\ref{sec:App:MoreExps:AltThresholds} evaluates the performance of our certified regressors on additional $\threshold$ values -- both larger and smaller than the $\threshold$ values used in Sec.~\ref{sec:ExpRes}.
}

\begin{table}[h]
  \centering
  \caption{
    \revised{%
      \textbf{Target Value Test Distribution Statistics}:
      Mean ($\bar{\y}$), standard deviation ($\sigma_{\y}$), minimum value ($\y_{\min}$) and maximum value ($\y_{\max}$) for the test instances' target $\y$~value for Sec.~\ref{sec:ExpRes}'s five regression datasets.%
    }%
  }\label{tab:App:ExpSetup:DatasetYStats}

\newcommand{\DatasetName}[1]{#1}

\newcommand{\WeatherDeg}[1]{${#1^{\circ} \text{C}}$}

\revised{%
\begin{tabular}{lllll}
    \toprule
    \textbf{Dataset}         & $\bar{\y}$         & $\sigma_{\y}$       & $\y_{\min}$         & $\y_{\max}$           \\
    \midrule
    \DatasetName{Ames}       & \$184k             & \$83.4k             & \$12.8k             & \$585k                \\
    \DatasetName{Austin}     & \$466k             & \$266k              & \$81.0k             & \$2.6M                \\
    \DatasetName{Diamonds}   & \$3.8k             & \$3.9k              & \$0.3k              & \$18.8k               \\
    \DatasetName{Weather}    & \WeatherDeg{14.9}  & \WeatherDeg{10.3}   & \WeatherDeg{-44.0}  & \WeatherDeg{54.0}     \\
    \DatasetName{Life}       & 69.3~years         & 9.6~years           & 36.3~years          & 89.0~years            \\
    \bottomrule%
\end{tabular}
}%
 \end{table}

\subsection{Hyperparameters}

Following \citepos{Jia:2022:CertifiedKNN} certified \knn{} classifier evaluation, \knnMethod{}'s neighborhood size,~$\kNeigh$, was set to the (larger) odd integer nearest to~$\frac{\nTr}{2}$.
We use the Minkowski distance as the neighborhood's distance metric.

For our ensemble regressors, hyperparameters were tuned using Bayesian optimization as implemented in the \texttt{scikit-optimize} library~\citep{ScikitOptimize}.
The partitioned and overlapping certified regressors (unweighted and weighted) used the same hyperparameter settings.

\expSetupParagraph{Ridge Regression Hyperparameters}
For three datasets -- Diamonds~\citep{DiamondsDataset}, Weather~\citep{Malinin:2021:Shifts}, and Spambase~\citep{SpambaseDataset} -- our four ensemble regressors used ridge regression as the submodel architecture.
For each dataset and $\datasetDiv$ value, we tuned three ridge regression hyperparameters.
Below, we list those hyperparameters along with the set of values considered.

\begin{itemize}
  \setlength{\itemsep}{6pt}
  \item \textit{Weight Decay}~($\hyperRidgeWD$): $L_2$~regularization strength.  We considered values between~$10^{-8}$ and~$10^{4}$.
  \item \textit{Error Tolerance}~($\hyperRidgeTol$): Minimum validation error that defines when a model is considered converged.  The tested values were \eqsmall{$\set{10^{-8}, 10^{-7}, \ldots, 10^{-3}}$}.
  \item \textit{Maximum Number of Iterations}~($\hyperRidgeItr$): Defines the maximum number of optimizer iterations.  If the error tolerance is achieved before the iteration count is met, the model is treated as converged, and optimization stops. The tested values were \eqsmall{$\set{10^{2}, 10^{3}, \ldots, 10^{8}}$}.
\end{itemize}

\noindent%
Table~\ref{tab:App:ExpSetup:Hyperparams:Ridge} lists the final hyperparameters for each experimental setup that used ridge regression as the submodel architecture.

\expSetupParagraph{XGBoost Hyperparameters}
For three datasets -- Ames Housing~\citep{DeCock:2011:AmesHousing}, Austin Housing~\citep{AustinHousingDataset}, and Life~\citep{LifeExpectancyDataset} -- our four ensemble regressors used XGBoost~\citep{Chen:2016:XGBoost} as the submodel architecture.
For each dataset and $\datasetDiv$ value, we tuned seven XGBoost hyperparameters.
Below, we list those hyperparameters along with the set of values considered.

\begin{itemize}
  \setlength{\itemsep}{6pt}
  \item \textit{Number of Trees}~($\hyperXgbNumEst$): Number of trees in the ensemble. The tested values were $\set{50, 100, 250, 500, 1000}$.
  \item \textit{Maximum Tree Depth}~($\hyperXgbMaxDepth$): Maximum depth of each tree in the ensemble. The tested values were $\set{1, \ldots, 4}$.
  \item \textit{Evaluation Metric}~($\hyperXgbMetric$): Applied to the validation set and is the metric being minimized. The tested values were root mean squared error (RMSE) and mean absolute error (MAE).
  \item \textit{Weight Decay}~($\hyperXgbWD$): $L_2$~regularization strength.  We considered values between~$10^{-3}$ and~$10^{5}$.
  \item \textit{Minimum Split Loss}~($\hyperXgbMinLoss$): Minimum reduction in loss required to split a node instead of making it a leaf. The values considered were $\set{0.005, 0.01, 0.05, 0.1, 0.3, 0.5, 1}$.
  \item \textit{Learning Rate}~($\hyperXgbLR$): Larger value makes the boosting more conservative. The tested values were $\set{0.01, 0.1, 0.3, 1}$.
\end{itemize}

\noindent%
Table~\ref{tab:App:ExpSetup:Hyperparams:XGBoost} lists the final hyperparameters for each experimental setup that used XGBoost as the submodel architecture.
Mixup~\citep{Zhang:2018:Mixup} data augmentation was used to improve XGBoost's performance.\footnote{Mixup does not apply to convex models like ridge regression.}

\vfill

\begin{table}[h!]
  \centering
  \caption{%
    \ExpSetupTabCaption{Ridge Regression}{ridge regression}
  }\label{tab:App:ExpSetup:Hyperparams:Ridge}
  {%
    \tableSize%
\newcommand{\HyperDsName}[1]{\multirow{4}{*}{#1}}

\newcommand{\IterSN}[2][1]{$#1\text{E}{#2}$}
\newcommand{\WdSN}[2]{$#1\text{E}{#2}$}
\newcommand{\TolSN}[2][1]{$#1\text{E}{-#2}$}

\newcommand{\HyperDivSep}{\cdashline{2-5}}
\newcommand{\HyperDsSep}{\midrule}

\renewcommand{\arraystretch}{1.2}
\setlength{\dashlinedash}{0.4pt}
\setlength{\dashlinegap}{1.5pt}
\setlength{\arrayrulewidth}{0.3pt}
\setlength{\tabcolsep}{8.4pt}

\begin{tabular}{crrrr}
  \toprule
  Dataset & $\datasetDiv$  & \multicolumn{1}{c}{$\hyperRidgeWD$} & \multicolumn{1}{c}{$\hyperRidgeTol$} & \multicolumn{1}{c}{$\hyperRidgeItr$}  \\
  \midrule
  \HyperDsName{Diamonds}
          & 1      & \WdSN{3.16}{-3}  & \TolSN{6}   &  \IterSN{6}     \\\HyperDivSep
          & 151    & \WdSN{6.01}{-2}  & \TolSN{7}   &  \IterSN{8}     \\\HyperDivSep
          & 501    & \WdSN{1.00}{-8}  & \TolSN{6}   &  \IterSN{8}     \\\HyperDivSep
          & 1001   & \WdSN{1.38}{-8}  & \TolSN{6}   &  \IterSN{2}     \\\HyperDsSep
  \HyperDsName{Weather}
          & 1      & \WdSN{3.16}{-3}  & \TolSN{8}   &  \IterSN{7}     \\\HyperDivSep
          & 51     & \WdSN{1.00}{+3}  & \TolSN{5}   &  \IterSN{6}     \\\HyperDivSep
          & 1501   & \WdSN{3.16}{+2}  & \TolSN{6}   &  \IterSN{2}     \\\HyperDivSep
          & 3001   & \WdSN{3.16}{+2}  & \TolSN{6}   &  \IterSN{3}     \\\HyperDsSep
  \HyperDsName{Spambase}
          & 1      & \WdSN{3.16}{+2}  & \TolSN{6}   &  \IterSN{5}     \\\HyperDivSep
          & 25     & \WdSN{3.16}{-3}  & \TolSN{6}   &  \IterSN{6}     \\\HyperDivSep
          & 151    & \WdSN{3.16}{-6}  & \TolSN{7}   &  \IterSN{6}     \\\HyperDivSep
          & 301    & \WdSN{3.16}{-3}  & \TolSN{6}   &  \IterSN{6}     \\
  \bottomrule
\end{tabular}
   }
\end{table}

\vfill

\begin{table}[h!]
  \centering
  \caption{%
    \ExpSetupTabCaption{XGBoost}{XGBoost}
  }\label{tab:App:ExpSetup:Hyperparams:XGBoost}
  {%
    \tableSize%
\newcommand{\HyperDsName}[1]{\multirow{4}{*}{#1}}

\newcommand{\HyperParamHead}[1]{\multicolumn{1}{c}{#1}}

\newcommand{\HyperDivSep}{\cdashline{2-8}}
\newcommand{\HyperDsSep}{\midrule}

\renewcommand{\arraystretch}{1.2}
\setlength{\dashlinedash}{0.4pt}
\setlength{\dashlinegap}{1.5pt}
\setlength{\arrayrulewidth}{0.3pt}
\setlength{\tabcolsep}{8.4pt}

\newcommand{\NEst}[1]{#1}
\newcommand{\MaxDepth}[1]{#1}
\newcommand{\XgbMetric}[1]{#1}
\newcommand{\WdSN}[2][1]{$#1\text{E}{#2}$}
\newcommand{\MinLoss}[2][1]{$#1\text{E}{#2}$}
\newcommand{\LrSN}[1]{#1}

\newcommand{\Nz}{}

\begin{tabular}{crrrrrrr}
  \toprule
  Dataset & $\datasetDiv$
          & \HyperParamHead{$\hyperXgbNumEst$}
          & \HyperParamHead{$\hyperXgbMaxDepth$}
          & \HyperParamHead{$\hyperXgbMetric$}
          & \HyperParamHead{$\hyperXgbWD$}
          & \HyperParamHead{$\hyperXgbMinLoss$}
          & \HyperParamHead{$\hyperXgbLR$} \\
  \midrule
  \HyperDsName{Ames Housing}
          & 1      & \NEst{250}  & \MaxDepth{2}  & \XgbMetric{RMSE}  & \WdSN{-1}    & \MinLoss[5]{-3}   & \LrSN{0.3\Nz}  \\\HyperDivSep
          & 25     & \NEst{500}  & \MaxDepth{2}  & \XgbMetric{MAE}   & \WdSN{-3}    & \MinLoss[5]{-3}   & \LrSN{0.3\Nz}  \\\HyperDivSep
          & 125    & \NEst{500}  & \MaxDepth{3}  & \XgbMetric{RMSE}  & \WdSN{-2}    & \MinLoss[5]{-3}   & \LrSN{1.0\Nz}  \\\HyperDivSep
          & 251    & \NEst{250}  & \MaxDepth{1}  & \XgbMetric{RMSE}  & \WdSN{-1}    & \MinLoss[5]{-3}   & \LrSN{1.0\Nz}  \\\HyperDsSep
  \HyperDsName{Austin Housing}
          & 1      & \NEst{500}  & \MaxDepth{4}  & \XgbMetric{MAE}   & \WdSN{+2}    & \MinLoss{-2}      & \LrSN{0.3\Nz}   \\\HyperDivSep
          & 151    & \NEst{1000} & \MaxDepth{1}  & \XgbMetric{RMSE}  & \WdSN{-2}    & \MinLoss[5]{-3}   & \LrSN{1.0\Nz}   \\\HyperDivSep
          & 301    & \NEst{250}  & \MaxDepth{1}  & \XgbMetric{MAE}   & \WdSN{+0}    & \MinLoss{-2}      & \LrSN{1.0\Nz}   \\\HyperDivSep
          & 701    & \NEst{250}  & \MaxDepth{1}  & \XgbMetric{MAE}   & \WdSN{-2}    & \MinLoss{-2}      & \LrSN{1.0\Nz}   \\\HyperDsSep
  \HyperDsName{Life}
          & 1      & \NEst{500}  & \MaxDepth{5}  & \XgbMetric{RMSE}  & \WdSN{+1}    & \MinLoss{-2}      & \LrSN{0.1\Nz}   \\\HyperDivSep
          & 25     & \NEst{250}  & \MaxDepth{4}  & \XgbMetric{RMSE}  & \WdSN[0]{+0} & \MinLoss[5]{-2}   & \LrSN{0.3\Nz}   \\\HyperDivSep
          & 101    & \NEst{250}  & \MaxDepth{3}  & \XgbMetric{MAE}   & \WdSN{+0}    & \MinLoss{-2}      & \LrSN{1.0\Nz}   \\\HyperDivSep
          & 201    & \NEst{250}  & \MaxDepth{4}  & \XgbMetric{RMSE}  & \WdSN[0]{+0} & \MinLoss[5]{-3}   & \LrSN{0.3\Nz}   \\
  \bottomrule
\end{tabular}
   }
\end{table}

\vfill
 
\clearpage
\newpage
\section{Additional Experiments}%
\label{sec:App:MoreExps}

Limited space prevents us from including all experimental results in the main paper.
We provide additional results below.

\newcommand\primitiveinput[1]
{\@@input #1 }

\newcommand{\DivHeader}[2][5]{\multirow{#1}{*}{$#2$}}
\newcommand{\DivOne}{\DivHeader[1]{1}}

\newcommand{\BoundDist}[1]{$#1$}
\newcommand{\BoundVal}[2]{#1 $\pm$ #2}%
\newcommand{\BoundValB}[2]{\textBF{#1} $\pm$ \textBF{#2}}

\newcommand{\DatasetDivSep}{\midrule}
\newcommand{\BoundSep}{\cdashline{2-7}}

\newcommand{\DetailedMethod}[1]{\multicolumn{1}{c}{#1}}

\newcommand{\FullResultsTable}[1]{%
  \renewcommand{\arraystretch}{1.2}
  \setlength{\dashlinedash}{0.4pt}
  \setlength{\dashlinegap}{1.5pt}
  \setlength{\arrayrulewidth}{0.3pt}
  \setlength{\tabcolsep}{8.4pt}

  \begin{tabular}{rrrrrrr}
    \toprule
    $\datasetDiv$
      & $\certBound$
      & \DetailedMethod{\disjointSingleMethod{}}
      & \DetailedMethod{\overlapSingleMethod{}}
      & \DetailedMethod{\disjointMultiMethod{}}
      & \DetailedMethod{\overlapMultiMethod{}}
      & \DetailedMethod{\knnMethod{}}  \\\midrule
    \primitiveinput{tables/data/#1_full_results.tex}
    \bottomrule
  \end{tabular}
}%
 
\newcommand{\FullResultsCaption}[2]{%
    \textbf{#1 Full Results}:
    Certified accuracy mean and standard deviation for the #1~\citep{#2} dataset.
    Each ensemble submodel was trained on \eqsmall{$\frac{1}{\datasetDiv}$}\=/th of the training set with three $\datasetDiv$~values tested per dataset, while
    \knnMethod{} was always trained on the whole training set (i.e.,~${\datasetDiv = 1}$).
    The certified accuracy results of five robustness values~(\eqsmall{$\certBound$}) are reported per $\datasetDiv$~value.
    Also reported as a baseline is the uncertified accuracy (\eqsmall{${\certBound = 0}$}) when training a single model on all of training set~\eqsmall{$\trainSet$} (\eqsmall{${\datasetDiv = 1}$}).
    Results are averaged across 10~trials per method, with each \eqsmall{$\certBound$}'s best mean certified accuracy in bold.%
}%

\subsection{Detailed Experimental Results}%
\label{sec:App:MoreExps:DetailedResults}

\noindent%
\subsubsection{Baseline Accuracy}
\label{sec:App:MoreExps:DetailedResults:Baseline}

Table~\ref{tab:App:MoreExps:DetailedResults:Baseline} shows the baseline accuracy when a model is trained on all of training set~$\trainSet$ (i.e.,~${\datasetDiv = 1}$).
For each dataset, the model architecture (either ridge regression or XGBoost) aligns with those used for Sec.~\ref{sec:ExpRes}'s ensembles.
See Table~\ref{tab:ExpRes:DatasetInfo}.

\null{}
\begin{table}[h!]
  \centering
  \caption{%
    \textbf{Baseline Accuracy}:
    Summary of the baseline (i.e.,~uncertified) accuracy mean and standard deviation for Sec.~\ref{sec:ExpRes}'s six datasets.
    Submodels were trained on all of training set~$\trainSet$ (i.e.,~${\datasetDiv = 1}$).
    Beside each dataset's name is the submodel architecture used by the ensemble.
    Threshold $\threshold$ matches values in Table~\ref{tab:ExpRes:DatasetInfo}.%
  }\label{tab:App:MoreExps:DetailedResults:Baseline}
  {%
    \tableSize%
\newcommand{\baseXGB}{XGBoost}
\newcommand{\baseRidge}{Ridge}

\newcommand{\baseVal}[2]{${#1 \pm #2}$}

\begin{tabular}{llr}
  \toprule
  Dataset     & Submodel      & Base Acc.~(\%) \\
  \midrule
  Ames        & \baseXGB{}    & \baseVal{90.4}{2.4}  \\%
  Austin      & \baseXGB{}    & \baseVal{71.3}{4.1}  \\%
  Diamonds    & \baseRidge{}  & \baseVal{73.6}{4.0}  \\%
  Weather     & \baseRidge{}  & \baseVal{85.9}{3.4}  \\%
  Life        & \baseXGB{}    & \baseVal{92.7}{3.1}  \\%
  Spambase    & \baseRidge{}  & \baseVal{87.5}{2.9}  \\
  \bottomrule
\end{tabular}
   }
\end{table}

\noindent%
\subsubsection{Numerical Results}%
\label{sec:App:MoreExps:DetailedResults:Numerical}

Fig.~\ref{fig:ExpRes:CertificationBoundTrend} visualizes our certified regressors' certified robustness on six datasets -- five regression and one binary classification.
This section provides the certified accuracy in numerical form, including the associated variance.

\null\vfill{}
\begin{table}[h!]
  \centering
  \caption{%
    \FullResultsCaption{Ames Housing}{DeCock:2011:AmesHousing}%
  }%
  \label{tab:App:MoreExps:DetailedResults:Ames}%
  {%
    \tableSize%
    \FullResultsTable{ames-housing}%
  }
\end{table}

\null\vfill{}
\begin{table}[h!]
  \centering
  \caption{%
    \FullResultsCaption{Austin Housing}{AustinHousingDataset}%
  }%
  \label{tab:App:MoreExps:DetailedResults:Austin}%
  {%
    \tableSize%
    \FullResultsTable{austin-housing}%
  }
\end{table}
\null\vfill{}

\begin{table}[t]
  \centering
  \caption{%
    \FullResultsCaption{Diamonds}{DiamondsDataset}%
  }%
  \label{tab:App:MoreExps:DetailedResults:Diamonds}
  {%
    \tableSize%
    \FullResultsTable{diamonds}%
  }
\end{table}

\begin{table}[t]
  \centering
  \caption{%
    \FullResultsCaption{Weather}{Malinin:2021:Shifts}%
  }%
  \label{tab:App:MoreExps:DetailedResults:ShiftsWeatherMedium}%
  {%
    \tableSize%
    \FullResultsTable{shifts-weather-medium}%
  }
\end{table}

\begin{table}[t]
  \centering
  \caption{%
    \FullResultsCaption{Life}{LifeExpectancyDataset}%
  }%
  \label{tab:App:MoreExps:DetailedResults:Life}%
  {%
    \tableSize%
    \FullResultsTable{life}%
  }
\end{table}

\begin{table}[t]
  \centering
  \caption{%
    \FullResultsCaption{Spambase}{SpambaseDataset}%
  }%
  \label{tab:App:MoreExps:DetailedResults:Spambase}%
  {%
    \tableSize%
    \FullResultsTable{spambase}%
  }
\end{table}
 
\newpage
\clearpage
\FloatBarrier
\subsection{\knnMethod{} Full Certified Accuracy Plots}%
\label{sec:App:MoreExps:FullKNN}

To improve readability, Fig.~\ref{fig:ExpRes:CertificationBoundTrend} does not show \knnMethod{}'s full certified accuracy trend.
Instead, Fig.~\ref{fig:App:MoreExps:KNNFullBound} below plots \knnMethod{}'s full mean certified accuracy against that of \overlapMultiMethod{} (using each dataset's maximum $\datasetDiv$~value) for each of Sec.~\ref{sec:ExpRes}'s six datasets.
Fig.~\ref{fig:App:MoreExps:KNNFullBound} also visualizes the variance of each method by showing one standard deviation of the certified accuracy as a shaded region around the mean line.
In summary, while \overlapMultiMethod{} certifies more instances (i.e.,~has larger peak certified accuracy), its maximum certified robustness~$\certBound$ is (significantly) smaller than that of~\knnMethod{}.

\vfill

\begin{figure*}[h!]
  \centering
\newcommand{\boundLegendFontSize}{\small}

\begin{tikzpicture}
  \begin{axis}[%
      hide axis,  %
      xmin=0,  %
      xmax=1,
      ymin=0,
      ymax=1,
      scale only axis,width=1mm, %
      line width=\boundLineWidthLegend,
      legend cell align={left},              %
      legend style={font=\boundLegendFontSize},  %
      legend columns=5,
      legend image post style={xscale=0.6},  %
      legend style={/tikz/every even column/.append style={column sep=0.5cm}}
    ]
    \addplot [
      knn trend line,
      line width=0.9pt,
    ] coordinates {(0,0)};
    \addlegendentry{\knnMethod{} (${\datasetDiv = 1}$)}

    \addplot [
      multi overlap line,
      line width=0.9pt,
    ] coordinates {(0,0)};
    \addlegendentry{\overlapMultiMethod{} ($\datasetDiv$ Varies)}
\end{axis}
\end{tikzpicture}

\newcommand{\knnBoundTrend}[9]{%
  \centering
  \pgfplotstableread[col sep=comma] {plots/data/#1}\thedata%
  \begin{tikzpicture}
    \begin{axis}[
      width={#2},%
      height={#3},%
      xmin={0},%
      xmax={#4},%
      xtick distance={#5},
      minor x tick num={3},
      x tick label style={font=\boundFontSize,align=center},%
      xlabel={\boundFontSize Certified Robustness~($\certBound$)},
      xmajorgrids,
      axis x line*=bottom,  %
      ymin=0,
      ymax={#6},
      ytick distance={#7},
      minor y tick num={3},
      y tick label style={font=\boundYAxisFontSize,align=center},%
      ylabel style={font=\boundYAxisFontSize,align=center},%
      ylabel={#8},
      ymajorgrids,
      axis y line*=left,  %
      mark size=0pt,
      #9,
      ]

      \knnShadePlot{NuPCR}{multi overlap line}{overlap line fill}
      \knnShadePlot{KNN}{knn trend line}{knn line fill}
    \end{axis}
  \end{tikzpicture}
}

\newcommand{\knnShadePlot}[3]{%
  \addplot [name path=#1 lower, line fill base] table [
      x=X, y expr=\thisrow{#1-Mean} - \thisrow{#1-Stdev}]{\thedata};
  \addplot [name path=#1 upper, line fill base] table [
      x=X, y expr=\thisrow{#1-Mean} + \thisrow{#1-Stdev}]{\thedata};
  \addplot[#3] fill between[of=#1 lower and #1 upper];
  \addplot[#2] table [x=X, y=#1-Mean] \thedata;
}
 
\newcommand{\knnBoundDatasetSpacer}{\vspace{15pt}}
\newcommand{\knnBoundLegendSpacer}{\vspace{10pt}}

\newcommand{\knnMiniWidth}{0.460\textwidth}
\newcommand{\knnPlotWidth}{\textwidth}
\newcommand{\knnBoundTrendHeight}{1.65in}
\newcommand{\knnPlotHorizontalSpacer}{\hfill}

\newcommand{\knnMaxYValue}{90}
\newcommand{\knnMaxYTickDistance}{20}
\newcommand{\knnSkipZeroTick}{yticklabels={,,20,40,60,80},}

\newcommand{\knnBoundPlot}[4]{
  \begin{subfigure}[b]{\knnMiniWidth}
    \knnBoundTrend{#1/#1_knn_00001_filt.csv}   %
                  {\knnPlotWidth}         %
                  {\knnBoundTrendHeight}  %
                  {#3}                    %
                  {#4}                    %
                  {\knnMaxYValue}         %
                  {\knnMaxYTickDistance}  %
                  {\certifiedAccStr}      %
                  {\knnSkipZeroTick}      %
    \caption{#2}
  \end{subfigure}
}

  \knnBoundLegendSpacer{}
  \knnBoundPlot{ames}%
               {Ames Housing}%
               { 295}%
               {  50}%
  \knnPlotHorizontalSpacer
  \knnBoundPlot{austin}%
               {Austin Housing}%
               { 950}%
               { 200}%

  \knnBoundDatasetSpacer{}
  \knnBoundPlot{diamonds}%
               {Diamonds}%
               {1300}%
               { 300}%
  \knnPlotHorizontalSpacer
  \knnBoundPlot{shifts-weather-medium}%
               {Weather}%
               {16000}%
               { 3000}%

  \knnBoundDatasetSpacer{}
  \knnBoundPlot{life}%
               {Life}%
               { 190}%
               {  40}%
  \knnPlotHorizontalSpacer
  \knnBoundPlot{spambase}%
               {Spambase}%
               { 525}%
               { 100}%

  \caption{%
    \textbf{\knnMethod{} vs.\ \overlapMultiMethod{} Certified Accuracy}:
    Full plots of the mean certified accuracy for Sec.~\ref{sec:ExpRes}'s six datasets.
    The shaded regions visualize one standard deviation of the certified accuracy for each $\certBound$ value.
    \overlapMultiMethod{}'s $\datasetDiv$~value for each dataset is in Table~\ref{tab:App:MoreExps:KNNFullBound:DivVals}.
  }
  \label{fig:App:MoreExps:KNNFullBound}
\end{figure*}

\vfill

\begin{table*}[h!]
  \centering
  \caption{%
    \textbf{\overlapMultiMethod{} $\datasetDiv$ Values}:
    As detailed in Sec.~\ref{sec:ExpRes:Setup}, ensemble submodels were trained on \eqsmall{$\frac{1}{\datasetDiv}$}\=/th of the training data where ${\datasetDiv}$ varies by dataset.
    Below are the \overlapMultiMethod{} $\datasetDiv$~values used in Fig.~\ref{fig:App:MoreExps:KNNFullBound}.
  }\label{tab:App:MoreExps:KNNFullBound:DivVals}

\begin{tabular}{ccccccc}
  \toprule
  Dataset
    & Ames & Austin & Diamonds & Weather & Life & Spambase \\
  \midrule
  $\datasetDiv{}$
    & 251  & 701    & 1,001    & 3,001   & 201  & 301 \\
  \bottomrule
\end{tabular}
 \end{table*}
 
\newpage
\clearpage
\FloatBarrier
\subsection{Alternative \texorpdfstring{$\threshold$}{Threshold} Evaluation}%
\label{sec:App:MoreExps:AltThresholds}

For each dataset in Sec.~\ref{sec:ExpRes}, we evaluate a single threshold value~$\threshold$.
This section explores how $\threshold$ affects our certified regressors' performance on four regression datasets.\footnote{%
For binary classification (e.g.,~Spambase~\citep{SpambaseDataset}), alternate $\threshold$~values do not apply.
}
Specifically, Fig.~\ref{fig:App:MoreExps:AltThreshold:Plots} considers our two best-performing methods, \overlapMultiMethod{} and \knnMethod{}.
Like in Sec~\ref{sec:ExpRes}, \knnMethod{} uses ${\datasetDiv = 1}$.
For \overlapMultiMethod{}, we report results for each dataset's two largest $\datasetDiv$ values.
Note that in Table~\ref{tab:ExpRes:DatasetInfo}, Ames and Diamonds use the same (default) threshold~${\threshold = 15\% \cdot \y}$ while both Life and Weather use (default) value~${\threshold = 3}$.
In Fig.~\ref{fig:App:MoreExps:AltThreshold:Plots} below, each dataset pair considers the same alternate $\threshold$ values.

$\threshold$'s exact effect varies across datasets, but generally, certified accuracy increases roughly linearly with~$\threshold$ until the accuracy saturates.

\vspace{-10pt}
\begin{figure}[h!]
  \centering
\newcommand{\altDistFontSize}{\scriptsize}%
\newcommand{\altTrendFullPlot}[9]{%
  \centering%
  \pgfplotstableread[col sep=comma] {plots/data/#1}\thedata%
  \begin{tikzpicture}%
    \begin{axis}[%
        scale only axis,%
        width={\altDistPlotWidth},%
        height={\altDistPlotHeight},%
        xmin={0},%
        xmax={#2},%
        xtick distance={#3},%
        minor x tick num={3},%
        x tick label style={font=\altDistFontSize,align=center},%
        scaled x ticks=false,
        xlabel={\altDistFontSize #4},%
        xmajorgrids,%
        axis x line*=bottom,  %
        ymin=0,%
        ymax={#5},%
        ytick distance={#6},%
        minor y tick num={3},%
        yticklabels={,,,,,},%
        y tick label style={font=\altDistFontSize,align=center},%
        ylabel style={font=\altDistFontSize,align=center},%
        ylabel={#7},%
        ymajorgrids,%
        axis y line*=left,  %
        mark size=0pt,%
        #8%
      ]%
      #9%
    \end{axis}%
  \end{tikzpicture}%
}%
\newcommand{\singleAltPlot}[3]{%
  \addplot [name path=#1 lower, line fill base] table [%
      x=X, y expr=\thisrow{#1-Mean} - \thisrow{#1-Stdev}]{\thedata};%
  \addplot [name path=#1 upper, line fill base] table [%
      x=X, y expr=\thisrow{#1-Mean} + \thisrow{#1-Stdev}]{\thedata};%
  \addplot[#3] fill between[of=#1 lower and #1 upper];%
  \addplot[#2] table [x=X, y=#1-Mean] \thedata;%
}
\newcommand{\altDistLegFontSize}{\small}

\newcommand{\altDistLegend}[2]{
  \begin{tikzpicture}
    \begin{axis}[%
        hide axis,  %
        no marks,
        xmin=0,  %
        xmax=1,
        ymin=0,
        ymax=1,
        scale only axis,width=1mm, %
        legend cell align={left},              %
        legend style={font=\altDistLegFontSize},
        legend columns=#1,
        legend style={/tikz/every even column/.append style={column sep=0.5cm}}
      ]
      #2
    \end{axis}
  \end{tikzpicture}
}

\newcommand{\altDistLegEntry}[2]{%
    \addplot [%
      #1,
      line width=0.9pt,
    ] coordinates {(0,0)};
    \addlegendentry{#2}%
}
\newcommand{\altDistFirstMiniWidth}{0.355\textwidth}%
\newcommand{\altDistOtherMiniWidth}{0.305\textwidth}%
\newcommand{\altDistPlotHeight}{1.25in}%
\newcommand{\altDistPlotWidth}{2.10in}%
\newcommand{\altDistLegendSpacer}{\vspace{6pt}}%
\newcommand{\altDistDatasetSpacer}{%
  \vspace{15pt}%
  \noindent%
}%
\newcommand{\altDistHorizontalSpacer}{}%
\newcommand{\altDistYMax}{100}%
\newcommand{\altDistYTickDist}{20}%
\newcommand{\altDistSkipZeroTick}{yticklabels={,,20,40,60,80,100},}%
\newcommand{\altDistXLabel}[2]{#1 Certified Robustness~$\certBound$ (${\datasetDiv = #2}$)}%
\newcommand{\legVal}[1]{${\threshold = #1}$}%
\newcommand{\legValDefault}[1]{\legVal{#1}~(Default)}%
\newcommand{\legValY}[1]{${\threshold = #1 \cdot \y}$}%
\newcommand{\legValDefaultY}[1]{\legValY{#1}~(Default)}%
\newcommand{\altDistYLabel}[1]{\textbf{#1} \\ Certified Acc.\ (\%)}%
\newcommand{\altDistNoYLabel}{ylabel={}}
\newcommand{\altDistCaption}[3]{%
  \caption{#2 (${\datasetDiv = #3}$)}%
}%
\newcommand{\altLegendCombo}[2]{%
  \begin{center}
    \altDistLegend{#1}{#2}
  \end{center}

  \altDistLegendSpacer{}
}%
\newcommand{\knnAltDistPlot}[5]{%
  \begin{subfigure}[b]{\altDistOtherMiniWidth}%
    \centering%
    \altTrendFullPlot{#1/alt-dist/knn/#1_alt-dist_knn_filt.csv}  %
                     {#3}                    %
                     {#4}                    %
                     {\altDistXLabel{\knnMethod}{1}} %
                     {\altDistYMax}          %
                     {\altDistYTickDist}     %
                     {}                      %
                     {}                      %
                     {#5}                    %
  \end{subfigure}%
}%
\newcommand{\ensembleAltDistPlot}[9]{%
  \begin{subfigure}[b]{#5}
    \centering%
    \altTrendFullPlot{#1/alt-dist/#3/#1_alt-dist_#3_#4_filt.csv}  %
                     {#6}                    %
                     {#7}                    %
                     {\altDistXLabel{\overlapMultiMethod}{\num{#4}}} %
                     {\altDistYMax}          %
                     {\altDistYTickDist}     %
                     {\altDistYLabel{#2}}    %
                     {#8}                    %
                     {#9}                    %
  \end{subfigure}
}%
\newcommand{\altTrend}[1]{knn alt#1 trend line}%
\newcommand{\altFill}[1]{knn alt#1 line fill}%
\newcommand{\generalRelativeAltPlots}[1]{%
  \singleAltPlot{#1-D05}%
                {\altTrend{01}}%
                {\altFill{01}}%
  \singleAltPlot{#1-D10}%
                {\altTrend{02}}%
                {\altFill{02}}%
  \singleAltPlot{#1-D15}%
                {\altTrend{-main}}%
                {\altFill{-main}}%
  \singleAltPlot{#1-D20}%
                {\altTrend{03}}%
                {\altFill{03}}%
  \singleAltPlot{#1-D25}%
                {\altTrend{04}}%
                {\altFill{04}}%
}%
\newcommand{\generalFixedAltPlots}[1]{%
  \singleAltPlot{#1-D01}%
                {\altTrend{01}}%
                {\altFill{01}}%
  \singleAltPlot{#1-D02}%
                {\altTrend{02}}%
                {\altFill{02}}%
  \singleAltPlot{#1-D03}%
                {\altTrend{-main}}%
                {\altFill{-main}}%
  \singleAltPlot{#1-D04}%
                {\altTrend{03}}%
                {\altFill{03}}%
  \singleAltPlot{#1-D05}%
                {\altTrend{04}}%
                {\altFill{04}}%
  \singleAltPlot{#1-D07}%
                {\altTrend{05}}%
                {\altFill{05}}%
  \singleAltPlot{#1-D09}%
                {\altTrend{06}}%
                {\altFill{06}}%
}%

\newcommand{\amesHousingLegendEntries}{%
  \altDistLegEntry{\altTrend{01}}%
                  {\legValY{5\%}}%
  \altDistLegEntry{\altTrend{02}}%
                  {\legValY{10\%}}%
  \altDistLegEntry{\altTrend{-main}}%
                  {\legValDefaultY{15\%}}%
  \altDistLegEntry{\altTrend{03}}%
                  {\legValY{20\%}}%
  \altDistLegEntry{\altTrend{04}}%
                  {\legValY{25\%}}%
}
\newcommand{\amesEnsemblePlot}[5]{%
  \ensembleAltDistPlot{ames}                   %
                      {Ames Housing}           %
                      {wocr}                   %
                      {#1}                     %
                      {#5}                     %
                      {#2}                     %
                      {#3}                     %
                      {#4}                     %
                      {\generalRelativeAltPlots{WOCR}}%
}%

\altDistDatasetSpacer{}%
\altLegendCombo{5}{\amesHousingLegendEntries}
\amesEnsemblePlot{00125}                  %
                 {70}                     %
                 {15}                     %
                 {\altDistSkipZeroTick}   %
                 {\altDistFirstMiniWidth} %
\altDistHorizontalSpacer{}%
\amesEnsemblePlot{00251}                  %
                 {149}                    %
                 {30}                     %
                 {\altDistNoYLabel}       %
                 {\altDistOtherMiniWidth} %
\altDistHorizontalSpacer{}%
\knnAltDistPlot{ames}%
               {Ames Housing}%
               {450}%
               {100}%
               {\generalRelativeAltPlots{Knn}}%

\newcommand{\diamondsEnsemblePlot}[5]{%
  \ensembleAltDistPlot{diamonds}               %
                      {Diamonds}               %
                      {wocr}                   %
                      {#1}                     %
                      {#5}                     %
                      {#2}                     %
                      {#3}                     %
                      {#4}                     %
                      {\generalRelativeAltPlots{WOCR}}%
}%
\altDistDatasetSpacer{}%
\diamondsEnsemblePlot{00501}                  %
                     {520}                    %
                     {100}                    %
                     {\altDistSkipZeroTick}   %
                     {\altDistFirstMiniWidth} %
\altDistHorizontalSpacer{}%
\diamondsEnsemblePlot{01001}                  %
                     {925}                    %
                     {200}                    %
                     {\altDistNoYLabel}       %
                     {\altDistOtherMiniWidth} %
\altDistHorizontalSpacer{}%
\knnAltDistPlot{diamonds}%
               {Diamonds}%
               {1700}%
               {400}%
               {\generalRelativeAltPlots{Knn}}%

\newcommand{\weatherLegendEntries}{%
  \altDistLegEntry{\altTrend{01}}%
                  {\legVal{1}}%
  \altDistLegEntry{\altTrend{02}}%
                  {\legVal{2}}%
  \altDistLegEntry{\altTrend{-main}}%
                  {\legValDefault{3}}%
  \altDistLegEntry{\altTrend{03}}%
                  {\legVal{4}}%
  \altDistLegEntry{\altTrend{04}}%
                  {\legVal{5}}%
  \altDistLegEntry{\altTrend{05}}%
                  {\legVal{7}}%
  \altDistLegEntry{\altTrend{06}}%
                  {\legVal{9}}%
}%
\newcommand{\weatherEnsemblePlot}[5]{%
  \ensembleAltDistPlot{shifts-weather-medium}  %
                      {Weather}                %
                      {wocr}                   %
                      {#1}                     %
                      {#5}                     %
                      {#2}                     %
                      {#3}                     %
                      {#4}                     %
                      {\generalFixedAltPlots{WOCR}}%
}%
\newcommand{\weatherKnnAltPlots}{%
  \generalFixedAltPlots{Knn}%
}%
\altDistDatasetSpacer{}%
\altLegendCombo{7}{\weatherLegendEntries}%
\weatherEnsemblePlot{01501}                  %
                    {1499}                   %
                    {300}                    %
                    {\altDistSkipZeroTick}   %
                    {\altDistFirstMiniWidth} %
\altDistHorizontalSpacer{}%
\weatherEnsemblePlot{03001}                  %
                    {2999}                   %
                    {600}                    %
                    {\altDistNoYLabel}       %
                    {\altDistOtherMiniWidth} %
\altDistHorizontalSpacer{}%
\knnAltDistPlot{shifts-weather-medium}%
               {Weather}%
               {18000}%
               {4000}%
               {\weatherKnnAltPlots}%

\newcommand{\lifeEnsemblePlot}[5]{%
  \ensembleAltDistPlot{life}                   %
                      {Life}                   %
                      {wocr}                   %
                      {#1}                     %
                      {#5}                     %
                      {#2}                     %
                      {#3}                     %
                      {#4}                     %
                      {\generalFixedAltPlots{WOCR}}%
}%
\newcommand{\lifeKnnAltPlots}{%
  \generalFixedAltPlots{Knn}%
}%
\altDistDatasetSpacer{}%
\lifeEnsemblePlot{00101}                  %
                 {103}                     %
                 {20}                     %
                 {\altDistSkipZeroTick}   %
                 {\altDistFirstMiniWidth} %
\altDistHorizontalSpacer{}%
\lifeEnsemblePlot{00201}                  %
                 {206}                    %
                 {50}                     %
                 {\altDistNoYLabel}       %
                 {\altDistOtherMiniWidth} %
\altDistHorizontalSpacer{}%
\knnAltDistPlot{life}%
               {Life}%
               {350}%
               {75}%
               {\lifeKnnAltPlots}%

  \caption{%
    \textbf{Alternative $\threshold$ Values}:
    Mean certified accuracy of our two best performing methods, \overlapMultiMethod{} and \knnMethod{}, across alternative error thresholds~$\threshold$.
    For \overlapMultiMethod{}, we consider each dataset's two largest $\datasetDiv$ values from Fig.~\ref{fig:ExpRes:CertificationBoundTrend}.
    The ``default'' $\threshold$ value in each sublegend denotes the dataset's corresponding $\threshold$ value that is evaluated in Sec.~\ref{sec:ExpRes} (Tab.~\ref{tab:ExpRes:DatasetInfo}).
    Each line's shaded region visualizes one standard deviation around the mean certified accuracy.
  }
  \label{fig:App:MoreExps:AltThreshold:Plots}
\end{figure}

\newpage
\clearpage
\FloatBarrier
\subsection{Model Training Times}%
\label{sec:App:MoreExps:TrainingTime}

This section summarizes the time needed to train our primary certified regressors.
All experiments were performed using a single core of a fourteen-core Intel E5\=/2690v4 CPU with 12GB of 2400MHz DDR4 RAM.
As is standard for \knn{} models, \knnMethod{}'s training time is essentially zero since ``training'' simply entails instance memorization, which is nearly instantaneous.

Table~\ref{tab:App:MoreExps:TrainingTime} details the training times of our two partitioned regressors: \disjointSingleMethod{} which follows the unit-cost assumption (i.e., \eqsmall{${\forall_{\modIdx}\, \covI = 1}$}) and its weighted extension \disjointMultiMethod{}.
The training times are broken down into the time required to train a single submodel as well as the average time to train the whole ensemble.
As expected, \disjointMultiMethod{} takes much longer to train than \disjointSingleMethod{} since the former requires training \bigOnPlT{}~models (for ${\covMax = 2}$) while the latter requires only \bigOT{}~models.

As $\datasetDiv$~increases, each submodel is trained on fewer data.
It might be assumed that submodel training time is inversely related to~$\datasetDiv$.
However, Table~\ref{tab:App:MoreExps:TrainingTime} shows that this is not necessarily the case since different hyperparameter settings can drastically affect how long a model takes to train, even after accounting for~$\datasetDiv$.

Note also that all ensemble submodels can be trained fully in parallel.
Hence, by simply parallelizing submodel training, \disjointSingleMethod{}'s total training time can be sped up by a factor of about~$\nModel$ while \disjointMultiMethod{}'s total training time can be sped up by about a factor of ${(\nTr + \nModel)}$.
Moreover, while training an ensemble (partitioned or overlapping) is often more computationally expensive than training just one model on the whole training set, this additional training time is amortized across all test predictions that need to be certified.

Lastly, recall from Sec.~\ref{sec:ExpRes:Setup}'s evaluation that our overlapping certified ensembles, \overlapSingleMethod{} and \overlapMultiMethod{}, have ${\nModel = \datasetDiv \spreadDegreeSym}$~submodels, where $\spreadDegreeSym$~is the training set block spread degree.
Therefore, the time to train \overlapSingleMethod{} and \overlapMultiMethod{} ensembles is about ${\spreadDegreeSym}$~times larger than that of \disjointSingleMethod{} and \disjointMultiMethod{}, respectively.
See Table~\ref{tab:ExpRes:DatasetInfo} for each dataset's $\spreadDegreeSym$~value.

\vfill

\begin{table}[h!]
  \centering
  \caption{%
    \textbf{(Weighted) Partitioned Certified Regression Model Training Times}:
    Mean and standard deviation \disjointSingleMethod{} and \disjointMultiMethod{} model training times (in seconds) for six datasets in Sec.~\ref{sec:ExpRes}.
    Below each dataset name is the corresponding submodel architecture that was used -- either ridge regression or XGBoost.
    All experiments were performed on a single CPU core.
  }%
  \label{tab:App:MoreExps:TrainingTime}%
  {%
    \tableSize%
\newcommand{\DivRule}{\cdashline{2-6}}
\newcommand{\DsRule}{\midrule}

\renewcommand{\arraystretch}{1.2}
\setlength{\dashlinedash}{0.4pt}
\setlength{\dashlinegap}{1.5pt}
\setlength{\arrayrulewidth}{0.3pt}
\setlength{\tabcolsep}{8.5pt}

\newcommand{\SubName}{\multicolumn{1}{c}{Submodel}}
\newcommand{\TotName}{\multicolumn{1}{c}{Total}}
\newcommand{\xgbTab}{XGBoost}

\newcommand{\DsName}[2]{\multirow{3}{*}{\shortstack{#1 \\ (#2)}}}
\newcommand{\SubTime}[2]{${#1 \pm #2}$}
\newcommand{\TotTime}[2]{${#1}$}

\newcommand{\Pz}{\phantom{.0}}
\newcommand{\Nz}{\phantom{0}}
\newcommand{\Nzz}{\phantom{00}}

{\centering%
  \begin{tabular}{@{}crrrrr@{}}
  \toprule
  \multirow{2}{*}{\shortstack{Dataset \\ (Model Type)}}
      & \multirow{2}{*}{${\datasetDiv}$}
      & \multicolumn{2}{c}{\disjointSingleMethod}  & \multicolumn{2}{c}{\disjointMultiMethod} \\\cmidrule(lr){3-4}\cmidrule(lr){5-6}
      &           & \SubName{}                 & \TotName{}            & \SubName{}                & \TotName{}   \\
  \midrule
  \DsName{Ames}{\xgbTab}
      & 25        & \SubTime{2.5\Nz}{0.50}     & \TotTime{63.1}{}      & \SubTime{213.6}{\Nz28.3}  & \TotTime{5340}{}   \\\DivRule
      & 125       & \SubTime{3.1\Nz}{0.33}     & \TotTime{384\Pz}{}    & \SubTime{27.5}{\Nzz9.8}   & \TotTime{3437}{}   \\\DivRule
      & 251       & \SubTime{1.0\Nz}{0.13}     & \TotTime{261\Pz}{}    & \SubTime{ 3.9}{\Nzz1.7}   & \TotTime{ 984}{}   \\
  \midrule
  \DsName{Austin}{\xgbTab}
      & 51        & \SubTime{2.2\Nz}{0.19}     & \TotTime{111\Pz}{}    & \SubTime{502.3}{109.3}   & \TotTime{25,617}{}   \\\DivRule
      & 301       & \SubTime{0.37}{0.06}       & \TotTime{184\Pz}{}    & \SubTime{25.5}{\Nzz8.9}   & \TotTime{7,688}{}   \\\DivRule
      & 701       & \SubTime{0.89}{0.11}       & \TotTime{626\Pz}{}    & \SubTime{12.5}{\Nzz8.1}   & \TotTime{8,786}{}   \\
  \midrule
  \DsName{Diamonds}{Ridge}
      & 151       & \SubTime{0.01}{0.00}       & \TotTime{0.7}{}       & \SubTime{2.1}{\Nzz0.2}    & \TotTime{322}{}   \\\DivRule
      & 501       & \SubTime{0.01}{0.00}       & \TotTime{1.9}{}       & \SubTime{0.6}{\Nzz0.1}    & \TotTime{293}{}   \\\DivRule
      & 1,001     & \SubTime{0.01}{0.00}       & \TotTime{4.2}{}       & \SubTime{0.3}{\Nzz0.1}    & \TotTime{299}{}   \\
  \midrule
  \DsName{Weather}{Ridge}
      & 51        & \SubTime{0.05}{0.02}       & \TotTime{2.4}{}       & \SubTime{298.7}{109.7}    & \TotTime{15,234}{}   \\\DivRule
      & 1,501     & \SubTime{0.01}{0.00}       & \TotTime{12.6}{}      & \SubTime{2.5}{\Nzz0.2}    & \TotTime{3,776}{}   \\\DivRule
      & 3,001     & \SubTime{0.02}{0.00}       & \TotTime{66.0}{}      & \SubTime{4.7}{\Nz10.6}    & \TotTime{14,051}{}   \\
  \midrule
  \DsName{Life}{\xgbTab}
      & 25        & \SubTime{2.2\Nz}{0.18}     & \TotTime{56.2}{}      & \SubTime{203.8}{\Nz34.9}  & \TotTime{5,095}{}   \\\DivRule
      & 101       & \SubTime{1.8\Nz}{0.31}     & \TotTime{185\Pz}{}    & \SubTime{ 33.5}{\Nz12.6}  & \TotTime{3,388}{}   \\\DivRule
      & 201       & \SubTime{8.1\Nz}{2.6\Nz}   & \TotTime{1,624\Pz}{}  & \SubTime{ 85.5}{\Nz54.6}  & \TotTime{17,184}{}   \\
  \midrule
  \DsName{Spambase}{Ridge}
      & 25        & \SubTime{0.03}{0.01}       & \TotTime{0.8}{}       & \SubTime{4.4}{\Nzz1.2}    & \TotTime{109}{}   \\\DivRule
      & 151       & \SubTime{0.01}{0.00}       & \TotTime{1.7}{}       & \SubTime{0.5}{\Nzz0.3}    & \TotTime{69}{}   \\\DivRule
      & 301       & \SubTime{0.01}{0.00}       & \TotTime{0.9}{}       & \SubTime{0.1}{\Nzz0.0}    & \TotTime{19}{}   \\
  \bottomrule
\end{tabular}
}
  }%
\end{table}

\vfill
 
\newpage
\clearpage
\FloatBarrier
\subsection{Overlapping Regression ILP Execution Time}%
\label{sec:App:MoreExps:GurobiTime}

Recall from Sec.~\ref{sec:CertifiedOverlap} that our two overlapping certified regressors, \overlapSingleMethod{} and \overlapMultiMethod{}, use an integer linear program to bound certified robustness~\eqsmall{$\certBound$}.
Under \overlapSingleMethod{}'s unit-cost structure where \eqsmall{${\forall_{\modIdx}\, \covI \leq 1}$}, Fig.~\ref{fig:CertifiedGeneral:ILP} is actually a \keyword{binary integer program} since \eqsmall{${\forall_{\blockIdx} \, \dsBlockVarI \in \set{0,1}}$}.
In contrast, with \overlapMultiMethod{}'s weighted costs, Fig.~\ref{fig:CertifiedGeneral:ILP} is a true integer linear program.
While both \overlapSingleMethod{} and \overlapMultiMethod{} have the same asymptotic complexity, we observed that LP solvers generally solve programs with only binary variables faster than integral ones.

Sec.~\ref{sec:ExpRes}'s evaluation implemented Fig.~\ref{fig:CertifiedGeneral:ILP}'s LP in Gurobi~\citep{Gurobi}.
All experiments were performed on a single core of a fourteen-core Intel E5\=/2690v4 CPU with 12GB of RAM.
Fig.~\ref{fig:App:MoreExps:GurobiTime} shows the ILP execution time distribution of \overlapSingleMethod{} and \overlapMultiMethod{} for three datasets from Sec.~\ref{sec:ExpRes}.
For each dataset, we report the ILP's execution time for the two largest $\datasetDiv$~values.
Each histogram visualizes at least 1600~trials per method.

In summary, ILP execution time is bimodal with run times clustered around 0~seconds and 1200~seconds (i.e.,~the ILP's time limit).
\overlapSingleMethod{}'s ILP is generally faster on average than \overlapMultiMethod{} (observe that \overlapSingleMethod{} times out less frequently, if at all).

Note also that the ILP's execution time distribution does not change significantly across each dataset's two $\datasetDiv$~values.

\vfill

\begin{figure}[h!]
  \centering

\newcommand{\gurobiTimeFontSize}{\scriptsize}%
\newcommand{\gurobiHistPlot}[5]{%
  \pgfplotstableread[col sep=comma] {plots/data/#1}\thedata%
  \begin{tikzpicture}
    \begin{axis}[
        scale only axis,%
        height={\gurobiTimePlotHeight},%
        width={\gurobiTimePlotWidth},%
        xmin=0,%
        xmax={\gurobiTimeXMax},%
        xtick distance={\gurobiTimeXTickDist},
        x tick label style={font=\gurobiTimeFontSize,align=center},%
        xlabel={#2},%
        xlabel style={font=\gurobiTimeFontSize,align=center},%
        xmajorgrids,%
        axis x line*=bottom,  %
        ymin=0,%
        ymax={#3},%
        ytick distance={#4},%
        yticklabels={,,,,,,,},%
        minor y tick num=1,%
        y tick label style={font=\gurobiTimeFontSize,align=center},%
        scaled y ticks=false,
        ylabel={#5},%
        ylabel style={font=\gurobiTimeFontSize,align=center},%
        ymajorgrids,%
        axis y line*=left,  %
        mark size=0pt,%
      ]
      \addplot +[
        line width=\gurobiHistLineWidth,
        smooth,
        single overlap line,
        hist={
          density,
          bins={\gurobiTimeNBins},
          data min={0},
          data max={\gurobiTimeXMax},
          handler/.style={
            sharp plot,
            smooth,
          }
        }
      ] table [y index=0] \thedata;
      \addplot +[
        line width=\gurobiHistLineWidth,
        multi overlap line,
        hist={
          density,
          bins={\gurobiTimeNBins},
          data min={0},
          data max={\gurobiTimeXMax},
          handler/.style={
            smooth,
          }%
        }%
      ] table [y index=1] \thedata;
    \end{axis}
  \end{tikzpicture}
}%
\newcommand{\gurobiSubfigWidth}{0.48\textwidth}%
\newcommand{\gurobiTimePlotHeight}{1.00in}%
\newcommand{\gurobiTimePlotWidth}{2.80in}%
\newcommand{\gurobiTimeHorizontalSpacer}{\hfill}%
\newcommand{\gurobiTimeDatasetSpacer}{\vspace{15pt}}%
\newcommand{\gurobiTimeXMax}{1200}%
\newcommand{\gurobiTimeXTickDist}{300}%
\newcommand{\gurobiTimeNBins}{30}%
\newcommand{\gurobiTimeXLabel}[1]{ILP Exec. Time (${\datasetDiv = #1}$)}%
\newcommand{\gurobiTimeYLabel}[1]{\textbf{#1}}%
\newcommand{\gurobiHistLineWidth}{1.0pt}%
\newcommand{\gurobiTimePlot}[6]{%
  \begin{subfigure}[b]{\gurobiSubfigWidth}%
    \centering%
    \gurobiHistPlot{#1/gurobi/#1_gurobi_#3.csv}  %
                   {\gurobiTimeXLabel{\num{#3}}} %
                   {#4}                          %
                   {#5}                          %
                   {#6}                          %
  \end{subfigure}%
}%

\newcommand{\boundLegendFontSize}{\small}
\begin{tikzpicture}
  \begin{axis}[%
      hide axis,  %
      no marks,
      xmin=0,  %
      xmax=1,
      ymin=0,
      ymax=1,
      scale only axis,width=1mm, %
      legend cell align={left},              %
      legend style={font=\boundLegendFontSize},
      legend columns=2,
      legend style={/tikz/every even column/.append style={column sep=0.5cm}}
    ]

    \addplot [
      single overlap line,
      line width=0.9pt,
    ] coordinates {(0,0)};
    \addlegendentry{\overlapSingleMethod{}}%

    \addplot [
      multi overlap line,
      line width=0.9pt,
    ] coordinates {(0,0)};
    \addlegendentry{\overlapMultiMethod{}}%
  \end{axis}
\end{tikzpicture}

\gurobiTimeDatasetSpacer{}%
\gurobiTimeHorizontalSpacer{}%
\gurobiTimePlot{ames}                   %
               {Ames Housing}           %
               {00125}                  %
               {0.02500}                %
               {0.00500}                %
               {\gurobiTimeYLabel{Ames Housing}}  %
\gurobiTimeHorizontalSpacer{}%
\gurobiTimePlot{ames}                   %
               {Ames Housing}           %
               {00251}                  %
               {0.02500}                %
               {0.00500}                %
               {}                       %
\gurobiTimeHorizontalSpacer{}%

\gurobiTimeDatasetSpacer{}%
\gurobiTimeHorizontalSpacer{}%
\gurobiTimePlot{diamonds}               %
               {Diamonds}               %
               {00501}                  %
               {0.02000}                %
               {0.00400}                %
               {\gurobiTimeYLabel{Diamonds}}  %
\gurobiTimeHorizontalSpacer{}%
\gurobiTimePlot{diamonds}               %
               {Diamonds}               %
               {01001}                  %
               {0.02000}                %
               {0.00400}                %
               {}                       %
\gurobiTimeHorizontalSpacer{}%

\gurobiTimeDatasetSpacer{}%
\gurobiTimeHorizontalSpacer{}%
\gurobiTimePlot{shifts-weather-medium}  %
               {Weather}                %
               {01501}                  %
               {0.02000}                %
               {0.00400}                %
               {\gurobiTimeYLabel{Weather}}  %
\gurobiTimeHorizontalSpacer{}%
\gurobiTimePlot{shifts-weather-medium}  %
               {Weather}                %
               {03001}                  %
               {0.02000}                %
               {0.00400}                %
               {}                       %
\gurobiTimeHorizontalSpacer{}%

  \caption{%
    \textbf{Gurobi ILP Execution Time}:
    Histogram of the Gurobi ILP execution time (in seconds) for \overlapSingleMethod{} and \overlapMultiMethod{} for three datasets from Sec.~\ref{sec:ExpRes}.
    ILP execution times for each dataset's two largest $\datasetDiv$ values are plotted, with each of dataset plots sharing the y\=/axis scales.
    The ILP was implemented in Gurobi with a fixed time limit of 1200 seconds.
  }%
  \label{fig:App:MoreExps:GurobiTime}
\end{figure}

\vfill
 
\newpage
\clearpage
\FloatBarrier
\subsection{\revised{Effect of Median as the Model Decision Function}}%
\label{sec:App:MoreExps:MedianDecisionFunc}

\revised{%
This section evaluates how the choice of decision function affects the accuracy of standard (i.e.,~non-robust) regression.
All experiments above exclusively used median as the decision function.
Below, we compare median's performance to the more traditional decision function, mean.
To be clear, this section's experiments exclusively consider non-robust model predictions.
This mini ablation study simply examines the decision function's effect on baseline (uncertified) prediction accuracy.

Table~\ref{tab:App:MoreExps:MedianDecisionFunc} compares the performance of mean and median on Sec.~\ref{sec:ExpRes}'s five regression datasets.
These experiments used the same threshold~($\threshold$) values as Sec.~\ref{sec:ExpRes} (see Tab.~\ref{tab:ExpRes:DatasetInfo}).
Tab.~\ref{tab:App:MoreExps:MedianDecisionFunc} includes results for two different model architectures: \knn{}-based regression and a regression ensemble that follows $\disjointSingleMethod$'s base architecture.
For each dataset, the ensemble regressor is evaluated on the same three $\datasetDiv$~values used in Sec.~\ref{sec:ExpRes}.

In summary, median and mean decision functions have comparable performance.
Median had better average accuracy than mean on 11 of 20 evaluation setups.
Median and mean had equivalent average performance on four setups.
Mean outperformed median on the remaining five setups.

\knn{} regression was the method most affected by the choice of decision function.
In particular for the Weather and Life datasets, mean outperformed median by 8.6 and 2.3 percentage points, respectively.
Only one other evaluation setup (Austin with ${\datasetDiv = 701}$) saw a similarly large performance difference (3.8pp).
}%

\vfill
\begin{table}[h]
  \centering
  \caption{%
    \revised{%
      \textbf{Effect of Median vs.\ Mean as the Decision Function}:
      Comparison of the model accuracy mean and standard deviation for two different decision functions.
      For each of Sec.~\ref{sec:ExpRes}'s five regression datasets, we evaluate the decision function's effect on both \knn{} and ensemble (\disjointSingleMethod{}) learners.
      Each dataset's \disjointSingleMethod{} non-robust accuracy is reported for three different ${\datasetDiv}$~values in line with Sec.~\ref{sec:ExpRes}'s evaluation.
      For each experimental setup, the best performing method (in terms of average accuracy) is shown in bold.
      In summary, median and mean decision functions have comparable baseline (uncertified) prediction accuracy.
      However, median is critical to achieve certified robustness guarantees.%
    }%
  }\label{tab:App:MoreExps:MedianDecisionFunc}
  \revised{%

\newcommand{\MedBaseHeader}[2][2]{\multirow{#1}{*}{#2}}

\newcommand{\DatasetHeader}{\MedBaseHeader{Dataset}}
\newcommand{\ModelHeader}{\MedBaseHeader{Model}}
\newcommand{\QHeader}{\MedBaseHeader{$q$}}

\newcommand{\Dataset}[1]{\multirow{4}{*}{#1}}
\newcommand{\Ensemble}{\multirow{3}{*}{\disjointSingleMethod}}

\newcommand{\PVal}[2]{${#1 \pm #2}$}
\newcommand{\PValB}[2]{\textBF{#1} $\pm$ \textBF{#2}}

\renewcommand{\arraystretch}{1.2}
\setlength{\dashlinedash}{0.4pt}
\setlength{\dashlinegap}{1.5pt}
\setlength{\arrayrulewidth}{0.3pt}
\setlength{\tabcolsep}{8.5pt}

\newcommand{\DatasetSep}{\midrule}
\newcommand{\QSep}{\cdashline{3-5}}
\newcommand{\ModelSep}{\cmidrule{2-5}}

\begin{tabular}{llrrr}
  \toprule
  \DatasetHeader{}
    & \ModelHeader{}
    & \QHeader{}
    & \multicolumn{2}{c}{Decision Function}
  \\\cmidrule(lr){4-5}
    &
    &
    & \multicolumn{1}{c}{Median}
    & \multicolumn{1}{c}{Mean}
  \\\midrule
\Dataset{Ames}       & \knn{}     & 1     & \PValB{53.4}{5.4}  & \PValB{53.4}{5.4}   \\\ModelSep
                     & \Ensemble  & 25    & \PVal{83.8}{2.7}   & \PValB{84.6}{2.6}   \\\QSep
                     &            & 125   & \PValB{71.2}{4.0}  & \PVal{71.0}{3.9}   \\\QSep
                     &            & 251   & \PValB{63.4}{3.5}  & \PVal{62.7}{2.2}   \\\DatasetSep
\Dataset{Austin}     & \knn{}     & 1     & \PValB{32.9}{3.8}  & \PValB{32.9}{3.8}   \\\ModelSep
                     & \Ensemble  & 51    & \PVal{61.4}{4.9}   & \PValB{61.8}{4.5}   \\\QSep
                     &            & 301   & \PVal{51.6}{4.0}   & \PValB{51.8}{3.9}   \\\QSep
                     &            & 701   & \PValB{44.6}{4.8}  & \PVal{40.8}{5.5}   \\\DatasetSep
\Dataset{Diamonds}   & \knn{}     & 1     & \PValB{16.4}{2.1}  & \PValB{16.4}{2.1}   \\\ModelSep
                     & \Ensemble  & 151   & \PValB{74.9}{3.8}  & \PVal{73.6}{4.4}   \\\QSep
                     &            & 501   & \PValB{77.7}{4.4}  & \PVal{76.6}{5.4}   \\\QSep
                     &            & 1001  & \PValB{75.2}{4.1}  & \PVal{73.2}{4.4}   \\\DatasetSep
\Dataset{Weather}    & \knn{}     & 1     & \PVal{24.6}{4.5}   & \PValB{33.2}{5.4}   \\\ModelSep
                     & \Ensemble  & 51    & \PValB{86.3}{3.2}  & \PVal{86.0}{3.2}   \\\QSep
                     &            & 1501  & \PValB{85.2}{3.9}  & \PVal{85.0}{3.8}   \\\QSep
                     &            & 3001  & \PValB{86.7}{2.7}  & \PValB{86.7}{2.7}   \\\DatasetSep
\Dataset{Life}       & \knn{}     & 1     & \PVal{35.7}{3.2}   & \PValB{38.0}{3.5}   \\\ModelSep
                     & \Ensemble  & 25    & \PValB{80.1}{4.0}  & \PVal{79.9}{3.7}   \\\QSep
                     &            & 101   & \PValB{72.0}{3.2}  & \PVal{71.5}{3.7}   \\\QSep
                     &            & 201   & \PValB{63.3}{4.1}  & \PVal{61.8}{3.3}   \\\bottomrule
\end{tabular}
  }%
\end{table}
\vfill
   \stopcontents  %

\end{document}